\newcommand\DoToC{%
  \setcounter{tocdepth}{3} %
  \startcontents
  \printcontents{}{1}{\textbf{Contents}\vskip3pt\hrule\vskip5pt}
  \vskip3pt\hrule\vskip5pt
}
\newtheorem{theorem}{Theorem}
\newtheorem{corollary}{Corollary}
\theoremstyle{definition}
\newtheorem{proposition}{Proposition}
\newtheorem{definition}{Definition}
\newtheorem{example}{Example}
\newtheorem{heuristic}{Strategy}
\newtheorem{remark}{Remark}
\newcommand{\I}{\mathcal{I}}      %
\newcommand{\G}{\mathcal{G}}
\renewcommand{\&}[1]{\mathcal{#1}}
\newcommand{\Eps}{\mathcal{E}}
\newcommand{\mathcolorbox}[2]{\colorbox{#1}{$\displaystyle #2$}}
\newif\ifcomments
\newcommand{\namedcomment}[3]{%
  \ifcomments
    \textcolor{#1}{\textbf{[#2:} #3\textbf{]}}%
  \fi
}
\newcommand{\adiba}[1]{\namedcomment{blue}{Adiba}{#1}}
\newcommand{\elias}[1]{\namedcomment{red!70!black}{Elias}{#1}}
\title{Less Greedy Equivalence Search}
\author{%
  Adiba Ejaz \quad  
    \quad Elias Bareinboim \\
  Causal Artificial Intelligence Lab\\
  Columbia University\\
  \texttt{\{adiba.ejaz,eb\}@cs.columbia.edu} \\
}
\begin{document}

\maketitle

\begin{abstract} 

Greedy Equivalence Search (GES) is a classic score-based algorithm for causal discovery from observational data.
In the sample limit, it recovers the Markov equivalence class of graphs that describe the data.
Still, it faces two challenges in practice: computational cost and finite-sample accuracy.
In this paper, we develop Less Greedy Equivalence Search (LGES), a variant of GES that retains its theoretical guarantees while partially addressing these limitations.
LGES modifies the greedy step; rather than always applying the highest-scoring insertion, it avoids edge insertions between variables for which the score implies some conditional independence.
This more targeted search yields up to a \(10\)-fold speed-up and a substantial reduction in structural error relative to GES.
Moreover, LGES can guide the search using prior knowledge, and can correct this knowledge when contradicted by data.
Finally, LGES can use interventional data to refine the learned observational equivalence class.
We prove that LGES recovers the true equivalence class in the sample limit, even with misspecified knowledge.
Experiments demonstrate that LGES outperforms GES and other baselines in speed, accuracy, and robustness to misspecified knowledge.
Our code is available at \href{https://github.com/CausalAILab/lges}{https://github.com/CausalAILab/lges}.

\end{abstract}

\section{Introduction}

\adiba{Elias, please see your comment command below. In the preamble, you can change \textbackslash commentstrue to  \textbackslash commentsfalse to switch them off.}
\elias{test}

Causal discovery, the task of learning causal structure from data, is a core problem in the field of causality \cite{spirtes:etal00}.
The causal structure may be an end in itself to the scientist, or a prerequisite for downstream tasks such as inference, decision-making, and generalization \cite{bareinboim:pea:16-r450, pearl:09}.
Causal discovery algorithms have been used in a range of disciplines that span biology, medicine, climate science, and neuroscience, among others \cite{dubois:20, ramsey:etal17, runge:18, sachs:etal05}.

A hallmark algorithm in the field is Greedy Equivalence Search (GES) \cite{chickering:2002, meek:97}, which takes as input observational data and finds a \emph{Markov equivalence class} (MEC) of causal graphs that describe the data.
In general, the true graph is not uniquely identifiable from observational data, and the MEC is the most informative structure that can be learned.
Under standard assumptions in causal discovery, GES is guaranteed to recover the true MEC in the sample limit.
In contrast, many causal discovery algorithms\textemdash including prominent examples such as max-min hill-climbing \cite{tsamardinos:etal06} and NoTears \cite{zheng:etal18}\textemdash lack such large-sample guarantees. 
Many variants of GES have been developed, including faster, parallelized implementations \cite{ramsey:etal17}, restricted search over bounded in-degree graphs \cite{chickering:2015}, and Greedy Interventional Equivalence Search (GIES) \cite{hauser:2012}.
The last of these can exploit interventional data, though is not asymptotically correct \cite{wang:2017}.

Despite its attractive features, the GES family faces challenges shared across most causal discovery algorithms. 
For instance, the problem of causal discovery is NP-hard \cite{chickering:2012}, and GES commonly struggles to scale in high-dimensional settings.
Moreover, in finite-sample regimes, GES often fails to recover the true MEC.
In other words, applying GES in practice is challenging due to both computational complexity (scaling) and sample complexity (accuracy) issues.
We refer readers to \cite{tsamardinos:etal06} for an extensive empirical study of GES performance.

At a high level, GES searches over the space of MECs by inserting and deleting edges to maximize a score reflecting data fit. 
At each state, it evaluates a set of neighbors\textemdash possibly exponentially many\textemdash and moves to the \emph{highest-scoring} neighbor that scores more than the current MEC.
It continues the search greedily until no higher-scoring neighbors are found. 
In the sample limit, this strategy is guaranteed to find the global optimum of the score: the true MEC.

\adiba{Q1. Yes, this paragraph helps motivate the name `less greedy'. I tried to go with option 1, which is to better explain what is happening. I moved the description of how GES works from paragraph \#2 to right before this. Does it make more sense this way?}

In this paper, we question the basic assumption of the GES family that the greedy choice of the highest-scoring neighbor is the best one. 
 We first introduce \textit{Generalized GES} (Alg.~\ref{alg:gges}), which allows moving to any score-increasing neighbor of the current MEC, not necessarily the highest-scoring one. This relaxed strategy still finds the global optimum of the score in the sample limit (Thm.~\ref{thm:gges}). More importantly, it opens the door to more strategic neighbor selection.

While it may seem that choosing the highest-scoring neighbor would yield the best performance in practice, surprisingly,
we show that this is not the case; a careful and less greedy choice improves both accuracy and runtime.
Specializing GGES, we develop the algorithm \textit{Less Greedy Equivalence Search} (LGES) (Alg.~\ref{alg:lges}), advancing on GES and its relatives in the following ways:

\begin{enumerate}
    \item \textbf{Faster, more accurate structure learning.} In Sec.~\ref{sec:prune}, we introduce two novel operator selection strategies, \textsc{ConservativeInsert} and \textsc{SafeInsert}, which LGES exploits for neighbor selection.
    Empirically, these procedures yield up to a \(10\)-fold reduction in runtime and \(2\)-fold reduction in structural error relative to GES (Experiment~\ref{sec:experiments:obs}) and other baselines.  LGES with \textsc{SafeInsert} asymptotically recovers the true MEC (Prop.~\ref{prop:safeinsert:corr}, Cor.~\ref{cor:lges}).
    \item \textbf{Robustness to misspecified prior knowledge.} In Sec.~\ref{sec:bgk}, we propose a method whereby LGES can guide neighbor selection based on prior knowledge (Alg.~\ref{alg:prins}) while remaining asymptotically correct even if the knowledge is misspecified. Accurate knowledge improves LGES' accuracy and runtime; inaccurate knowledge harms LGES significantly less than it does GES initialized with the knowledge (Experiment~\ref{sec:experiments:bgk}). 
    \item \textbf{Refining the learned MEC with interventional data.} In Sec.~\ref{sec:interventional}, we develop  \(\I\)-\textsc{Orient} (Alg.~\ref{alg:iorient}, Thm.~\ref{thm:iorient}), a score-based procedure that LGES (or any structure learning algorithm) can use to refine an observational MEC with interventional data. To our knowledge, this is the first asymptotically correct score-based procedure for learning from interventional data that can scale to graphs with more than a hundred nodes. LGES with \(\I\)-\textsc{Orient} is 10x faster than GIES \cite{hauser:2012} while maintaining competitive accuracy (Experiment ~\ref{sec:experiments:int}). 
\end{enumerate}

Proofs for all results are provided in Appendix~\ref{adx:sec:proofs}. Additional experiments with synthetic data and real-world protein signalling data \cite{sachs:etal05} are provided in Appendix~\ref{adx:sec:experiments}.

\section{Background}\label{sec:prelim}

\paragraph{Notation.}
Capital letters denote variables $(V)$, small letters denote their values $(v)$, and bold letters denote sets of variables $(\*{V})$ and their values $(\*{v})$. 
 \(P(\*{v})\) denotes a probability distribution over a set of variables \(\*{V}\).
 For disjoint sets of variables \(\*X,\*Y, \*Z\),  \(\*X \indep \*Y \mid \*Z\) denotes that \(\*X\) and \(\*Y\) are conditionally independent given \(\*Z\) and \(\*X \perp_d \*Y \mid \*Z\) denotes that \(\*X\) and \(\*Y\) are \emph{d-separated} given \(\*Z\) in the graph in context. \ \ \ \ \ \ \ \ \ \ \ \ \ \ \ \ \ \ \ \ \ \ \ \ \ \ \ \ \ \ \ 
\begin{wrapfigure}{r}{0.25\textwidth}
    \centering
    \vspace{-15pt}
    \includegraphics[width=0.18\textwidth]{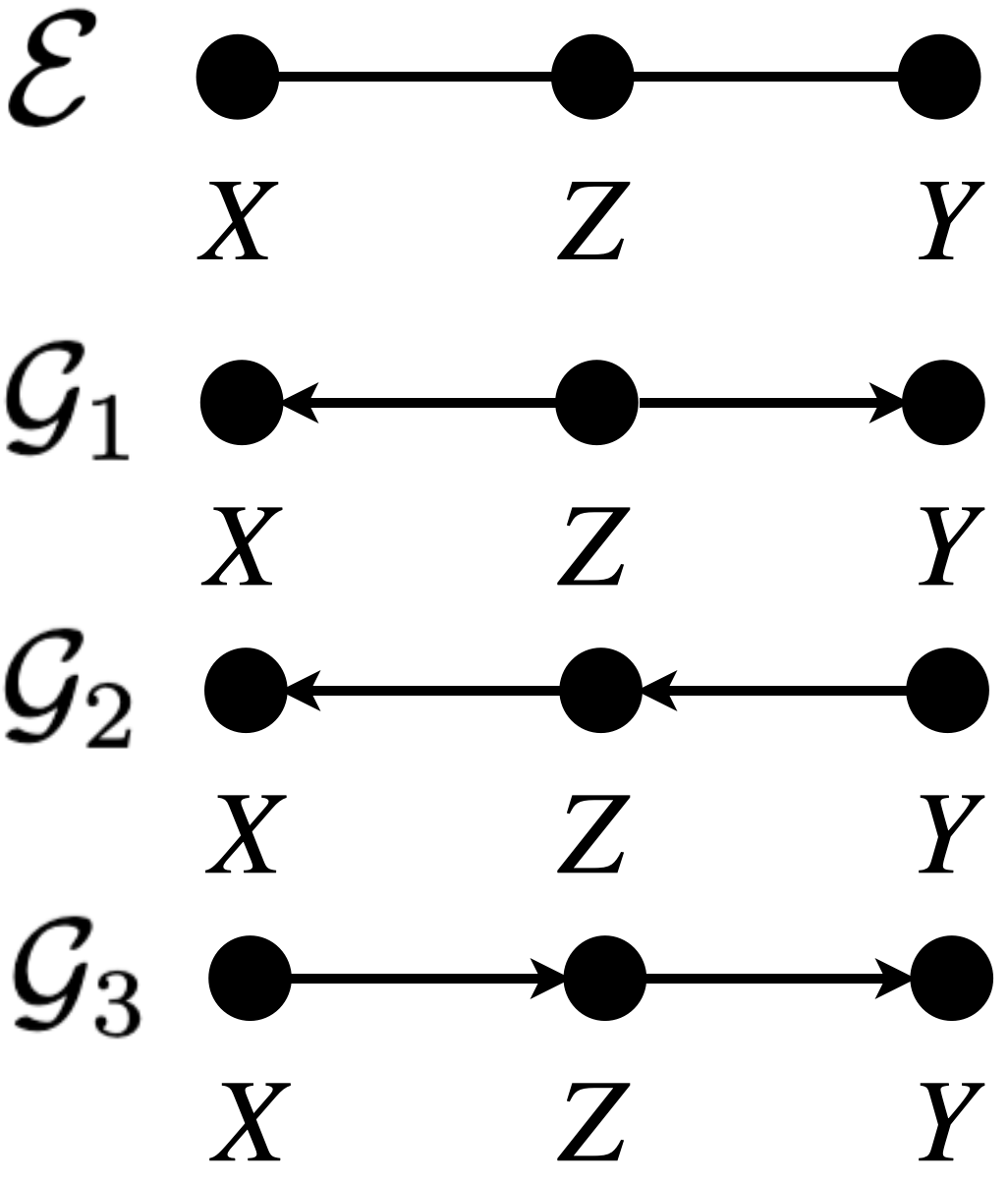}
    \caption{A CPDAG \(\Eps\) and the three DAGs in the MEC it represents, encoding \(X \perp_d Y \mid Z\).}
    \label{fig:mec}
    \vspace{-25pt}
\end{wrapfigure}

\paragraph{Causal graphs \cite{bareinboim:etal20, pearl:09}.} A causal graph over variables \(\*V\) is a directed acyclic graph (DAG) with an edge \(X \to Y\) denoting that \(X\) is a possible cause of \(Y\).  
The parents of a variable \(X\) in a graph \(\G\), denoted \(\*{Pa}_X^\G\),  are those variables with a directed edge into \(X\).
The non-descendants of a variable \(X\) in a graph \(\G\), denoted \(\*{Nd}_X^\G\),  are those variables to which there is no directed path from \(X\) (excluding \(X\) itself).
The superscript will be omitted when clear from context.
A given distribution \(P(\*v)\) is said to be \emph{Markov} with respect to a DAG \(\G\) if for all disjoint sets \(\*X,\*Y, \*Z \subseteq \*V\), if \(\*X \perp_d \*Y \mid \*Z\) in \(\G\), then \(\*X \indep \*Y \mid \*Z\) in \(P(\*v)\).
If the converse is also true,  \(P(\*v)\) is said to be \emph{faithful} to \(\G\).
In this work, like GES \cite{chickering:2002}, we assume that the  system of interest is \emph{Markovian}, i.e. it contains no unobserved confounders, and that there exists a DAG \(\G\) with respect to which the given \(P(\*v)\) is both Markov and faithful.

\paragraph{Markov equivalence classes \cite{pearl:88, spirtes:etal00}.} Two causal DAGs \(\G, \&{H}\) are said to be Markov equivalent if they encode exactly the same \(d\)-separations. The Markov equivalence class (MEC) of a DAG is the set of all graphs that are Markov equivalent to it. A given \(P(\*v)\) may be Markov and faithful to more than one DAG. Hence, the target of causal discovery from observational data is the MEC of DAGs with respect to which \(P(\*v)\) is Markov and faithful. An MEC \(\&{M}\) is represented by a unique completed partially directed graph (CPDAG). 
A CPDAG \(\Eps\) for \(\&{M}\) has an undirected edge \(X-Y\) if \(\&{M}\) contains two DAGs \(\G_1, \G_2\) with  \(X \to Y\) in \(\G_1\) and \(Y \to X\) in \(\G_2\). \(\Eps\) has a directed edge \(X \to Y\) if \(X \to Y\) is in every DAG in \(\&{M}\).
We frequently refer to an MEC by its representative CPDAG.
The adjacencies (neighbours) of a variable \(X\) in a CPDAG \(\Eps\), denoted \(\*{Adj_X^\G}\) (\(\*{Ne_X^\G}\)), comprise those variables connected by any edge (an undirected edge) to \(X\).

\paragraph{Greedy Equivalence Search \cite{chickering:2002, meek:97}.}  Greedy Equivalence Search (GES) is a score-based algorithm for learning MECs from observational data. 
It searches for the true MEC by maximizing a scoring criterion given \(m\) samples of data \(\*D \sim P(\*v)\).
For example, a popular choice of scoring criterion is the \emph{Bayesian information criterion} (BIC) \cite{schwarz:78}.

GES assumes that the given scoring criterion is  decomposable,  consistent, and score-equivalent, so that the score of an MEC is the score of any DAG in that MEC (Defs.~\ref{adx:def:decompscore}, ~\ref{adx:def:glocon}, ~\ref{adx:def:scoreequiv}). BIC satisfies each of these conditions for distributions that are Markov and faithful to some DAG and are curved exponential families, for e.g., linear-Gaussian or multinomial models \cite{chickering:2002, geiger:etal01, haughton:88}. Moreover, decomposability and consistency imply local consistency (\cite[Lemma 7]{chickering:2002}), the key property needed for the correctness of GES.

\begin{definition}[Locally consistent scoring criterion {\cite[Def. 6]{chickering:2002}}]\label{def:locon}
Let \(\*D\) be a dataset consisting of i.i.d. samples from some distribution
\(P(\*v)\). Let \(\G\) be any DAG, and let \(\G'\) be the DAG that results from adding the edge \(X \to Y\) to \(\G\). A scoring criterion \(S\) is said to be \emph{locally consistent} if, as the number of samples goes to infinity, the following two properties hold: 
\begin{enumerate}
    \item If \(X \nindep Y \mid \*{Pa}_Y^\G\) in \(P(\*v)\) then  \(S(\G, \*D) < S(\G', \*D)\).
    \item  If \(X \indep Y \mid \*{Pa}_Y^\G\) in \(P(\*v)\) then  \(S(\G, \*D) > S(\G', \*D)\).
\end{enumerate}
\end{definition}

\begin{example}
Consider a distribution \(P(\*v)\) whose true MEC is \(\Eps\) and true DAG is \(\G_2 \in \Eps\) as in Fig.~\ref{fig:mec}. 
Consider \(\G_1 \in \Eps\) (Fig.~\ref{fig:mec}), and let \(\G^+_1 = \G_1 \cup \{X \to Y\}\) and \(\G^-_1 = \G_1 \setminus \{Z \to Y\}\). Since \(Y \indep X \mid \*{Pa}_Y^{\G_1}\) in \(P(\*v)\), where \(\*{Pa}_Y^{\G_1} = \{Z\}\), \(\G^+_1\) has a lower score than \(\G_1\). Since \(Y \nindep Z \mid \*{Pa}_Y^{\G^-_1}\) in \(P(\*v)\), where \(\*{Pa}_Y^{\G^-_1} = \emptyset\), \(\G^-_1\) has a lower score than \(\G_1\).  \qed
\end{example}

Given a scoring criterion satisfying the above conditions and data \(\*D \sim P(\*v)\) where \(P(\*v)\) is Markov and faithful to some DAG, GES recovers the true MEC in the sample limit \cite[Lemma 10]{chickering:2002}. 
The PC algorithm \cite{spirtes:etal00}, a constraint-based method, has similar asymptotic correctness guarantees, but uses conditional independence (CI) tests instead of a score.
PC starts with a fully connected graph and removes edges using CI tests.
In contrast, GES starts with a fully disconnected graph and proceeds in two phases.
In the forward phase, at each state, GES finds the highest-scoring \textsc{Insert} operator that results in a score increase, applies it, and repeats until  no score-increasing \textsc{Insert} operator exists. At this point, it has found an MEC \(\Eps\) with respect to which \(P(\*v)\) is Markov.

\begin{definition}[\textsc{Insert} operator, {\cite[Def. 12] {chickering:2002}}] \label{def:insert}
Given a CPDAG \(\Eps\), non-adjacent nodes \(X,Y\) in \(\Eps\), and some \(\*T \subseteq \*{Ne}^{\Eps}_Y \setminus \*{Adj}_X^{\Eps}\), the \textsc{Insert}\((X,Y,\*T)\) operator modifies \(\Eps\) by inserting the edge \(X \to Y\) and directing the previously undirected edges \(T-Y\) for \(T \in \*T\) as \(T \to Y\).
\end{definition}

Intuitively, an \textsc{Insert}\((X,Y,\*T)\) operator applied to an MEC \(\Eps\) corresponds to choosing a DAG \(\G \in \Eps\) (depending on \(X,Y\), and \(\*T\)), adding the edge \(X \to Y\) to \(\G\), and computing the MEC of the resulting DAG.
Though \(P(\*v)\) is Markov with respect to the MEC \(\Eps\) found in the forward phase, \(P(\*v)\) may not be faithful to \(\Eps\).
In the backward phase, GES starts the search with \(\Eps\), finds the highest-scoring \textsc{Delete}\((X,Y,\*H)\) operator (Def.~\ref{adx:def:delete}) that results in a score increase, applies it, and repeats until no score-increasing \textsc{Delete} operator exists. At this point, it has found an MEC with respect to which \(P(\*v)\) is both Markov and faithful. 
An optional turning phase is known to improve performance in practice, but is redundant for asymptotic correctness \cite{hauser:2012}.

\section{Less Greedy Equivalence Search}\label{sec:lges}

\subsection{Generalizing GES} \label{sec:gges}

To lay the groundwork for our search strategy, we first introduce Generalized GES (GGES) (Alg.~\ref{alg:gges}), which generalizes GES in three ways. Firstly, GGES allows the search to be initialized from an arbitrary MEC \(\Eps_0\), rather than the empty graph. Secondly, GGES is not restricted to the forward-backward-turning phase structure of GES; it can accommodate any order of operators (e.g., deletions before insertions \cite{nazaret:24}) as specified by the user. Finally, GGES allows the application of  \emph{any} valid score-increasing operator, instead of just the highest-scoring one.

At each state \(\Eps\), GGES calls abstract subroutine \textsc{GetOperator}, which either returns a valid score-increasing operator of the type specified (insertion, deletion, reversal, or either), if one exists, or indicates that there is no such operator. This proceeds until no score-improving operators are found.

\begin{theorem}[Correctness of GGES]\label{thm:gges}
Let \(\Eps\) denote the Markov equivalence class that results from GGES (Alg.~\ref{alg:gges}) initialised from an arbitrary MEC \(\Eps_0\) and let \(P(\*v)\) denote the distribution from which the data \(\*D\) was generated.
Then, as the number of samples goes to infinity, \(\Eps\) is the Markov equivalence class underlying  \(P(\*v)\).
\end{theorem}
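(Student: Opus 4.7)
The plan is to adapt the correctness proof of GES in \cite{chickering:2002} to GGES. The two generalizations of GGES---initializing from an arbitrary MEC \(\Eps_0\) and accepting \emph{any} score-increasing operator rather than the highest-scoring one---leave the crux of that argument untouched, because it depends only on the \emph{termination condition} of each phase and on the local consistency of the score (Def.~\ref{def:locon}).

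First, I would establish termination of both phases. Each \textsc{Insert}\((X,Y,\*T)\) adds exactly one new adjacency \(X\)--\(Y\) to the CPDAG, and each \textsc{Delete}\((X,Y,\*H)\) removes exactly one. Since the number of possible adjacencies is bounded by \(\binom{|\*V|}{2}\), both phases terminate in finitely many steps, regardless of \(\Eps_0\) or which score-increasing operator the abstract subroutine \textsc{GetInsert}/\textsc{GetDelete} happens to return.

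Next, I would invoke two state-based lemmas from \cite{chickering:2002}. Their Lemma~8 shows that if a CPDAG \(\Eps\) is not an independence map of \(P(\*v)\), then some \textsc{Insert}\((X,Y,\*T)\) is score-increasing in the sample limit (this is where local consistency is used). Contrapositively, the MEC \(\Eps^{\mathrm{fw}}\) at which the forward phase terminates must be an IMAP of \(P(\*v)\), i.e., \(P(\*v)\) is Markov with respect to \(\Eps^{\mathrm{fw}}\). Their Lemma~9 shows that if \(\Eps\) is an IMAP of \(P(\*v)\) but not the true MEC, some \textsc{Delete}\((X,Y,\*H)\) is score-increasing. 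Contrapositively, the MEC at which the backward phase terminates is simultaneously Markov and faithful to \(P(\*v)\), and hence is the true MEC.

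The main subtlety---and the only real point requiring verification beyond citing Chickering---is that Lemmas~8 and~9 are statements about the \emph{current} CPDAG state rather than about the path by which it was reached. A re-reading of those proofs confirms this: they construct a score-increasing operator from the CPDAG at hand together with local consistency, without invoking an induction on the greedy trajectory from the empty graph. Therefore, neither initializing GGES at an arbitrary \(\Eps_0\) nor deviating from highest-scoring moves at intermediate states affects the argument, and asymptotic correctness of GGES follows.
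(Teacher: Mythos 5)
Your proposal takes essentially the same route as the paper: both arguments reduce the correctness of GGES to the observation that the key lemmas of \cite{chickering:2002} are \emph{state-based} --- they depend only on the termination condition of each phase and on local consistency, not on the trajectory or the initialization --- and the paper likewise explicitly patterns its proof on Chickering's Lemmas 9 and 10 (the paper re-derives the forward-phase claim via the composition axiom and the equivalence of the local and global Markov properties rather than citing it, but the content is identical). Your explicit termination argument is a sensible addition that the paper leaves implicit. The one step you elide is in the backward phase: the contrapositive of ``if \(\Eps\) is an IMAP of \(P(\mathbf{v})\) but not the true MEC, then some score-increasing \textsc{Delete} exists'' only yields that the terminal state is \emph{either} not an IMAP \emph{or} the true MEC. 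To rule out the first disjunct you must additionally show that the IMAP property established at the end of the forward phase is preserved by every deletion actually applied; the paper supplies this via the consistency of the score (a \textsc{Delete} taking an IMAP to a non-IMAP strictly decreases the score, hence is never applied). With that one-line addition your argument is complete.
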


In the next section, we illustrate how \textsc{GetOperator} can be implemented in a way that yields significant improvements in accuracy and runtime relative to GES. 

\subsection{An improved insertion strategy}\label{sec:prune}

In practice, the output of GES is known to include adjacencies between many variables that are non-adjacent in the true MEC \cite{nazaret:24, tsamardinos:etal06}. 
Since these adjacencies are introduced by \textsc{Insert} operators, this motivates a more careful choice of  which \textsc{Insert} operator to apply. 
Our approach is grounded in the following observation.

\begin{proposition} \label{prop:condset}
Let \(\Eps\) denote an arbitrary CPDAG and let \(P(\*v)\) denote the distribution from which the data \(\*D\) was generated.
 Assume, as the number of samples goes to infinity, that there exists a valid score-decreasing \textsc{Insert}\((X,Y,\*T)\) operator for \(\Eps\). Then, there exists a DAG \(\G \in \Eps\) such that (1) \(Y \perp_d X \mid \*{Pa}^\G_Y\) and (2) \(Y \indep X \mid \*{Pa}^\G_Y\) in \(P(\*v)\).
\end{proposition}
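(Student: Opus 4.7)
The plan is to exhibit a specific DAG $\G \in \Eps$ consistent with the $\textsc{Insert}$ operator, and then verify both claims on this witness. First, I would invoke standard structural results from \cite{chickering:2002} to obtain a DAG $\G \in \Eps$ such that (i) $\*{Pa}_Y^{\G}$ equals the parent set of $Y$ induced by the $\textsc{Insert}(X,Y,\*T)$ operator---that is, the neighbors of $Y$ in $\Eps$ that are also adjacent to $X$, together with $\*T$---and (ii) $X$ is a non-descendant of $Y$ in $\G$. The two validity conditions on $\textsc{Insert}(X,Y,\*T)$ (the clique condition on the candidate parent set, and the blocking condition on every semi-directed path from $Y$ to $X$) are exactly what guarantee such a consistent DAG extension of $\Eps$ exists.

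Given this $\G$, claim (1) follows immediately from the local Markov property of DAGs: since $X$ is a non-descendant of $Y$ in $\G$, and $X \notin \*{Pa}_Y^{\G}$ because $X$ and $Y$ are non-adjacent in $\Eps$ (and hence in $\G$), we have $Y \perp_d X \mid \*{Pa}_Y^{\G}$. For claim (2), set $\G' = \G \cup \{X \to Y\}$; acyclicity of $\G'$ follows from (ii), and a standard argument shows $\G' \in \Eps'$, where $\Eps'$ is the CPDAG produced by applying $\textsc{Insert}(X,Y,\*T)$ to $\Eps$. Since the scoring criterion is decomposable and score-equivalent, in the sample limit
\[
S(\G', \*D) - S(\G, \*D) \;=\; S(\Eps', \*D) - S(\Eps, \*D) \;<\; 0,
\]
where the inequality uses the score-decreasing hypothesis on the operator. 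By local consistency (Def.~\ref{def:locon}), the condition $X \nindep Y \mid \*{Pa}_Y^{\G}$ in $P(\*v)$ would force the opposite inequality $S(\G,\*D) < S(\G',\*D)$; hence we must have $X \indep Y \mid \*{Pa}_Y^{\G}$ in $P(\*v)$, yielding (2).

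The substantive obstacle is Step~1: producing a DAG $\G \in \Eps$ with the prescribed parents of $Y$ and with $X$ a non-descendant of $Y$. This is the standard but nontrivial CPDAG-extension lemma that underlies the soundness of the $\textsc{Insert}$ operator in GES, and is where the $\textsc{Insert}$ validity conditions do the real work. Once such a witness $\G$ is secured, both conclusions fall out routinely from the local Markov property of DAGs and the local consistency of the score.
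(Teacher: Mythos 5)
Your proof is correct and follows essentially the same route as the paper's: both identify the score change of a valid \(\textsc{Insert}(X,Y,\*T)\) with the DAG-level change \(S(\G \cup \{X \to Y\}, \*D) - S(\G, \*D)\) for a suitable consistent extension \(\G \in \Eps\), obtain (2) by the contrapositive of local consistency, and obtain (1) from \(X\) being a non-descendant (and non-parent) of \(Y\) in \(\G\). The only nit is that the induced parent set of \(Y\) in the witness \(\G\) should also include the already-directed parents, i.e., it is \((\*{Ne}_Y^{\Eps} \cap \*{Adj}_X^{\Eps}) \cup \*T \cup \*{Pa}_Y^{\Eps}\) rather than just \((\*{Ne}_Y^{\Eps} \cap \*{Adj}_X^{\Eps}) \cup \*T\); this does not affect the argument.
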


Then, for variables \(X\) and \(Y\), even a single score-decreasing \(\textsc{Insert}(X,Y,\*T)\) implies that \(X\) and \(Y\) are non-adjacent in the true MEC.
However, this does not imply that all \(\textsc{Insert}(X,Y,*)\) are also score-decreasing.
GES may thus apply a different \(\textsc{Insert}(X,Y,\*T')\), introducing an adjacency not present in the true MEC.
The following example shows how such choices can lead GES to MECs that contain many excess adjacencies.

\begin{figure}[t]
    \centering
    \includegraphics[width=1\linewidth]{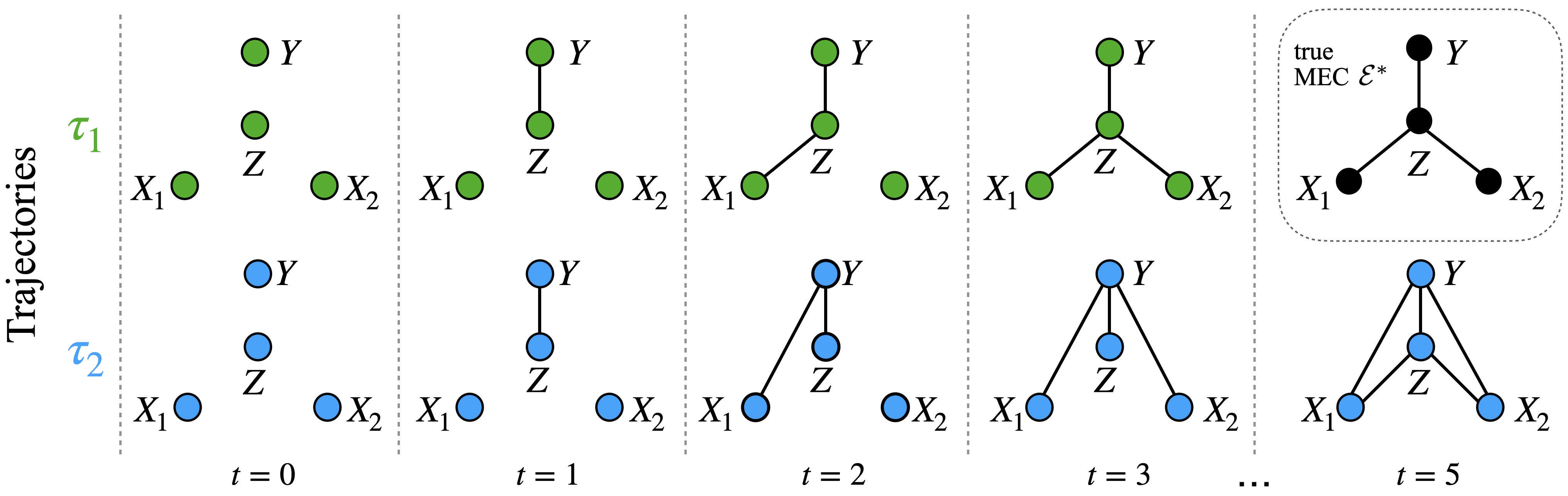}
    \caption{Possible trajectories, \(\tau_1\) and \(\tau_2\), that GES may take in the forward phase to obtain an MEC with respect to which a given distribution \(P(\*v)\) is Markov. The true MEC is \(\Eps^*\) (top right). In each trajectory, \(\Eps^{(t+1)}\) results from applying some \textsc{Insert} operator to \(\Eps^{(t)}\).}
    \label{fig:ges-traj}
\end{figure}

\begin{example}\label{ex:ges-traj}
Consider a distribution \(P(\*v)\) over \(\*V = \{X_1, X_2, Y, Z\}\) whose true MEC is given by \(\Eps^*\) in Fig.~\ref{fig:ges-traj} (top right). GES starts with the empty graph and successively applies the highest-scoring \textsc{Insert} operator that it finds. Trajectories \(\tau_1\) and \(\tau_2\) agree until time \(t = 1\). Let \(\Eps^{(1)}\) denote the CPDAG common to \(\tau_1\) and \(\tau_2\) at \(t=1\). At \(t = 1\), GES has many \textsc{Insert} operators it could apply to \(\Eps^{(1)}\). 
Recall that each \textsc{Insert}\((\alpha, \beta, \*T)\) applied to \(\Eps^{(1)}\) corresponds to choosing some DAG \(\G\) from \(\Eps^{(1)}\) and adding \(\alpha \to \beta\) to it. 
The DAG \(\G\) is chosen such that for edges \(\gamma - \beta\) in \(\Eps^{(1)}\) where \(\alpha\) and \(\gamma\) are non-adjacent, \(\G\) contains \(\gamma \to \beta\) if \(\gamma \in \*T\) and \(\beta \to \gamma\) otherwise.

\begin{enumerate}
    \item \(\alpha=X_1, \beta=Z, \*T=\emptyset\). This corresponds to choosing \(\G_1 \in \Eps^{(1)}\) (which already has \(Z \to Y\)) and adding \(X_1 \mathcolorbox{Goldenrod}{\to} Z\) to it (Fig.~\ref{fig:ges-traj-exp}, left). Since \(Z \nindep X_1 \mid \*{Pa}_Z^{\G_1}\), this edge addition increases the score of \(\G_1\) (by local consistency, Def.~\ref{def:locon}) and hence of \(\Eps^{(1)}\). 
    This operator is chosen in trajectory \(\tau_1\).
    \item \(\alpha=X_1, \beta=Y, \*T=\{Z\}\). This corresponds to choosing \(\G_1 \in \Eps^{(1)}\) (which already has \(Z \to Y\)) and adding \(X_1 \mathcolorbox{Goldenrod}{\to} Y\) to it (Fig.~\ref{fig:ges-traj-exp}, middle). Since \(Y \indep X_1 \mid \*{Pa}_Y^{\G_1}\), this edge addition decreases the score of \(\G_1\) and hence of \(\Eps^{(1)}\). 
    This operator is never chosen.
    \item \(\alpha=X_1, \beta=Y, \*T=\emptyset\). This corresponds to choosing \(\G_2 \in \Eps^{(1)}\) (which already has \(Y \to Z\)) and adding \(X_1 \mathcolorbox{Goldenrod}{\to} Y\) to it (Fig.~\ref{fig:ges-traj-exp}, right).  Since \(Y \nindep X_1 \mid \*{Pa}_Y^{\G_2}\), this edge addition increases the score of \(\G_1\) and hence of \(\Eps^{(1)}\).  This operator is chosen in trajectory \(\tau_2\).
\end{enumerate}

In the sample limit, it is unknown whether \(\G_A\) or \(\G_C\) would score higher. For an extended discussion, see Ex.~\ref{adx:ex:ges-traj}.
\footnote{We can also ask, which of these \textsc{Insert} operators scores the highest in practice? We generated 100 linear-Gaussian datasets of 100 samples each according to a fixed true DAG in \(\Eps^*\), following the set-up in Sec.~\ref{sec:experiments:obs}.
Then, we computed the scores of \(\G_A: \G_1 \cup \{X_1 \to Z\}\), \(\G_B: \G_1 \cup \{X_1 \to Y\}\), and  \(\G_C: \G_2 \cup \{X_1 \to Y\}\) on each dataset.
From the fact that \(\G_A\) is closer to the true MEC than \(\G_C\), it may seem that \(\G_A\) would always score higher.
However,  \(\G_A\) was the highest-scoring DAG \(96\%\) of the time, and  \(\G_C\) \(4\%\) of the time. 
As expected, \(\G_B\) is never the highest-scoring DAG.} 
\hfill 
\qed
\end{example}

\begin{wrapfigure}{r}{0.6\textwidth}
    \centering
    \vspace{-15pt}
    \includegraphics[width=0.58\textwidth]{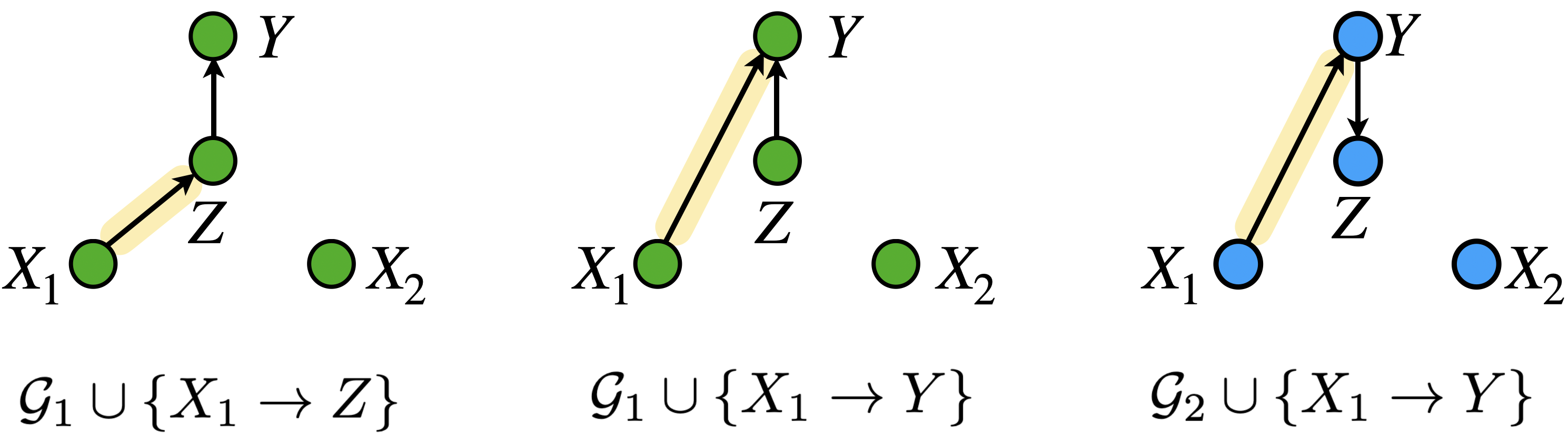}
    \caption{Illustration of some \textsc{Insert} operators that may be applied to the MEC \(\Eps^{(1)}\) at \(t=1\) in Fig.~\ref{fig:ges-traj}. These operators correspond to various edge additions to the DAGs \(\G_1, \G_2 \in \Eps^{(1)}\), where \(\G_1\) orients \(Z-Y\) as \(Z \to Y\) and \(\G_2\) orients \(Z-Y\) as \(Y \to Z\).}
    \vspace{-15pt}
    \label{fig:ges-traj-exp} 
\end{wrapfigure}

The above example shows that GES may insert an edge between non-adjacent variables even in simple settings with a small number of variables.
Such choices accumulate in higher-dimensional settings.
This motivates avoiding edge insertions for variable pairs \((X,Y)\) for which a score-decreasing \textsc{Insert} is observed. We hypothesize this has two benefits: (1) \emph{accuracy}: it avoids inserting excess adjacencies that the backward phase may fail to remove, and (2): \emph{efficiency}: it stops the enumeration of \((X,Y)\) insertions when a lower-scoring one is found; moreover, reducing excess adjacencies reduces the number of operators that need to be evaluated in subsequent states.

We now formalize two strategies for avoiding such insertions.

\begin{heuristic}[\textsc{ConservativeInsert}]\label{heur:consinsert}
At a given state with CPDAG \(\Eps\), for each non-adjacent pair \((X,Y)\), iterate over valid \textsc{Insert}\((X,Y,\*T\)). If any score-decreasing \(\*T\)  is found, stop, discard all \(\textsc{Insert}(X,Y,*)\) operators and continue to the next pair. Among all retained candidates, select the highest-scoring operator that results in a score increase, if any.
\end{heuristic}

\textsc{ConservativeInsert} avoids inserting edges between any variables \((X,Y)\) for which some conditional independence has been found, as evidenced by a score-decreasing \textsc{Insert} (Prop.~\ref{prop:condset}). 
While intuitive, it is unknown if this strategy is guaranteed to find a score-increasing \textsc{Insert} whenever one exists. 
To elaborate, the soundness of \textsc{ConservativeInsert} rests on the following premise: if \(P(\*v)\) is not Markov with respect to an MEC \(\Eps\), then there must exist variables \(X,Y\) such that for every \(\G \in \Eps\), \(X \nindep Y \mid \*{Pa}^{\G}_Y\).
Importantly, the choice of \(X,Y\) can not depend on \(\G\).
This is the main challenge in proving the soundness of  \textsc{ConservativeInsert}.
Still, we provide partial guarantees in Prop.~\ref{adx:prop:consinsert:skel},~\ref{adx:prop:consinsert:corr}, but leave the soundness of \textsc{ConservativeInsert} open.

Furthermore, we introduce \textsc{SafeInsert}, a relaxation of \textsc{ConservativeInsert} that is guaranteed to find a score-increasing \textsc{Insert} when one exists.
The soundness of \textsc{SafeInsert} only requires that if \(P(\*v)\) is not Markov with respect to ,\(\Eps\), then for every \(\G \in \Eps\), there must exist variables \(X,Y\) such that  \(X \nindep Y \mid \*{Pa}^{\G}_Y\).
This is unlike \textsc{ConservativeInsert}, where \(X,Y\) can not depend on \(\G\).

\begin{heuristic}[\textsc{SafeInsert}] \label{heur:safeinsert}
 At a given state with CPDAG \(\Eps\), pick an arbitrary DAG \(\G \in \Eps\). For each non-adjacent pair \((X,Y)\) in \(\G\), check if \(\G\) has a higher score than \(\G \cup \{X \to Y\}\). If so, discard all \(\textsc{Insert}(X,Y,*)\) operators and continue to the next pair. Among all retained candidates, select the highest-scoring operator that results in a score increase, if any.
\end{heuristic}

\begin{proposition}[Correctness of \textsc{SafeInsert}]\label{prop:safeinsert:corr}
Let \(\Eps\) denote a Markov equivalence class and let \(P(\*v)\) denote the distribution from which the data \(\*D\) was generated.  
Then, as the number of samples goes to infinity, \textsc{SafeInsert} returns a valid score-increasing \textsc{Insert} operator if and only if one exists.
\end{proposition}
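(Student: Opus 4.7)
The plan is to prove the biconditional by establishing soundness and completeness separately. Soundness (if SafeInsert returns an operator, it is a valid score-increasing \textsc{Insert}) is immediate from the algorithm's design: SafeInsert selects an operator only after verifying it satisfies Def.~\ref{def:insert} and yields a strictly positive score change.

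For completeness, I would assume that some valid score-increasing $\textsc{Insert}(X, Y, \*T)$ exists and show that SafeInsert returns a non-null result. By local consistency (Def.~\ref{def:locon}), this assumption is equivalent, in the sample limit, to $Y \nindep X \mid \*{Pa}_Y^\Eps \cup \*T \cup \*{Na}_{Y, X}$ in $P(\*v)$, where $\*{Na}_{Y, X}$ denotes the neighbors of $Y$ adjacent to $X$ in $\Eps$. The argument then reduces to showing the pair $(X, Y)$, or its reverse $(Y, X)$, survives the filter; once retained, the enumeration phase encounters the assumed score-increasing operator and returns it (or a higher-scoring sibling).

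The key step is the retention argument, which I would carry out by case analysis on the arbitrary $\G \in \Eps$ chosen by SafeInsert. By local consistency, the filter's comparison between $\G$ and $\G \cup \{X \to Y\}$ is, in the sample limit, a test of $Y \indep X \mid \*{Pa}_Y^\G$ in $P(\*v)$. If $X \notin nd_Y^\G$, the check is skipped and $(X, Y)$ is retained by default. If $X \in nd_Y^\G$, the pair is retained exactly when $Y \nindep X \mid \*{Pa}_Y^\G$. Applying the same logic symmetrically to $(Y, X)$, and noting that in any DAG at least one of $X \in nd_Y^\G$, $Y \in nd_X^\G$ must hold for non-adjacent $X, Y$, the argument reduces to the subcase where $X$ and $Y$ are topologically unrelated in $\G$ and both associated checks would discard.

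The main obstacle is ruling out this final subcase: showing that it is impossible for $Y \indep X \mid \*{Pa}_Y^\G$ and $X \indep Y \mid \*{Pa}_X^\G$ to hold simultaneously alongside the assumed $Y \nindep X \mid \*{Pa}_Y^\Eps \cup \*T \cup \*{Na}_{Y, X}$. My plan here is to exploit faithfulness and the structural properties of CPDAGs — in particular that $\*{Pa}_Y^\Eps$ is invariant across $\G \in \Eps$, and that the orientation patterns of neighbors of $Y$ in any $\G \in \Eps$ constrain which sets can d-separate $X$ and $Y$ in the true DAG — to derive a contradiction between the two simple-check independencies and the richer d-connectedness witnessed by the Insert operator.
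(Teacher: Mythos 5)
Your soundness direction is fine. The gap is in completeness, and it sits exactly where you flag it: you never rule out the subcase in which the chosen \(\G\) yields \(Y \indep X \mid \*{Pa}_Y^\G\) and \(X \indep Y \mid \*{Pa}_X^\G\) while some \(\textsc{Insert}(X,Y,\*T)\) is still score-increasing. This is not a routine loose end. The three statements in play are conditional (in)dependencies of the same pair given three \emph{different} conditioning sets (\(\*{Pa}_Y^\G\), \(\*{Pa}_X^\G\), and \(\*{Pa}_Y^{\Eps}\cup\*T\cup(\*{Ne}_Y^{\Eps}\cap\*{Adj}_X^{\Eps})\)), and faithfulness of \(P(\*v)\) to some DAG does not by itself make ``independent given \(S_1\) and \(S_2\) but dependent given \(S_3\)'' contradictory. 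Indeed, the paper's own Ex.~\ref{ex:safeinsert} exhibits a score-increasing \(\textsc{Insert}(X_1,Y,\emptyset)\) whose pair \((X_1,Y)\) \emph{is} discarded by the filter when \(\G_1\) is chosen; it is only rescued because the reversed pair happens to pass. So the pair-specific retention claim you are aiming for is delicate at best, and I do not see how to complete it from ``structural properties of CPDAGs'' alone. There is also a smaller slip in your reduction: the bad case is not confined to \(X,Y\) being topologically unrelated in \(\G\); it also arises when exactly one ordering is applicable (say \(X\) a descendant of \(Y\), so \((X,Y)\) is skipped) and that single remaining check discards.

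The paper's proof avoids all of this by not trying to retain the particular pair of the assumed operator. Its argument is global: a valid score-increasing \(\textsc{Insert}\) for \(\Eps\) means \(P(\*v)\) is not Markov with respect to \(\Eps\), hence not with respect to \emph{whichever} \(\G\in\Eps\) \(\textsc{SafeInsert}\) picked; by the equivalence of the local and global Markov properties (Prop.~\ref{adx:prop:equiv:lmpgmp}), the local Markov property then fails at some node \(A\), i.e.\ \(A \nindep \*{Nd}_A^\G \mid \*{Pa}_A^\G\); and the composition axiom (which \(P(\*v)\) satisfies because it is faithful to some DAG) extracts a single \(B \in \*{Nd}_A^\G\) with \(A \nindep B \mid \*{Pa}_A^\G\). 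By local consistency this is exactly the filter's test for the ordered pair \((B,A)\), so \emph{some} pair---not necessarily yours---survives and witnesses a score-increasing operator. To repair your write-up, replace the pair-specific case analysis with this local-Markov-plus-composition step.
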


\begin{example}\label{ex:safeinsert} 
(Ex.~\ref{ex:ges-traj} continued). Let \(\Eps^{(1)}, \G_1\), and \(\G_2\) be as in Ex.~\ref{ex:ges-traj}. Assume GES is at \(\Eps^{(1)}\) and \textsc{SafeInsert} picks the DAG \(\G_1 \in \Eps\). Then, \(\G_1 \cup \{X_1 \to Y\}\) has a lower score than \(\G_1\) since \(X_1 \indep Y \mid \*{Pa}_Y^{\G_1}\) in \(P(\*v)\), where \(\*{Pa}_Y^{\G_1} = \{Z\}\). \textsc{SafeInsert} thus does not consider any \textsc{Insert}\((X_1,Y,*)\) operators.
In contrast, assume \textsc{SafeInsert} picks the DAG \(\G_2 \in \Eps\). Then, \(\G_2 \cup \{X_1 \to Y\}\) has a higher score than \(\G_2\), and \textsc{SafeInsert} may still consider \textsc{Insert}\((X_1,Y,*)\) operators.
However, \textsc{ConservativeInsert} will not consider any \textsc{Insert}\((X_1,Y,*)\) operators, since \textsc{Insert}\((X_1,Y,\{Z\})\), corresponding to \(\G_1 \cup \{X_1 \to Y\}\), results in a lower score than \(\Eps^{(1)}\). \qed
\end{example}
 
Pseudocode for the above insertion strategies are in Algs.~\ref{alg:consinsert} and ~\ref{alg:safeinsert}. Later, in Sec.~\ref{sec:experiments:obs}, we compare these strategies, and show how both achieve substantial gains in accuracy and runtime over GES.

\begin{algorithm}[t]
\caption{Less Greedy Equivalence Search (LGES)}
\label{alg:lges}
\KwIn{Data \(\*D\sim \*P(\*v)\), scoring criterion \(S\), prior assumptions \(\*S = \langle \*R, \*F  \rangle\), initial MEC \(\Eps_0\),  insertion strategy \(GetInsert\) in \(\{\textsc{GetSafeInsert}, \textsc{GetConservativeInsert}\}\)}
\KwOut{MEC \(\Eps\) of \(\*P(\*v)\)}
\(\Eps \gets \Eps_0\) \tcp*{allows initialisation if preferred by user}
\Repeat{no score-increasing operators exist}{
    \Repeat{no  score-increasing deletions exist}{
        \(\Eps \gets \Eps +\) the highest-scoring \(\textsc{Delete}(X, Y, \*T)\) 
    }

    \Repeat{no  score-increasing reversals exist}{
        \(\Eps \gets \Eps +\) the highest-scoring \(\textsc{Turn}(X, Y, \*T)\) 
    }

    \(\G \gets\) some DAG in \(\Eps\)\;
    \(priorityList \gets \textsc{GetPriorityInserts}(\Eps, \G, \*S)\)\; 
    \ForEach{\(candidates\) in \(priorityList\)}{
        \((X_{max},Y_{max},\*T_{max}) \gets GetInsert(\Eps, \G, \*D, candidates, S)\)\;
        \If{ \((X_{max},Y_{max},\*T_{max})\) is found}{
             \(\Eps \gets \Eps + \textsc{Insert}(X_{max}, Y_{max}, \*T_{max})\)\;
            break \tcp*{no need to check lower priority}
        }
    }
}
\Return{\(\Eps\)}
\end{algorithm}

\subsection{Learning with prior knowledge} \label{sec:bgk}

In this section, we present another modification of the forward phase of GES: prioritizing edge insertions based on an expert's prior causal knowledge.
The insight that underpins GGES\textemdash that we can apply \emph{any} score-increasing insertion\textemdash suggests a new way to incorporate such assumptions while still correcting them if contradicted by the data.
We assume that we are given a possibly misspecified causal model as a set of required and forbidden edges \(\*S = \langle \*R, \*F\rangle\) that may be directed or undirected.

\paragraph{Initialization from prior assumptions.} A natural strategy, which we refer to as \textsc{GES-Init}, initializes the search to an MEC consistent with the assumptions, for e.g., by including all edges in \(\*R\), and then proceeds greedily as in standard GES.\footnote{This was empirically evaluated in \cite{constantinou:etal23l}. However, its correctness was not considered.} This approach, an instantiation of GGES, is sound in the large-sample limit (as a corollary of Thm.~\ref{thm:gges}), even when the assumptions are misspecified.

However, given finite samples, such initialisation may harm both accuracy and runtime. If the expert suggests adjacencies that don't exist in the true MEC, GES-\textsc{Init} includes them by default in the initialisation and may fail to remove them later.
Moreover, such initialisation precludes the use of insertion strategies from Sec.~\ref{sec:prune} that would avoid introducing such excess adjacencies.

\paragraph{Guided search from prior assumptions.} We instead propose a strategy that uses prior assumptions to \emph{prioritize} operators, and not to initialize the search.
Specifically, for each non-adjacent pair \((X,Y)\), we rank it into one of four categories based on the constraint set \(\*S = \langle \*R, \*F\rangle\) using the procedure \textsc{GetPriorityInserts} (Alg.~\ref{alg:prins}).
Insertions for higher-priority adjacencies are considered first, but only applied if they increase the score.
For example, if \(\textsc{SafeInsert}\) finds no score-increasing insertions for the current MEC, then the remaining expert-provided edges in \(\*R\) (if any) are not consistent with the data, and will not be inserted.
In contrast, GES-\textsc{Init}  inserts all edges by default.

Next, in Sec.~\ref{sec:subsec:lges}, we incorporate this prioritization scheme into a novel algorithm, combining it with the search strategy of Sec.~\ref{sec:prune} to enable a less greedy search. In Sec.~\ref{sec:experiments:bgk}, we empirically demonstrate the benefit of this prioritization-based strategy.

\subsection{The Less Greedy Equivalence Search algorithm}\label{sec:subsec:lges}
Finally, 
we introduce the main result of this work: the algorithm Less Greedy Equivalence Search (LGES, Alg.~\ref{alg:lges}).
We present three variants:
\begin{enumerate}
    \item \textbf{LGES-0} (Alg.~\ref{alg:lges0}), which modifies the insertion step of GES based on our insights in the previous sections, while using the same search strategy as GES in the deletion step,
    \item \textbf{LGES} (Alg.~\ref{alg:lges}), which is similar to (1) but additionally incorporates the XGES-0 heuristic of prioritizing insertions before deletions \cite{nazaret:24}, and 
    \item \textbf{LGES+} (Alg.~\ref{alg:lges+}), which is similar to (2) but additionally incorporates the XGES heuristic of forcing edge deletions and restarting the search \cite{nazaret:24}.\footnote{Pseudocode and correctness results for LGES-0 and LGES+ can be found in Appendix~\ref{adx:sec:proofs}.}
\end{enumerate}

As a corollary of Thm.~\ref{thm:gges} and Prop.~\ref{prop:safeinsert:corr}, we can show that each LGES variant with \textsc{SafeInsert} recovers the true MEC in the sample limit, even given a misspecified set of prior assumptions.

\begin{corollary}[Correctness of LGES]\label{cor:lges}
Let \(\Eps\) denote the Markov equivalence class that results from LGES (Alg.~\ref{alg:lges}) initialized from an arbitrary MEC \(\Eps_0\) and given prior assumptions \(\*S = \langle \*R,\*F \rangle\), and let \(P(\*v)\) denote the distribution from which the data \(\*D\) was generated. Then, as the number of samples goes to infinity, \(\Eps\) is the Markov equivalence class underlying \(P(\*v)\).
\end{corollary}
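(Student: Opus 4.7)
The plan is to reduce Corollary~\ref{cor:lges} to the combination of Theorem~\ref{thm:gges} (correctness of GGES) and Proposition~\ref{prop:safeinsert:corr} (correctness of \textsc{SafeInsert}). The core observation is that LGES is an instance of GGES in which \textsc{GetInsert} is implemented by the priority-driven \textsc{SafeInsert} wrapper, and \textsc{GetDelete} is implemented by the standard ``highest-scoring \textsc{Delete}'' rule inherited from GES. Hence it suffices to check that each of these concrete subroutines satisfies the abstract contract required by GGES: namely, that it returns a valid score-increasing operator whenever one exists, and otherwise signals failure.

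First, I would analyze the forward phase. \textsc{GetPriorityInserts} partitions the set of non-adjacent pairs \((X,Y)\) in the current CPDAG \(\Eps\) into priority buckets determined by \(\*S = \langle \*R, \*F\rangle\). The outer loop iterates over these buckets in order, invoking \textsc{GetSafeInsert} on each; it breaks as soon as a score-increasing operator is returned, and otherwise proceeds to the next bucket. The termination condition of the outer \texttt{Repeat} is therefore: no bucket contains a score-increasing \textsc{Insert}. Since the buckets collectively cover all non-adjacent pairs, this is equivalent to the statement that no score-increasing \textsc{Insert} operator exists anywhere in \(\Eps\), which is exactly the signal required by GGES.

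Second, I would note that the backward phase of LGES is literally the backward phase of GES, which is already an instance of GGES's \textsc{GetDelete} contract (it returns the highest-scoring score-increasing \textsc{Delete}, and otherwise signals failure). Combining the two phases, LGES fits the template of Algorithm~\ref{alg:gges} with an arbitrary initial MEC \(\Eps_0\), and Theorem~\ref{thm:gges} yields that the output MEC \(\Eps\) is, in the sample limit, the Markov equivalence class underlying \(P(\*v)\). The assumptions \(\*S\) only reorder the candidate pairs, so misspecified required or forbidden edges cannot suppress any operator that the algorithm must eventually consider; this is where the ``repair'' guarantee comes from.

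The main obstacle is verifying that Proposition~\ref{prop:safeinsert:corr} remains applicable when \textsc{SafeInsert} is invoked bucket-by-bucket on restricted candidate sets rather than on the full set of non-adjacent pairs at once. This reduces to observing that \textsc{SafeInsert}'s per-pair test (picking a DAG \(\G \in \Eps\) and comparing the scores of \(\G\) and \(\G \cup \{X \to Y\}\)) depends only on the pair \((X,Y)\), so restricting the input set of pairs is equivalent to filtering the output; consequently, the union of outputs across buckets coincides with a single unrestricted call, and Proposition~\ref{prop:safeinsert:corr} transfers verbatim. Once this invariance is established, the reduction to Theorem~\ref{thm:gges} is immediate and Corollary~\ref{cor:lges} follows.
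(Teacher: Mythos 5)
Your proposal is correct and follows essentially the same route as the paper: reduce to Theorem~\ref{thm:gges} by checking that the priority-bucketed \textsc{GetSafeInsert} calls collectively satisfy the \textsc{GetInsert} contract (via Proposition~\ref{prop:safeinsert:corr} and the fact that the buckets cover all non-adjacent pairs, which the paper isolates as Proposition~\ref{adx:prop:prins}), while the backward phase trivially satisfies the \textsc{GetDelete} contract. Your explicit check that restricting \textsc{SafeInsert} to per-bucket candidate sets commutes with filtering is a slightly more careful rendering of the same step the paper handles in one sentence.
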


\begin{remark}
While we only show that LGES with \textsc{SafeInsert} is asymptotically correct, LGES can also be run with \textsc{ConservativeInsert}.
Since we only have partial guarantees on \textsc{ConservativeInsert} (Prop.~\ref{adx:prop:consinsert:skel}, ~\ref{adx:prop:consinsert:corr}), it remains open whether this variant of LGES is asymptotically correct.
\end{remark}

\section{Score-based learning from interventional data}
\label{sec:interventional}

\begin{algorithm}[t]
\caption{\(\I\)-\textsc{Orient}}
\label{alg:iorient}
\KwIn{Intervention targets \(\I\), data \((\*D_{\*I})_{\*I \in \I} \sim (\*P_{\*I}(\*v))_{\*I \in \I}\), observational MEC \(\Eps\), scoring criterion \(S\) }
\KwOut{\(\I\)-MEC \(\Eps\) of \((\*P_{\*I}(\*v))_{\*I \in \I})\)}
 \ForEach{\(X \in \Eps\) and \(Y \in ne^{\Eps}_X\)}{
    \(\Delta S \gets \underset{\*I \in \I, \ X \in \*I , Y \not \in \*I}{\sum} s_{\*D_{\*I}}(y,x) -  s_{\*D_{\*I}}(y)\) \label{alg:iorient:line:scoretest}\;
    \uIf{\(\Delta S > 0\)}{
        Orient edge \(X - Y\) as \(X \to Y\) in \(\Eps\) \label{line:iorient:direct1}\;
        Apply Meek's rules in \(\Eps\) to propagate orientations  \cite{meek:97} \label{line:iorient:meek1}\;

    }
    \uElseIf{\(\Delta S < 0\)}{
        Orient edge \(X - Y\) as \(X \leftarrow Y\) in \(\Eps\) \label{line:iorient:direct2}\;
        Apply Meek's rules in \(\Eps\) to propagate orientations  \cite{meek:97} \label{line:iorient:meek2}\;

    }
}

\Return{\(\Eps\)}
\end{algorithm}
While the previous sections address causal discovery from observational data, such data alone leaves many edges unoriented.
Interventional data can resolve such ambiguities.
In this section, we extend the LGES method with \textsc{\(\I\)-Orient}, a score-based procedure for refining an observational MEC with interventional data.
Unlike existing score-based methods, which are asymptotically inconsistent or computationally infeasible even in moderate-dimensional settings \cite{hauser:2012, wang:2017}, our approach scales while preserving soundness.

\paragraph{Background on interventions.}Following \cite{hauser:2012}, we assume soft unconditional interventions, including hard (do) interventions as a special case.
These set the distribution of a variable \(X\) to some fixed \(P^*(x)\), thereby removing the influence of its parents.
Let \(\I\) denote a family of interventional targets, i.e., subsets \(\*I \subseteq \*V\), with the empty intervention \(\theta \in \I\) producing the observational distribution.
We observe data from distributions \((\*P_{\*I}(\*v))_{\*I \in \I}\).
As in the observational case, we assume there exists a DAG \(\G\) such that these distributions are \(\I\)-Markov (Def.~\ref{adx:def:imarkov}) and faithful to the corresponding intervention graphs \((\G_{\overline{\*I}})_{\*I \in \I}\), obtained by removing edges into any intervened variable \(V \in \*I\) \cite{correa:20, pearl:09}.

Just as observational data identifies an MEC, interventional data identifies an \(\I\)-MEC, a smaller equivalence class encoding constraints on both observational and interventional data \cite[Def. 7]{hauser:2012}. 

\paragraph{Orientation procedure.} To recover the \(\I\)-MEC, we introduce \textsc{\(\I\)-Orient} (Alg. ~\ref{alg:iorient}), which orients undirected edges in the observational MEC using scores from interventional data.
The underlying idea is simple. Consider some undirected edge \(X-Y\) in the observational MEC \(\Eps\), where the ground truth DAG is \(\G^*\).
Say there exists an intervention \(\*I\) containing \(X\) but not \(Y\).
When we perform an unconditional intervention on \(X\), we erase the causal influence of \(\*{Pa}_{X}^\G\) on  \(X\). 
So, \(Y\) is a parent of \(X\) in \(\G^*\) if and only if \(X\) and \(Y\) are marginally independent in \(P_{\*I}(\*v)\).
How can we check for marginal independence using a scoring criterion?
The key is to use local consistency (Def.~\ref{def:locon}): we compare a graph \(\G\) where \(Y\) has no parents to a graph \(\G \cup \{X \to Y\}\). 
The former will score higher than the latter if and only if \(X \indep Y\) in \(P_{\*I}(\*v)\).
This is precisely the test in line~\ref{alg:iorient:line:scoretest} of \textsc{\(\I\)-Orient} (Alg. ~\ref{alg:iorient}).

\begin{figure}[t]
  \centering

  \begin{subfigure}[t]{0.325\linewidth}
    \centering
    \includegraphics[width=\linewidth]{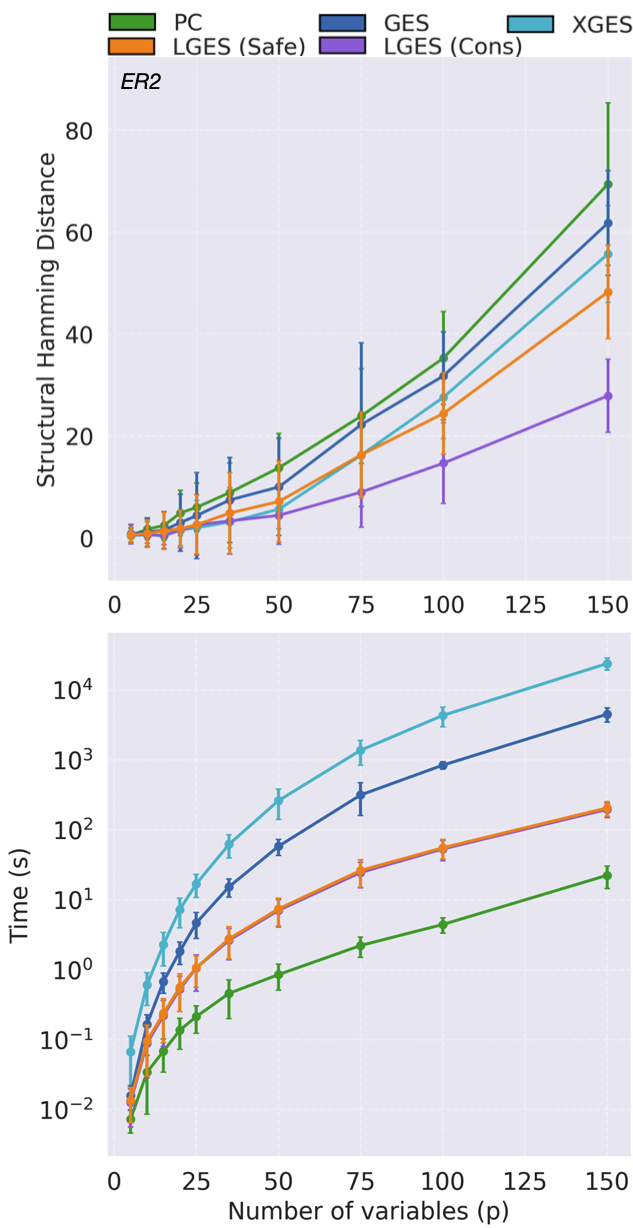}
    \caption{Observational data performance}
    \label{fig:obs-no-prior}
  \end{subfigure}
  \begin{subfigure}[t]{0.325\linewidth}
    \centering
    \includegraphics[width=\linewidth]{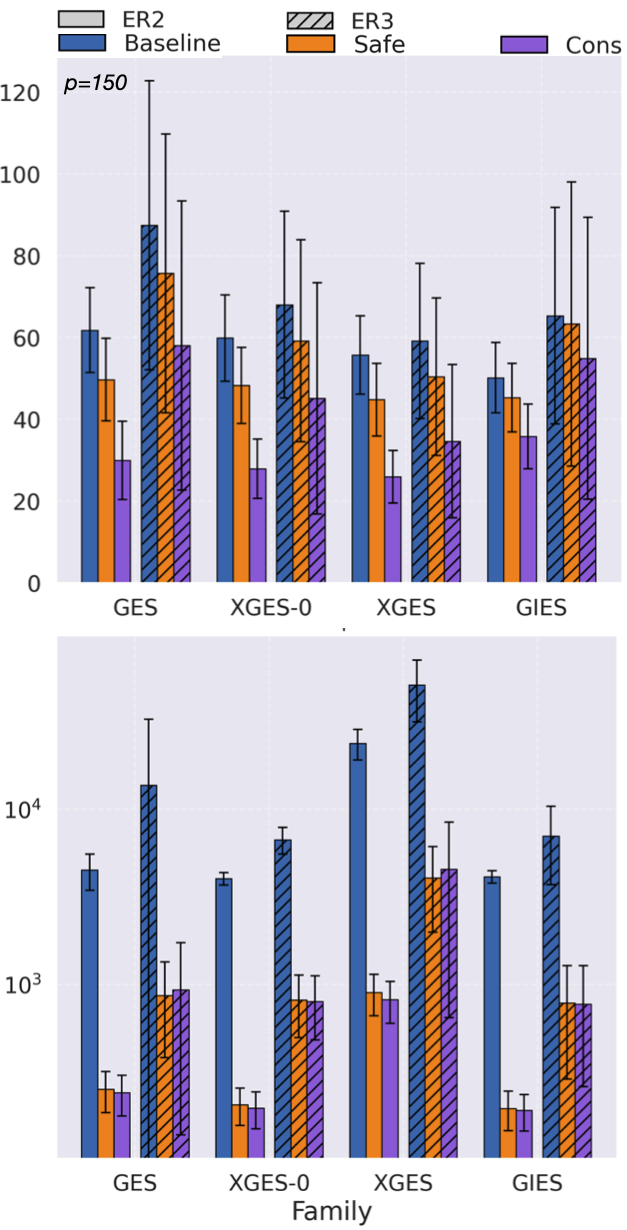}
    \caption{Impact of less greedy insertion}
    \label{fig:lges_improv}
  \end{subfigure}
  \begin{subfigure}[t]{0.32\linewidth}
    \centering
    \includegraphics[width=\linewidth]{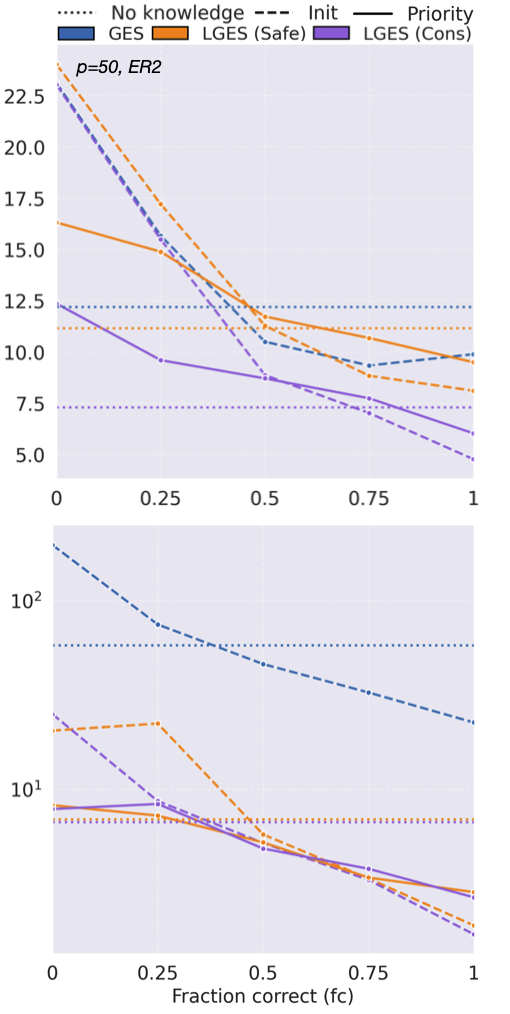}
    \caption{Impact of knowledge}
    \label{fig:obs-init}
  \end{subfigure}\hfill
  \caption{Performance of algorithms on \(50\) simulated datasets from Erdős–Rényi graphs with \(p\) variables. \textbf{Lower is better} (more accurate / faster) across all plots. \textbf{(a)} LGES outperforms baselines in accuracy and runtime on graphs with \(2p\) edges in expectation, given \(n=10^4\) observational samples and  no prior knowledge.  \textbf{(b)} Less greedy insertion improves several GES variants on graphs with \(p=150\) variables and \(2p\) and \(3p\) edges in expectation, given \(n=10^4\) observational samples (and \(n=10^3\) samples per intervention for GIES) and no prior knowledge.
  LGES-0, LGES, LGES+, and LGIES are the less greedy variants of GES, XGES-0, XGES, and GIES respectively.
  \textbf{(c)} Given prior knowledge in the form of \(3m/4\) required edges when the true graph contains \(m\) edges, LGES' prioritization strategy is more robust to misspecification in the knowledge than initialization with the same knowledge, given \(n=10^3\) observational samples.
  See Sec.~\ref{adx:sec:experiments} for additional results.
  }
  \label{fig:obs-combined}
\end{figure}

\begin{theorem}[Correctness of \(\I\)-\textsc{Orient}] \label{thm:iorient}
Let \(\Eps\) denote the Markov equivalence class that results from \(\I\)-\textsc{Orient} (Alg.~\ref{alg:iorient}) given an observational MEC \(\Eps_0\) and interventional targets \(\I\), and let  \((\*P_{\*I}(\*v))_{\*I \in \I}\) denote the family of distributions from which the data \((\*D_{\*I})_{\*I \in \I}\) was generated. Assume that \(\Eps_0\) is the MEC underlying \(P_\emptyset(\*v)\).
Then, as the number of samples goes to infinity for each \(\*I \in \I\), \(\Eps\) is the \(\I\)-Markov equivalence class underlying \((\*P_{\*I}(\*v))_{\*I \in \I}\).
\end{theorem}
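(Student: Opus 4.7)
The plan is to prove Theorem~\ref{thm:iorient} in two stages: first, establish that in the large-sample limit each score-based orientation in $\I$-\textsc{Orient} agrees with the direction of that edge in the true $\I$-MEC; second, use the soundness of Meek's rules together with Hauser and Bühlmann's characterization of the $\I$-essential graph to conclude that the final PDAG is precisely the $\I$-MEC.

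For the first stage, I fix an undirected edge $X - Y$ in $\Eps_0$ and an intervention $\*I \in \I$ with $X \in \*I$ and $Y \notin \*I$. By $\I$-faithfulness to the mutilated graph $\G_{\overline{\*I}}$ and local consistency (Def.~\ref{def:locon}) applied to $\*D_{\*I}$, the sign of $s_{\*D_{\*I}}(y,x) - s_{\*D_{\*I}}(y)$ is positive iff $Y \nindep X$ in $P_{\*I}(\*v)$ and negative iff $Y \indep X$ in $P_{\*I}(\*v)$. The key lemma to establish is that this marginal (in)dependence is determined exactly by the orientation of $X - Y$ in the true DAG $\G$. If $\G$ contains $X \to Y$, that edge survives in $\G_{\overline{\*I}}$, since intervening on $X$ removes only edges \emph{into} $X$, and hence $X$ and $Y$ are trivially d-connected. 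Conversely, if $\G$ contains $Y \to X$, I would enumerate collider-free paths between $X$ and $Y$ in $\G_{\overline{\*I}}$: every such path has the form $X \leftarrow \cdots \leftarrow Z \to \cdots \to Y$ for some common ancestor $Z$, and in each sub-case ($Z = X$, $Z = Y$, or strict common ancestor) the path either forces $X$ to be an ancestor of $Y$ (contradicting $Y \to X$ by acyclicity) or begins with an edge into $X$ that has been deleted in $\G_{\overline{\*I}}$. Hence every summand in $\Delta S$ carries the same sign, matching the true orientation of $X - Y$, so the assignments in lines~\ref{line:iorient:direct1} and~\ref{line:iorient:direct2} are correct whenever $\Delta S \neq 0$; iterating the outer loop over both orderings $(X,Y)$ and $(Y,X)$ ensures that any intervention containing exactly one endpoint is used in the appropriate direction.

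For the second stage, I invoke \cite[Thm.~10]{hauser:2012}, which characterizes the $\I$-MEC as the subclass of $\Eps_0$ whose members agree on the skeleton of $\G_{\overline{\*I}}$ for every $\*I \in \I$. This implies that an undirected edge $X - Y$ of $\Eps_0$ is oriented in the $\I$-essential graph iff some $\*I \in \I$ satisfies $|\*I \cap \{X,Y\}| = 1$, which is exactly the condition under which $\Delta S \neq 0$ in one of the two orderings. Combining with the first stage, $\I$-\textsc{Orient} applies precisely the correct direct orientations and leaves every other undirected edge untouched. Meek's rules applied after each orientation propagate to the closed PDAG, and the analog of Meek's closure for interventional essential graphs (also due to Hauser and Bühlmann) yields that this closure is exactly the $\I$-essential graph, so the output coincides with the CPDAG of the $\I$-MEC.

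The main obstacle is the acyclicity-plus-path analysis in the $Y \to X$ case: rigorously ruling out every collider-free path from $X$ to $Y$ in the mutilated graph. A secondary technicality is that score-based orientations and Meek's propagations interact across iterations of the outer loop; I would confirm that, since every direct orientation is consistent with the true DAG and Meek's rules are confluent and sound, edges oriented by Meek propagation in an earlier iteration can never conflict with an orientation that a later iteration's score would have produced, so the output is independent of iteration order.
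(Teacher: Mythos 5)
Your proposal is correct and follows essentially the same route as the paper's proof: local consistency and decomposability of the score applied to the empty graph reduce the sign of each summand of \(\Delta S\) to the marginal (in)dependence of \(X\) and \(Y\) in \(P_{\*I}(\*v)\), the trek/collider-free-path analysis in \(\G_{\overline{\*I}}\) shows this (in)dependence is determined exactly by the true orientation of the edge, and Hauser--Bühlmann's characterization of the \(\I\)-essential graph together with the soundness of Meek's rules closes the argument. Your explicit enumeration of collider-free paths in the \(Y \to X\) case is in fact a cleaner rendering of the d-separation step than the paper's own write-up; no gaps.
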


\section{Experiments} \label{sec:experiments}

\subsection{Learning from observational data} \label{sec:experiments:obs}

\textbf{Synthetic data and baselines.} 
We draw Erdős–Rényi graphs with \(p\) variables and \(\{1,2,3\}\cdot p\) edges in expectation, denoted ER-\(\{1,2,3\}\) respectively). We run most experiments for \(p\) up to \(150\), with additional experiments for \(p\) up to \(250\) in Sec.~\ref{adx:sec:exp:obs:large}.  For each \(p\), we sample 50 graphs and generate linear-Gaussian data for each graph. Following \cite{ng:etal21}, we draw weights from \(\&{U}([-2,-0.5] \cup [0.5, 2])\) and noise variances from \(\&{U}([0.1, 0.5])\). We obtain samples of size \(n \in \{10^3, 10^4\}\) via \texttt{sempler} \cite{gamella:22}. 
We evaluate GES, XGES-0 \cite{nazaret:24}, XGES \cite{nazaret:24}, LGES-0, LGES, LGES+, PC \cite{spirtes:etal00}, and NoTears \cite{zheng:etal18}.  
\footnote{All implementations in Python. GES variants are implemented by modifying the code in \texttt{https://github.com/juangamella/ges}. We use the PC implementation in \texttt{causal-learn} \cite{zheng:24} and NoTears implementation in \texttt{causal-nex} \cite{beaumont:21}. Our GES implementations share as much code as possible. We do not use the optimized implementation of the operators proposed in \cite{nazaret:24} across any of our GES variants, to study the effect of the search strategy independent of implementation.}

\textbf{Results.} LGES (both Safe and Conservative) significantly outperforms XGES and GES in runtime and accuracy.
We measure accuracy by Structural Hamming Distance (SHD) \cite{tsamardinos:etal06} between the estimated and true CPDAGs (Fig.~\ref{fig:obs-no-prior}), as well as precision, recall, and F1 score (Fig.~\ref{adx:fig:obs_no_prior_class}).  
Furthermore, the less greedy variants of GES, XGES-0, and XGES all improve on their respective original algorithms (Figs.~\ref{fig:lges_improv},~\ref{adx:fig:obs-combined},~\ref{adx:fig:obs-no-prior-er1-er3}).
Of the GES variants, LGES+ (Conservative) is the most accurate,  whereas LGES (Conservative) is the fastest.
\textsc{ConservativeInsert} outperforms \textsc{SafeInsert} in accuracy, but both strategies yield similar runtime.

To elaborate, LGES (Safe and Conservative) is an order of magnitude faster than GES.
LGES (Conservative) is up to 2 times more accurate than GES, for instance, resulting in only \(\approx 30\) incorrect edges on average in graphs with \(150\) variables and \(300\) edges in expectation.
These comparisons also hold for LGES-0, the less greedy variant of GES (Figs.~\ref{fig:lges_improv},~\ref{adx:fig:obs-combined},~\ref{adx:fig:obs-no-prior-er1-er3}).
The difference in accuracy is due to  excess adjacencies and incorrect orientations; missing adjacencies almost never occur (Fig.~\ref{adx:fig:obs-combined}).
PC, though the fastest algorithm, is less accurate than GES for \(n=10^4\), but performs best in the case of many variables and few samples (Fig.~\ref{adx:fig:small_sample}).
NoTears has much worse accuracy than other methods, e.g., average SHD \(\approx 125\) on graphs with 100 variables,
though its runtime appears to scale better (Fig.~\ref{adx:fig:obs-combined}).
See Sec.~\ref{adx:sec:exp:obs} for additional results and discussion.

\subsection{Learning with prior knowledge}\label{sec:experiments:bgk}

\textbf{Synthetic data and baselines.} We study how correctness of prior knowledge affects performance when data is limited (\(n=10^3\)) on ER2 graphs with 50 variables, with data generated as in Sec.~\ref{sec:experiments:obs}.
For a true DAG \(\G\) with \(m\) edges, we generate prior assumptions on \(m' \in  \{m/2, \ 3m/4\}\) required edges as follows. We vary the fraction \texttt{fc} of the chosen \(m'\) edges that is `correct', with \(c \cdot m'\) edges chosen correctly from those in \(\G\) and the remaining chosen incorrectly from those not in \(\G\). 
We compare GES (with and without initialisation) and LGES (without initialisation, with initialisation only,  and with priority insertion).

\textbf{Results.} 
Fig.~\ref{fig:obs-init} summarizes results for \(m' = 3m/4\), with additional results in Sec.~\ref{adx:sec:exp:bgk}.
LGES (Conservative) outperforms GES across all levels of prior correctness in terms of time and SHD. 
First, we compare initialization with prioritization.
When the knowledge is majority incorrect (\texttt{fc} \(\in \{0.5, 0.25, 0.0\}\)), the prioritization strategy significantly outperforms initialization in runtime and accuracy.
When the knowledge is mostly accurate ( \texttt{fc} \(\in \{0.75, 1\}\)),  initialization yields marginally better accuracy than prioritization.
Second, we consider when knowledge improves performance.
When the knowledge is largely correct (\texttt{fc} \(\geq 0.75\)), it marginally improves the accuracy of LGES with prioritization, and more visibly improves that of LGES with initialisation.
However, as long as \texttt{fc} \(\geq 0.5\), knowledge significantly improves the runtime of LGES with prioritization.
Thus, our prioritization strategy (Sec.~\ref{sec:bgk}) can leverage knowledge while being robust to misspecification.

\subsection{Learning from interventional data} \label{sec:experiments:int}

\begin{wrapfigure}{r}{0.63\textwidth}
    \centering
    \includegraphics[width=0.6\textwidth]{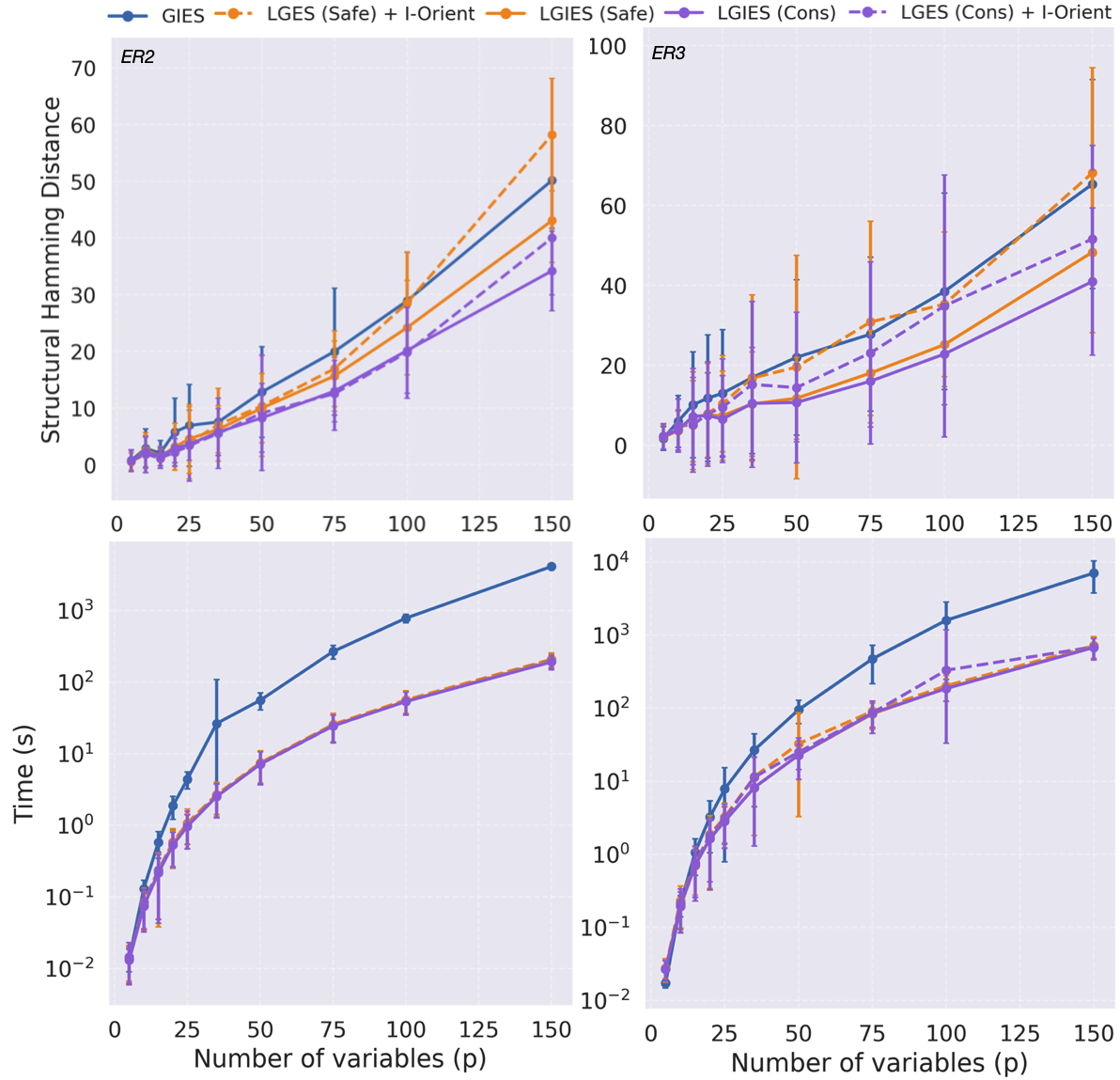}
    \caption{Performance of algorithms on 50 simulated datasets from Erdős–Rényi graphs with \(p\) variables and \textbf{(left)} \(2p\) edges \textbf{(right)} \(3p\) edges in expectation,  given \(n=10^4\) observational samples and \(n=10^3\) samples per intervention (Sec.~\ref{adx:sec:exp:int}). \textbf{Lower is better} (more accurate / faster) across all plots. LGIES significantly outperforms GIES.}
    \vspace{-15pt}
    \label{fig:exp-no-prior} 
\end{wrapfigure}
\textbf{Synthetic data and baselines}. 
We follow a similar set-up as Sec.~\ref{sec:experiments:obs} with \(10^4\) observational samples and ER-\(\{2,3\}\) graphs. For a graph on \(p\) variables,  we randomly construct \(|\I| = p/10\) interventions and generate \(10^3\) samples for each.
We compare GIES \cite{hauser:2012}; LGES run on observational data followed by \(\I\)-\textsc{Orient}; LGIES-0; and LGIES. LGIES-0 and LGIES  incorporate less greedy insertion into GIES; the latter also prioritizes deletions before insertions.

\textbf{Results.} 
All less-greedy algorithms are up to 10\(\times\) faster than GIES (Figs.~\ref{fig:lges_improv},~\ref{fig:exp-no-prior}), with LGIES (Conservative) being the fastest.
LGIES (Conservative) is up to 1.5\(\times\) more accurate than GIES.
In general, all less greedy algorithms are more accurate than GIES with the exception of LGES (Safe) + \(\I\)-\textsc{Orient}, which has competitive accuracy with GIES on ER-3 graphs  but performs slightly worse on ER-2 graphs.
While being the only combination known to be asymptotically correct, LGES (Safe) + \(\I\)-\textsc{Orient} is limited by the fact that it uses only observational data to learn the observational MEC, while GIES and LGIES additionally use interventional data to this end.

\section{Conclusions}
In this paper, we introduced a family of less greedy algorithms for score-based causal discovery from  observational and interventional data. 
Our core insight lies in two new operator selection strategies, \textsc{ConservativeInsert} and \textsc{SafeInsert}, that avoid inserting edges between variables for which the score implies a conditional independence.
Building on these ideas, we developed LGES (Alg.~\ref{alg:lges}), which replaces GES' strictly greedy step with a more careful search.
We proved that LGES with \textsc{SafeInsert} asymptotically recovers the true Markov equivalence class (Thm.~\ref{thm:gges}, Cor.~\ref{cor:lges}), and showed that LGES can incorporate prior knowledge while remaining robust to misspecification in the knowledge.
To extend these advancements beyond purely observational settings, we developed \(\I\)-\textsc{Orient} (Alg.~\ref{alg:iorient}, Thm.~\ref{thm:iorient}), a theoretically sound and scalable score-based method for refining an observational MEC using interventional data.
Across experiments on random graphs of varying sizes and densities, our less greedy strategies consistently improved both the accuracy and the efficiency existing GES-style algorithms (Sec.~\ref{sec:experiments}).
The moral, if there is one, is simple: even in causal discovery, temperance is a virtue.

\section*{Acknowledgements}
This research is supported in part by the NSF, ONR, AFOSR, DoE, Amazon, JP Morgan, and The Alfred P. Sloan Foundation. We thank Jonghwan Kim and the anonymous reviewers for their insightful comments.

\bibliography{references}

\theoremstyle{plain}
\newtheorem{adxtheorem}{Theorem}
\newtheorem{adxlemma}{Lemma}
\newtheorem{adxprop}{Proposition}
\newtheorem{adxcorollary}{Corollary}
\theoremstyle{definition}
\newtheorem{adxdefinition}{Definition}
\newtheorem{adxassumption}{Assumption}
\newtheorem{adxexample}{Example}
\theoremstyle{remark}
\newtheorem{adxremark}[theorem]{Remark}

\newtheorem{innercustomlemma}{Lemma}
\newenvironment{customlemma}[1]
  {\renewcommand\theinnercustomlemma{#1}\innercustomlemma}
  {\endinnercustomlemma}
\newtheorem{innercustomthm}{Theorem}
\newenvironment{customthm}[1]
  {\renewcommand\theinnercustomthm{#1}\innercustomthm}
  {\endinnercustomthm}
\newtheorem{innercustomprop}{Proposition}
\newenvironment{customprop}[1]
  {\renewcommand\theinnercustomprop{#1}\innercustomprop}
  {\endinnercustomprop}
  \newtheorem{innercustomcor}{Corollary}
\newenvironment{customcor}[1]
  {\renewcommand\theinnercustomcor{#1}\innercustomcor}
  {\endinnercustomcor}

\appendix
\counterwithin{adxdefinition}{section}
\counterwithin{adxtheorem}{section}
\counterwithin{adxlemma}{section}
\counterwithin{adxprop}{section}
\counterwithin{adxcorollary}{section}
\counterwithin{adxexample}{section}
\counterwithin{figure}{subsection}
\counterwithin{remark}{subsection}
\counterwithin{table}{subsection}

\section*{Appendices}
\DoToC

\section{Background and related works} \label{adx:sec:prelim}

\subsection{Definitions and previous results} \label{adx:sec:prelim:def}

First, we provide definitions and results used in the main text.

\begin{adxdefinition}[\(d\)-separation \cite{pearl:88}]\label{adx:def:dsep}
Given a causal DAG \(\G\), a node \(W\) on a path \(\pi\) is said to be a collider on \(\pi\) if \(W\) has converging arrows into \(W\) in \(\pi\), e.g., \(\rightarrow W \leftarrow\) or \(\leftrightarrow W \leftarrow\).
\(\pi\) is said to be blocked by a set \(\*Z\) if there exists a node \(W\) on \(\pi\) satisfying one of the following two conditions: 1) \(W\) is a collider, and neither \(W\) nor any of its descendants are in \(\*Z\), or 2) \(W\) is not a collider, and \(W\) is in \(\*Z\).
Given disjoint sets \(\*X,\*Y\), and \(\*Z\) in \(\G\), \(\*Z\) is said to \textit{\(d\)-separate} \(\*X\) from \(\*Y\) in \(\G\) if \(\*Z\) blocks every path from a node in \(\*X\) to a node in \(\*Y\) according to the $d$-separation criterion.  
\end{adxdefinition}

\begin{adxdefinition}[\textsc{Delete} operator, {\cite[Def. 13]{chickering:2002}}]\label{adx:def:delete}
For adjacent nodes \(X,Y\) in \(\Eps\) connected as either \(X \to Y\) or \(X - Y\), and for any \(\*H \subseteq \*{Ne}^{\Eps}_Y \cap \*{Adj}_X^{\Eps}\), the \textsc{Delete}\((X,Y,\*T)\) operator modifies \(\Eps\) by deleting the edge between \(X\) and \(Y\), and for each \(T \in \*T\), directing any undirected edges \(X-T\) as \(X \to T\) and any \(Y-T\)  as \(Y \to T\).
\end{adxdefinition}

The properties and implementation of the \textsc{Turn} operator can be found in \cite[Sec. 4.3]{hauser:2012}, though the authors do not provide an exact definition we can reproduce here.

\begin{adxdefinition}[Decomposable scoring criterion {\cite[Sec. 2.3]{chickering:2002}}]\label{adx:def:decompscore}
 Let \(\*D\) be a set of data consisting of iid samples from some distribution
\(P(\*v)\). A scoring criterion  \(S\) is said to be \emph{decomposable} if it can be written as a sum of measures, each of which is a function of only a single node and its parents, as
 \[
    S(\G,\*D) = \sum_{V_i \in \*V} s(v_i, pa^\G_i)
 \]
 Each local score \(s(v_i, pa^\G_i)\) depends only on the values of \(V_i\) and \(\*{Pa}_i\) in \(\*D\).
\end{adxdefinition}

\begin{adxdefinition}[Consistent scoring criterion {{\cite[Def. 5]{chickering:2002}}}]\label{adx:def:glocon}
Let \(\*D\) be a set of data consisting of iid samples from some distribution
\(P(\*v)\). A scoring criterion \(S\) is said to be \emph{consistent} if, as the number of samples goes to infinity, the following two properties hold for any DAGs \(\G, \&H\):
\begin{enumerate}
    \item If \(P(\*v)\) is Markov with respect to \(\G\) but not \(\&{H}\), then \(S(\G, \*D) > S(\&{H}, \*D)\).
    \item If  \(P(\*v)\) is Markov with respect to both \(\G\) and \(\&{H}\), but \(\G\) contains fewer free parameters than \(\&{H}\), then \(S(\G, \*D) > S(\&{H}, \*D)\).
\end{enumerate}
    
\end{adxdefinition}

\begin{adxdefinition}[Score-equivalent scoring criterion {\cite[Sec 2.3]{chickering:2002}}] \label{adx:def:scoreequiv}
Let \(\*D\) be a set of data consisting of iid samples from some distribution
\(P(\*v)\). A scoring criterion \(S\) is said to be \emph{score-equivalent} if, as the number of samples goes to infinity, for any two DAGs \(\G, \&{H}\) that are Markov equivalent, \(S(\G, \*D) = S(\&{}H, \*D)\).
    
\end{adxdefinition}

\begin{adxdefinition}[Soft unconditional intervention {\cite[Sec. 2.1]{hauser:2012}}]\label{adx:def:intervention}
    A soft unconditional intervention on a set of variables \(\*X\) sets the value of each variable \(V_i \in \*X\) to an independent random variable \(U_i\) from a given set of random variables \(\*U\). The resulting distribution is given by
\[
    P_{\*X}(\*v) = \prod_{V_i \not \in \*X}P(v_i \mid pa_i)\prod_{V_i \in \*X}P^*(v_i)
\]
where \(P^*(v_i)\) denotes the distribution of \(U_i \in \*U\) corresponding to \(V_i \in \*X\).
\end{adxdefinition}

\begin{adxdefinition}[\(\I\)-Markov property {\cite[Def. 7]{hauser:2012}}] \label{adx:def:imarkov}
Let \(\*V\) be a set of variables, \(\G\) a causal DAG over \(\*V\), \(\I\) a family of interventional targets, and  \((\*P_{\*I}(\*v))_{\*I \in \I}\) a corresponding family of interventional distributions.
We say \((\*P_{\*I}(\*v))_{\*I \in \I}\) satisfies the \(\I\)-Markov Property of \(\G\) if:
\begin{enumerate}
    \item Each \(P_\*I(\*v)\) is Markov with respect to the interventional graph  \(\G_{\overline{\*I}}\), and
    \item For interventions \(\*I, \*J \in \I\) and variables \(V_i \not \in \*I \cup \*J\), \(P_\*I(v_i \mid pa_i) = P_\*J(v_i \mid pa_i)\).
\end{enumerate}
We let \(\&{M}_\I(\G)\) denote the set of all \((\*P_{\*I}(\*v))_{\*I \in \I}\) that are \(\I\)-Markov with respect to \(\G\).
Two causal DAGs \(\G, \&{H}\) are \(\I\)\emph{-Markov equivalent} if \(\&{M}_\I(\G) = \&{M}_\I(\&{H})\).
\end{adxdefinition}

\paragraph{Meek orientation rules.} In Fig.~\ref{adx:fig:meek}, we provide Meek's orientation rules used in \(\I\)-\textsc{Orient} to orient an \(\I\)-MEC. These rules provide an algorithm for completing a PDAG to a \emph{completed} PDAG. They are applied repeatedly to a PDAG until no eligible motifs exist.
\begin{figure}[t]
    \centering
    \includegraphics[width=0.5\linewidth]{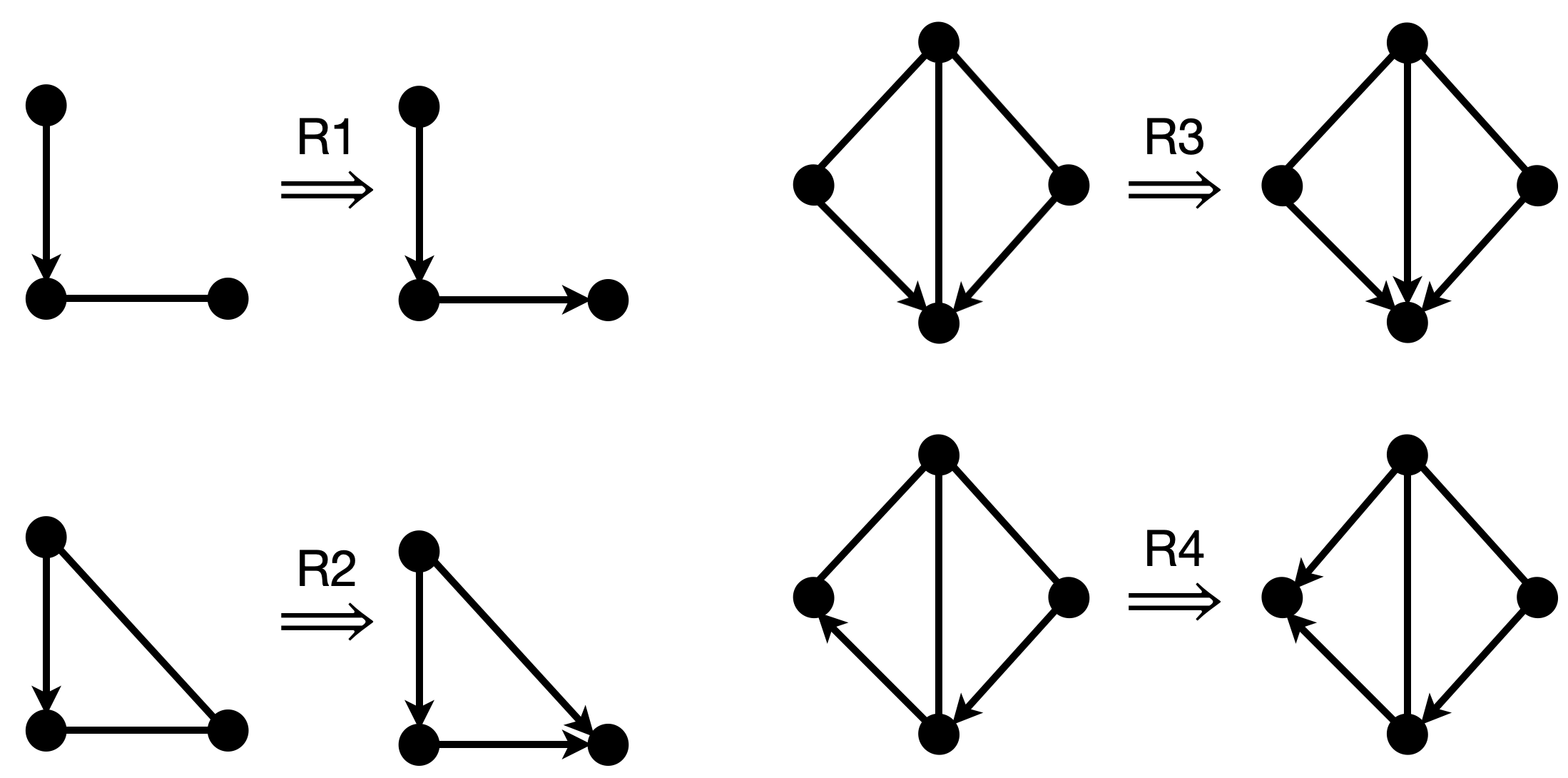}
    \caption{Meek orientation rules for completing partially directed acyclic graphs}
    \label{adx:fig:meek}
\end{figure}

Next, introduce some additional definitions and results that will be used in Sec.~\ref{adx:sec:proofs}.

The \emph{skeleton} of  a causal DAG \(\G\) (denoted \(skel(\G)\)) is the undirected graph that results from ignoring the edge directions of every edge in \(\G\). A triplet of variables \((X,Z,Y)\) in \(\G\) is said to be \emph{unshielded} if \((X,Z)\) and \((Y,Z)\) are adjacent but \((X,Y)\) are not. An unshielded triplet is said to be a \emph{v-structure} (or \emph{unshielded collider}) if it is oriented as \(X \to Z \leftarrow Y\) in \(\G\).

\begin{adxtheorem}[Graphical criterion for Markov equivalence {\cite[Thm. 1]{verma:90}}] 
Two DAGs are Markov equivalent if and only if they have the same skeletons and same v-structures.
\end{adxtheorem}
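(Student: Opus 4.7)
The plan is to prove both directions of the biconditional separately. Let $\G_1, \G_2$ be DAGs over the same vertex set.

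For the necessity direction ($\Rightarrow$), I would argue by contrapositive. First, suppose the skeletons differ, say $X$ and $Y$ are adjacent in $\G_1$ but non-adjacent in $\G_2$. Since no conditioning set can block the edge $X\text{-}Y$, there is no set $\*Z$ with $X \perp_d Y \mid \*Z$ in $\G_1$. On the other hand, in $\G_2$ one can take $\*Z = \*{Pa}^{\G_2}_X$ (say, assuming $X$ is non-descendant of $Y$ in $\G_2$, else swap), which d-separates $X$ from $Y$ by the local Markov property. So a d-separation holds in $\G_2$ but not $\G_1$, contradicting Markov equivalence. Second, assuming the skeletons agree, suppose an unshielded triple $(X,Z,Y)$ is a v-structure $X \to Z \leftarrow Y$ in $\G_1$ but not in $\G_2$. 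Any set d-separating $X$ from $Y$ in $\G_1$ must leave the path $X \to Z \leftarrow Y$ blocked, hence cannot contain $Z$ or any descendant of $Z$ in $\G_1$. Conversely, in $\G_2$, the triple is a chain or fork at $Z$, so the path $X\text{-}Z\text{-}Y$ is active unless $Z$ lies in the conditioning set. Exhibiting one concrete separating set in $\G_1$ (built from ancestral parent sets, chosen to avoid $Z$ and its descendants) that fails in $\G_2$, or vice versa, yields the contradiction.

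For the sufficiency direction ($\Leftarrow$), assume $\G_1$ and $\G_2$ share skeleton and v-structures; I need to show that every d-separation in $\G_1$ holds in $\G_2$ (the other inclusion is symmetric). I will fix disjoint $\*X,\*Y,\*Z$ and a path $\pi$ from some $X \in \*X$ to some $Y \in \*Y$ in the common skeleton, and show $\pi$ is active in $\G_1$ given $\*Z$ iff it is active in $\G_2$ given $\*Z$. For an unshielded non-endpoint triple $(W^-, W, W^+)$ on $\pi$, the collider status at $W$ is preserved across $\G_1,\G_2$ because both have the same v-structures. The main obstacle is the \emph{shielded} triples, where $(W^-,W^+)$ are adjacent in the skeleton: here the orientations of edges in the same chain component of the CPDAG may differ between $\G_1$ and $\G_2$, and the colliderness at $W$ along $\pi$ may genuinely differ. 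To handle this, I would invoke the standard path-shortening argument: whenever $\pi$ has a shielded triple where the collider statuses differ, the shielding edge $W^- \text{-} W^+$ permits constructing a strictly shorter path $\pi'$ between the same endpoints whose activation status under $\*Z$ is at least as permissive, and iterate until the path is unshielded or fully decomposed into segments whose activation status is common to both DAGs. By induction on path length, this establishes that $\pi$ is active in $\G_1$ iff active in $\G_2$.

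The hard part is this last shortening/induction step; the cleanest way to execute it rigorously is to instead invoke Chickering's transformational characterization, which says $\G_2$ can be obtained from $\G_1$ by a sequence of covered edge reversals (a covered edge $X\to Y$ being one with $\*{Pa}_Y^{\G} = \*{Pa}_X^{\G} \cup \{X\}$). A direct check shows that reversing a covered edge preserves both skeleton and v-structures and also preserves the joint d-separation relation, since the parent set of every node is unchanged except for the swap at $X$ and $Y$, and the induced conditional independencies coincide. Concatenating the reversals gives Markov equivalence of $\G_1$ and $\G_2$, completing the sufficiency direction. I would present the d-separation argument as the main proof and point to Chickering's characterization as the mechanism that makes the shielded-triple case go through without ambiguity.
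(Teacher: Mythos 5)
First, a point of calibration: the paper does not prove this statement at all---it is imported verbatim as background from Verma and Pearl, so there is no in-paper proof to compare against. Judged on its own terms, your necessity direction is essentially the standard argument and is sound, up to one small slip: if \(X\) is a non-descendant of \(Y\) in \(\G_2\), the local Markov property gives \(Y \perp_d X \mid \*{Pa}_Y^{\G_2}\), so the separating set should be \(\*{Pa}_Y^{\G_2}\), not \(\*{Pa}_X^{\G_2}\). The v-structure half of necessity is correct and can be tightened: \emph{any} set that d-separates \(X\) from \(Y\) in \(\G_1\) automatically excludes \(Z\) and its descendants (else the collider path would be open), and any such set leaves the chain or fork at \(Z\) active in \(\G_2\); no special construction of the separating set is needed.

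The genuine gap is in the sufficiency direction. Your fallback to Chickering's transformational characterization is circular: as stated (and as stated in this paper's appendix), that theorem takes \emph{Markov equivalence} of \(\G_1\) and \(\G_2\) as its hypothesis and concludes the existence of a covered-edge-reversal sequence---so you cannot invoke it to establish Markov equivalence from agreement of skeletons and v-structures. Worse, the standard proof of the transformational characterization itself rests on the Verma--Pearl criterion you are trying to prove. To make this route non-circular you would need the strictly graphical statement ``same skeleton and same v-structures implies the existence of a covered-edge-reversal sequence,'' proved without reference to d-separation, and that is a nontrivial combinatorial argument you have not supplied. Your primary route---the direct path argument---is the right one, but the entire difficulty of the theorem is concentrated in the shielded-triple/path-shortening step that you only gesture at (``permits constructing a strictly shorter path whose activation status is at least as permissive''); making that induction precise, including how collider status and descendant sets interact with the shielding edge, is exactly the content of the original proof. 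As written, neither route closes the sufficiency direction.
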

Based on the above characterization, to obtain the CPDAG for the MEC corresponding to a DAG \(\G\), one adds an undirected edge for every adjacency in \(\G\); orients any v-structures according to \(\G\); then applies Meek's orientation rules to complete the resulting PDAG to a CPDAG.

\begin{adxdefinition}[Global Markov property \cite{pearl:88}]
\label{def:gmp}
    A probability distribution \(P(\*v)\) over a set of variables \(\*V\) is said to satisfy the global Markov property for a causal DAG \(\G\) if, for arbitrary disjoint sets \(\*X,\*Y,\*Z \subset \*V\) with \(\*X, \*Y \neq \emptyset\),
    \[
    \*X \perp_d \*Y | \*Z \implies \*X \indep \*Y | \*Z \text{ in } P(\*v).
    \]
\end{adxdefinition}

Let \(\*{Nd}_X^\G\) denote the set of non-descendants of a variable \(X\) in \(\G\), i.e. variables in \(\G\) (excluding \(X\) itself) to which there is no directed path from \(X\).

\begin{adxdefinition}[Local Markov property \cite{pearl:88}]

A probability distribution \(P(\*v)\) over a set of variables \(\*V\) is said to satisfy the local Markov property for a causal DAG \(\G\) if, for every variable \(X \in \*V\),
    \[
    X \indep \*{Nd}_X^\G \mid \*{Pa}_X^\G \text{ in } P(\*v).
    \]
    
\end{adxdefinition}

\begin{adxprop}[Equivalence of Local and Global Markov Properties {\cite[Prop. 4]{lauritzen:etal90}}]
\label{adx:prop:equiv:lmpgmp}
    Let \(\G\) be a causal DAG over variables \(\*V\).
    A probability distribution over \(\*V\) satisfies the global Markov property for \(\G\) if and only if it satisfies the local Markov property for \(\G\).
\end{adxprop}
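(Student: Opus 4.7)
The plan is to prove the two directions separately; the reverse direction (LMP \(\Rightarrow\) GMP) will pass through the DAG factorization of \(P(\*v)\).

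For the forward direction (GMP \(\Rightarrow\) LMP), the key step is the graphical fact that \(\*{Pa}_X^\G\) \(d\)-separates \(X\) from \(\*{Nd}_X^\G \setminus \*{Pa}_X^\G\) in \(\G\). Any path from \(X\) to a non-descendant \(Y\) either starts with an edge into \(X\) from some parent \(P\)---in which case \(P\) is necessarily a non-collider on the path and blocks it, since \(P \in \*{Pa}_X^\G\) is conditioned on---or starts with an edge into a child of \(X\), in which case the path must contain a collider in order to reach a non-descendant; that collider (and all its descendants) is itself a descendant of \(X\), hence cannot lie in \(\*{Pa}_X^\G\) without inducing a cycle, so the path is blocked at its first collider. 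Applying GMP to this \(d\)-separation immediately gives \(X \indep \*{Nd}_X^\G \setminus \*{Pa}_X^\G \mid \*{Pa}_X^\G\), which is LMP.

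For the reverse direction (LMP \(\Rightarrow\) GMP), I would first use LMP to derive the DAG factorization
\[
P(\*v) = \prod_{i=1}^{n} P(x_i \mid \*{pa}_{X_i}^\G),
\]
and then invoke the classical soundness result that this factorization implies GMP. Concretely, fix any topological order \(X_1, \dots, X_n\) of \(\G\), so that \(\{X_1, \dots, X_{i-1}\} \subseteq \*{Nd}_{X_i}^\G\) and \(\*{Pa}_{X_i}^\G \subseteq \{X_1, \dots, X_{i-1}\}\) for each \(i\). The chain rule gives \(P(\*v) = \prod_i P(x_i \mid x_1, \dots, x_{i-1})\). LMP yields \(X_i \indep \*{Nd}_{X_i}^\G \setminus \*{Pa}_{X_i}^\G \mid \*{Pa}_{X_i}^\G\), and since \(\{X_1, \dots, X_{i-1}\} \setminus \*{Pa}_{X_i}^\G \subseteq \*{Nd}_{X_i}^\G \setminus \*{Pa}_{X_i}^\G\), the graphoid decomposition axiom yields \(P(x_i \mid x_1, \dots, x_{i-1}) = P(x_i \mid \*{pa}_{X_i}^\G)\), establishing the factorization.

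The hard part is the final step: showing the factorization implies GMP. The standard route restricts attention to the ancestral subgraph of \(\*X \cup \*Y \cup \*Z\) (the factorization marginalizes cleanly over non-ancestors), moralizes this subgraph, and uses the fact that \(d\)-separation in the original DAG coincides with undirected graph separation in the moralized graph; the desired conditional independence \(\*X \indep \*Y \mid \*Z\) then follows from the undirected factorization induced on the moralized graph. Since this reduction is a classical result \cite{lauritzen:etal90}, I would cite it rather than reprove it; the novel content of the argument is confined to the two elementary derivations above, while the moralization-and-separation step is the genuine technical obstacle that the citation absorbs.
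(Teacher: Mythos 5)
The paper does not prove this proposition at all---it is imported verbatim as background, with the proof deferred entirely to the citation of Lauritzen et al.\ (1990, Prop.~4). So there is no in-paper argument to compare against; the only question is whether your sketch is sound, and it is. The forward direction is a complete and correct graphical argument: a path entering \(X\) from a parent is blocked at that parent (a non-collider in the conditioning set), and a path leaving \(X\) through a child must turn around at a collider that is a descendant of \(X\), hence neither it nor any of its descendants can be a parent of \(X\) on pain of a cycle. The reverse direction correctly routes through the factorization (topological order, chain rule, and the decomposition semi-graphoid axiom applied to the local Markov statement) and then defers the factorization-implies-GMP step to the classical moralization argument. Deferring that step to the same citation the paper uses is entirely consistent with how the paper itself treats the result. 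One cosmetic remark: the paper's statement of the local Markov property writes \(X \indep \*{Nd}_X^\G \mid \*{Pa}_X^\G\) without removing the parents from the non-descendant set, whereas you work with the properly disjoint form \(\*{Nd}_X^\G \setminus \*{Pa}_X^\G\); yours is the standard and strictly correct formulation, and the two are equivalent under the usual convention that conditional independence statements implicitly exclude the conditioning set.
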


\begin{adxdefinition}[Covered edge {\cite[Def. 2]{chickering:13}}]
An edge \(X \to Y\) in a given causal DAG \(\G\) is said to be \emph{covered} if \(\*{Pa}_Y^\G=\*{Pa}_X^\G \cup \{X\}\).
\end{adxdefinition}

\begin{adxlemma}[Covered edge reversal {\cite[Lemma. 1]{chickering:13}}]
\label{adx:lemma:covedge}
Let \(\G\) be any causal DAG containing the edge \(X \to Y\) and let \(\G'\) be the DAG that is identical to \(\G\) except it instead contains the edge \(Y \to X\). Then, \(\G'\) is a DAG that is Markov equivalent to \(\G\) iff the edge \(X \to Y\) is covered in \(\G\).
\end{adxlemma}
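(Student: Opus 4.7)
The plan is to prove both directions by leveraging the graphical characterisation of Markov equivalence (same skeleton and same v-structures), together with a direct acyclicity argument for the DAG property.

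For the forward direction (covered \(\Rightarrow\) reversal yields an equivalent DAG), I would first show \(\G'\) is acyclic. Any cycle in \(\G'\) must use the new edge \(Y \to X\), so it induces a directed path \(X \to Z_1 \to \dots \to Z_k \to Y\) in \(\G\) that avoids the edge \(X\to Y\). The last node \(Z_k\) lies in \(\*{Pa}_Y^\G\) and is not \(X\); by coveredness \(\*{Pa}_Y^\G = \*{Pa}_X^\G \cup \{X\}\), so \(Z_k \in \*{Pa}_X^\G\), producing a cycle in \(\G\), a contradiction. Then Markov equivalence follows via the Verma–Pearl criterion: the skeletons agree trivially, and for v-structures I only need to check unshielded triplets touching the reversed edge. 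For any \(Z\) adjacent to \(X\) but not \(Y\), coveredness rules out \(Z \to X\) (else \(Z\in \*{Pa}_X^\G\subseteq \*{Pa}_Y^\G\) gives \(Z\to Y\)); symmetrically for any \(W\) adjacent to \(Y\) but not \(X\), coveredness forbids \(W \to Y\). In both the remaining configurations the triplet is a non-collider in both \(\G\) and \(\G'\), so no v-structures are created or destroyed.

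For the backward direction, I would argue by contrapositive: if \(X \to Y\) is not covered, then either some \(Z \in \*{Pa}_Y^\G\setminus(\*{Pa}_X^\G\cup\{X\})\) or some \(Z \in \*{Pa}_X^\G\setminus \*{Pa}_Y^\G\) exists. In each case I split on whether \(Z\) is adjacent to the relevant other endpoint. When the triplet \((Z,Y,X)\) or \((Z,X,Y)\) is unshielded, reversing \(X \to Y\) either creates or destroys a v-structure, violating the Verma–Pearl criterion. When \(Z\) is adjacent the other way (e.g.\ \(X \to Z\) in Case 1 or \(Y \to Z\) in Case 2), combining that edge with the reversed edge produces a directed cycle in \(\G'\) (or an existing cycle in \(\G\)), violating the DAG hypothesis. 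Hence at least one of the two required conclusions fails, which is precisely the contrapositive.

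The main obstacle is simply being careful and exhaustive with the case analysis for the backward direction: one must enumerate all possible positions of a putative witness \(Z\) to non-coveredness relative to both \(X\) and \(Y\), and verify that each case contradicts either acyclicity of \(\G'\) or equality of v-structures between \(\G\) and \(\G'\). Once the case split is laid out cleanly, each branch reduces to a one-line argument, so no deep machinery beyond Verma–Pearl and the definition of a covered edge is required.
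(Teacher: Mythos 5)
The paper does not prove this lemma; it is imported verbatim as \cite[Lemma 1]{chickering:13} and used as a black box, so there is no in-paper proof to compare against. Your argument is correct and is essentially the standard proof of Chickering's result: acyclicity of \(\G'\) from coveredness via the last vertex on an \(X\)-to-\(Y\) path, and Markov equivalence via the Verma--Pearl criterion by checking only the unshielded triplets whose collider status can change (those centered at \(X\) or \(Y\) and containing the reversed edge), with the converse handled by a clean case split on a witness \(Z\) to non-coveredness. The only point worth tightening in a write-up is the subcase \(Z \in \*{Pa}_X^\G \setminus \*{Pa}_Y^\G\) with \(Y \to Z\): there the cycle \(X \to Y \to Z \to X\) lies in \(\G\) itself, so this subcase is vacuous under the standing hypothesis that \(\G\) is a DAG rather than a refutation of the conclusion; your parenthetical acknowledges this, but it deserves an explicit sentence.
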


\begin{adxtheorem}[Transformational characterization of Markov equivalent graphs {\cite[Thm. 2]{chickering:13}}]
\label{adx:thm:transformequiv}
    Let \(\G, \G'\) be a pair of Markov equivalent causal DAGs. Then, there exists a sequence of covered edge reversals transforming \(\G\) to \(\G'\).
\end{adxtheorem}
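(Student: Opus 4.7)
The plan is to induct on $r(\G,\G')$, the number of edges on which $\G$ and $\G'$ disagree. Since $\G$ and $\G'$ are Markov equivalent, the Verma--Pearl characterization invoked just above guarantees that they share the same skeleton, so every disagreement is a pure reversal and $r$ is well-defined. The base case $r=0$ is immediate: $\G=\G'$ and the empty sequence of reversals suffices. For the inductive step, the goal is to exhibit a single covered edge in $\G$ whose reversal strictly decreases $r$; Lemma~\ref{adx:lemma:covedge} will then guarantee that the reversed DAG is Markov equivalent to $\G$, hence to $\G'$, and the inductive hypothesis will close the loop.

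To find such an edge, I would fix a topological order $\prec$ of $\G'$ and choose an extremal disagreement: let $Y$ be the $\prec$-smallest vertex incident to any disagreement edge, and among edges $X\to Y$ in $\G$ with $Y\to X$ in $\G'$, let $X$ be the $\prec$-smallest such parent of $Y$ in $\G$. I would also verify that reversing $X\to Y$ introduces no new disagreement elsewhere in $\G$, which follows from the extremality of $(X,Y)$ by a short case check on the edges incident to $X$ and $Y$.

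The crux is the coveredness claim $\*{Pa}_Y^\G=\*{Pa}_X^\G\cup\{X\}$. For any $Z\in\*{Pa}_Y^\G\setminus\{X\}$, I would first rule out $Z$ being non-adjacent to $X$ in $\G$: otherwise $Z\to Y\leftarrow X$ would be a v-structure in $\G$ which, by the v-structure invariant under Markov equivalence, must also appear in $\G'$, contradicting $Y\to X\in\G'$. Hence $Z$ and $X$ are adjacent in $\G$ (and in $\G'$, since the skeletons agree), and the extremality of $(X,Y)$ under $\prec$, combined once more with the v-structure and skeleton invariants, pins the $\G$-orientation to $Z\to X$. The reverse inclusion $\*{Pa}_X^\G\subseteq\*{Pa}_Y^\G\setminus\{X\}$ is handled symmetrically, using the v-structure invariant to prevent a would-be parent of $X$ from failing to be a parent of $Y$. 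The main obstacle will be this case analysis: each common neighbour of $X$ and $Y$ must be placed correctly in $\G$ and $\G'$ simultaneously, and tuning the extremality conditions so that every bad configuration is blocked is the delicate part; once it is done, Lemma~\ref{adx:lemma:covedge} yields a Markov-equivalent $\G''$ with $r(\G'',\G')=r(\G,\G')-1$ and the induction completes.
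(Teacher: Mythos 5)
The paper itself offers no proof of this statement—it is quoted directly from Chickering's transformational characterization—so your proposal stands or falls on its own. Your high-level plan is the classical one: induct on the number \(r\) of oppositely oriented edges (well-defined since Markov equivalence forces equal skeletons), exhibit a covered edge of \(\G\) that is reversed in \(\G'\), flip it using Lemma~\ref{adx:lemma:covedge}, and recurse. That reduction is correct; everything hinges on the selection rule for the edge to reverse.

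That selection rule is where the proposal breaks, and the failure is not a technicality: taking \(X\) to be the \(\prec\)-\emph{smallest} disagreement-parent of \(Y\) can return an edge that is not covered and whose reversal is not even acyclic. Let \(\G\) have edges \(X\to Y\), \(X\to Z\), \(Z\to Y\) and let \(\G'\) have edges \(Y\to X\), \(X\to Z\), \(Y\to Z\); both are complete DAGs on three nodes, hence Markov equivalent. A topological order of \(\G'\) is \(Y\prec X\prec Z\), the disagreement edges are \(\{X,Y\}\) and \(\{Y,Z\}\), so your rule selects \(Y\) and then \(X\) (the smaller of the two candidates \(X,Z\); the same happens if \(\prec\) is instead a topological order of \(\G\)). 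But \(\*{Pa}_Y^{\G}=\{X,Z\}\) while \(\*{Pa}_X^{\G}\cup\{X\}=\{X\}\), so \(X\to Y\) is not covered, and reversing it creates the cycle \(Y\to X\to Z\to Y\). The precise step of your case analysis that fails is the claim that extremality ``pins the \(\G\)-orientation to \(Z\to X\)'': here \(Z\in\*{Pa}_Y^{\G}\setminus\{X\}\) is adjacent to \(X\), yet the edge is \(X\to Z\) in \emph{both} graphs, and neither the v-structure invariant (the triple \(X,Y,Z\) is shielded) nor \(X\prec Z\) rules this out. The edge that works is \(Z\to Y\), i.e., the \emph{maximal} disagreement-parent of \(Y\), which is covered here; Chickering's Find-Edge procedure makes exactly this maximal choice (together with a matching extremal choice of \(Y\)), and the coveredness case analysis is run from that end. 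As written, your induction cannot be closed.
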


\begin{adxtheorem}[Chickering-Meek theorem {\cite[Thm. 4]{chickering:2002}}]
\label{adx:thm:chickeringmeek}
let \(\G\) and \(\&{H}\) be a pair of causal DAGs such every \(d\)-separation that holds in \(\&{H}\) also holds in \(\G\).
Then, there exists a sequence of edge additions and covered edge reversals transforming \(\G\) to \(\&{H}\) such that after each reversal and addition, \(\G\) is  DAG and every \(d\)-separation that holds \(\&{H}\) also holds in \(\G\).

\end{adxtheorem}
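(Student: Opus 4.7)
}

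The plan is to proceed by induction on $k = |E(\&{H})| - |E(\G)|$, after first establishing $k \geq 0$. As a preliminary step, I would show that the hypothesis forces $\mathrm{skel}(\G) \subseteq \mathrm{skel}(\&{H})$: if $X,Y$ are non-adjacent in $\&{H}$, then by Proposition~\ref{adx:prop:equiv:lmpgmp} (equivalence of local and global Markov property) there is a set $\*Z$ with $X \perp_d Y \mid \*Z$ in $\&{H}$ (take $\*Z = \*{Pa}^{\&{H}}_X$ assuming $Y$ is a non-descendant of $X$ in $\&{H}$). By hypothesis this $d$-separation also holds in $\G$, so $X,Y$ cannot be adjacent in $\G$. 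Hence $\G$'s skeleton is a subset of $\&{H}$'s, and $k \geq 0$.

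\textbf{Base case} ($k=0$). Equal edge counts plus the skeleton inclusion above give $\mathrm{skel}(\G) = \mathrm{skel}(\&{H})$. I would then argue that $\G$ and $\&{H}$ share all v-structures: if an unshielded triple $X - Z - Y$ were a v-structure in $\&{H}$ but not $\G$, then $X \perp_d Y \mid \emptyset$ would hold in $\&{H}$ (the triple contributes a blocked collider path without conditioning) but fail in $\G$ (where $Z$ is a non-collider, leaving the path marginally open); a symmetric argument handles the reverse. By the Verma--Pearl characterization (quoted just before Definition~\ref{def:gmp}), $\G$ and $\&{H}$ are then Markov equivalent, and Theorem~\ref{adx:thm:transformequiv} supplies the required sequence of covered edge reversals. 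Since covered edge reversals preserve Markov equivalence, the $d$-separation invariant is trivially maintained at every intermediate step.

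\textbf{Inductive step} ($k > 0$). The goal is to produce a prefix consisting of covered edge reversals in $\G$ followed by a single edge addition that drops $k$ to $k-1$ while preserving acyclicity and the invariant. I would fix a topological order $\pi = (V_1,\dots,V_n)$ of $\&{H}$ and let $V_j$ be the smallest-indexed vertex such that $\*{Pa}^{\G}_{V_j} \neq \*{Pa}^{\&{H}}_{V_j}$. Because the skeleton containment holds and smaller-indexed parents already agree, the discrepancy at $V_j$ must be either (a) a missing incoming edge from some $X \in \*{Pa}^{\&{H}}_{V_j} \setminus \*{Pa}^{\G}_{V_j}$, or (b) an edge incident to $V_j$ that is reversed relative to $\&{H}$. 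In case (a), I would show that $X$ is a non-descendant of $V_j$ in $\G$ (otherwise the $\G$-directed path $V_j \rightsquigarrow X$ together with $X \to V_j$ in $\&{H}$ would violate some $\&{H}$-d-separation transferred to $\G$, contradicting the invariant), so $X \to V_j$ can be added without creating a cycle, and an explicit path-blocking analysis shows every $\&{H}$-$d$-separation is preserved (any path through the new edge has an $\&{H}$-counterpart that was already blocked). In case (b), I would apply Theorem~\ref{adx:thm:transformequiv}-style local reasoning to show the misoriented edge can be made covered after finitely many covered-edge reversals within $\G$'s restriction to $\{V_1,\dots,V_j\}$, reverse it, and then reduce to case (a).

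\textbf{Main obstacle.} The hardest part is the inductive step, and specifically two sub-claims: (i) that the covered-edge reversals needed to reorient mis-directed edges incident to $V_j$ are actually available (i.e., can be scheduled so that each edge is covered at the moment it is reversed), and (ii) that adding the missing edge $X \to V_j$ preserves \emph{every} $\&{H}$-$d$-separation, which requires a careful path argument showing that any newly unblocked path in the augmented $\G$ corresponds to a path already present and blocked in $\&{H}$. Both rely on exploiting the topological prefix agreement and the skeleton inclusion $\mathrm{skel}(\G) \subseteq \mathrm{skel}(\&{H})$, but keeping the bookkeeping clean is the delicate part.
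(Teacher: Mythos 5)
First, note that the paper does not prove this statement at all: it is imported verbatim from \cite[Thm.~4]{chickering:2002} as a known tool (alongside Thm.~\ref{adx:thm:transformequiv}), so the reference proof to compare against is Chickering's original multi-lemma constructive argument, not anything in this manuscript. Your overall architecture (skeleton inclusion, induction on the edge-count gap, base case via Markov equivalence plus Thm.~\ref{adx:thm:transformequiv}) matches that known strategy, but it contains a concrete error and a substantive gap. In the base case, the claim that a v-structure \(X \to Z \leftarrow Y\) present in \(\&{H}\) but not in \(\G\) forces \(X \perp_d Y \mid \emptyset\) in \(\&{H}\) is false: \(X\) and \(Y\) may be marginally connected through other paths. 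The correct argument conditions on a set that is guaranteed to separate them, e.g.\ \(\*{Pa}^{\&{H}}_X\) or \(\*{Pa}^{\&{H}}_Y\) (for whichever node is a non-descendant of the other), which cannot contain the common child \(Z\); this separation transfers to \(\G\), where the non-collider \(Z\) leaves the path \(X - Z - Y\) open, a contradiction. The converse direction (v-structure in \(\G\) but not in \(\&{H}\)) needs extra care because d-separations only transfer from \(\&{H}\) to \(\G\), not the other way: one must separate \(X,Y\) in \(\&{H}\) by a set necessarily containing the non-collider \(Z\), transfer that statement to \(\G\), and observe that conditioning on \(Z\) opens the collider there.

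Second, and more seriously, the inductive step as written defers exactly the content of the theorem. The assertion that adding a missing edge \(X \to V_j\) (present in \(\&{H}\)) preserves every \(\&{H}\)-d-separation because ``any path through the new edge has an \(\&{H}\)-counterpart that was already blocked'' does not hold for an arbitrary choice of missing edge: a path in the augmented \(\G\) traverses edges whose orientations may differ from those in \(\&{H}\), so the corresponding walk in \(\&{H}\) can be blocked at a node that is a collider in one graph and a non-collider in the other, and the blocking does not transfer. Likewise, your sub-claim (i) --- that the misoriented edges incident to \(V_j\) can always be scheduled as covered-edge reversals --- is precisely what Chickering's proof establishes through a sequence of lemmas identifying which single operation (a specific covered reversal or a specific addition, chosen by comparing \(\*{Pa}^{\G}_Y\) with \(\*{Pa}^{\&{H}}_Y\) at a carefully selected node \(Y\)) is guaranteed to exist and to preserve the invariant; it is not a routine consequence of prefix agreement in a topological order. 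Since both items you flag as ``the main obstacle'' constitute the actual theorem, the proposal should be regarded as an incomplete outline rather than a proof.
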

\subsection{Related works} \label{adx:sec:prelim:rw}

\subsubsection{Learning from observational data}
Algorithms for causal discovery fall into three broad categories: constraint-based, score-based, and hybrid.
Constraint-based algorithms like PC \cite{spirtes:etal00} and the Sparsest Permutations (SP) algorithm \cite{raskutti:19} learn the true MEC using statistical tests for whether the chosen type of constraints, typically conditional independencies, hold in the data.
A challenge to these approaches is improving the accuracy of conditional independence tests, for e.g. in controlling type I error \cite{shah:20}.
Score-based algorithms such as GES \cite{chickering:2002, meek:1995} use a scoring criterion that reflects fit between data and graph, typically in the form of a likelihood plus a complexity penalty.
Hybrid algorithms such as max-min hill climbing \cite{tsamardinos:etal06} use a combination of the two approaches, for e.g., first learning the skeleton using a constraint-based method then orienting edges in the skeleton using a score.
There is no general claim about the relative accuracy of these methods; we refer readers to \cite{tsamardinos:etal06} for an extensive empirical analysis, who found, for instance, that GES outperforms PC in accuracy across various sample sizes \cite[Tables 4, 5]{tsamardinos:etal06} .

Commonly, causal discovery algorithms struggle with scaling in high-dimensional settings.
This has
motivated variants such as Parallel-PC \cite{le:etal15} and Fast Greedy Equivalence Search (FGES) \cite{ramsey:etal17}, which offer faster, parallelized implementations of the algorithms.
While FGES offers an additional heuristic over GES, i.e., not adding any edge \(X \to Y\) in the forward phase if \(X,Y\) are uncorrelated in the data, this heuristic is not theoretically guaranteed to recover the true MEC.
More recent continuous-optimization based approaches such as 
NoTears \cite{zheng:etal18} are in principle more scalable, but lack theoretical guarantees and show brittle performance even in simulated settings \cite{ng:etal24, reisach:etal2}.

\subsubsection{Learning with prior knowledge} \label{adx:sec:rw:bgk}
Causal discovery with background knowledge is a well-studied problem \cite{constantinou:etal23l}.
Such background knowledge may be provided by a domain expert or even a large language model \cite{long:2023}.

\paragraph{Perfect background knowledge.}
Many algorithms under this umbrella, such as tiered-FCI \cite{scheines:etal1998}, the K2 algorithm \cite{cooper:etal92}, and Knowledge-Guided GES \cite{hasan:2024}, assume that the background knowledge is correct, i.e., consistent with the ground truth.
They use this knowledge to constrain the search, for e.g., by never inserting user-forbidden edges.
If the expert's background knowledge is imperfect, such methods necessarily fail to recover the true MEC.
Even if the knowledge is perfect, it is unknown whether these methods will output the true graph; for e.g., in Knowledge-Guided GES \cite{hasan:2024}, restricting insertions no longer guarantees reachability of an MEC with respect to which the given data is Markov in the forward phase, as in GES.

\paragraph{Misspecified background knowledge.} The constraint-based approach in \cite{oates:17} allows some misspecification in the expert knowledge in the form of missing or excess adjacencies but not incorrect orientations.
However, their approach does not guarantee recovery of the true MEC.
Certain score-based approaches treat knowledge about edges a `soft' prior \cite{angelopoulos:2008, bayarri:2019, castelo:00, heckerman:1995, odonnell:etal06} to guide the search, but lack theoretical guarantees on the output graph. 
One exception with theoretical guarantees is the Sparsest Permutations (SP) algorithm \cite{raskutti:19}, which can initialize the search to an ordering over variables provided by an expert.

\subsubsection{Learning from interventional data} Observational learning algorithms can only learn a causal graph up to its observational Markov equivalence class (MEC). 
The MEC is the limit of what can be identified from observational data without further assumptions.
However, MECs can often be large and uninformative for downstream causal tasks \cite{he:15}. 
Interventional data can help significantly refine observational MECs \cite{hauser:2012}, and is becoming increasingly available, for e.g., in biological settings due to advances in single-cell technologies \cite{dixit:etal16, saunders:etal24}. This has motivated the design of algorithms for causal discovery from observational and interventional data such as the score-based Greedy Interventional Equivalence Search (GIES) \cite{hauser:2012} and the CI-based Interventional Greedy Sparsest Permutations (I-GSP) \cite{wang:2017}. However, in \cite{wang:2017}, it was shown that GIES is inconsistent, i.e.. not guaranteed to recover the true interventional MEC in the sample limit. I-GSP is asymptotically consistent, but  being permutation-based, struggles to scale in high-dimensional settings.

\section{Discussion and examples}\label{adx:sec:disc}

\subsection{Design of Less Greedy Equivalence Search}\label{adx:sec:disc:lges}
In this section, we elaborate on certain design choices made in the LGES algorithm.

\paragraph{Choice of DAG in \textsc{SafeInsert}.}
As specified in Def.~\ref{heur:safeinsert} and Alg.~\ref{alg:safeinsert}, at a given MEC \(\Eps\), \textsc{SafeInsert} chooses some DAG \(\G\) from \(\Eps\) to perform its check for conditional independence.
In theory, any any procedure for choosing a DAG from an MEC suffices.
In our implementation, we use Alg.~\ref{alg:pagtodag}, a simple polynomial-time algorithm for converting a CPDAG to DAG presented in \cite{dor:1992} and used in the original GES \cite{chickering:2002}
Another possibility is the polynomial-time algorithm for uniform random sampling of a DAG from an MEC \cite{wienobst:2020}.
We leave investigation of DAG choice to future work.

\paragraph{Backward phase of LGES.}
LGES only modifies the forward phase of GES, while keeping the backward phase intact.
Since a score-decreasing \textsc{Insert} implies independence under some conditioning set (Prop.~\ref{prop:condset}), discarding all \(X-Y\) insertions  when a single score-decreasing \textsc{Insert}\((X,Y,\*T)\) operator is found promotes more robust edge insertions, i.e., between variables that are not found to be conditionally independent.
This significantly reduces false adjacencies in the learned graph (Fig.~\ref{adx:fig:obs-combined}).
One may ask, why not use an analogous strategy in the backward phase, discarding all \(X-Y\) deletions  when a single score-decreasing \textsc{Delete}\((X,Y,\*T)\) operator is found?
This is because one score-decreasing deletion implies dependence given one conditioning set, not given any conditioning set.
However, a true \(X-Y\) adjacency implies dependence given any conditioning set.
So, skipping all deletions due to one score decrease would lead to many false adjacencies and compromise soundness.
A true adjacency would imply that all \(X-Y\) deletes are score-decreasing, so LGES/GES will not apply any of them.
Experiments support this: false non-adjacencies are rare (Fig.~\ref{adx:fig:obs-combined}).

\paragraph{Contrast with PC algorithm.} There is a surface-level similarity between LGES and the PC algorithm \cite{spirtes:etal00}.
The PC algorithm begins with a complete graph, then removes edges between variables for which it finds some separating set.
In particular, for adjacent variables \(X,Y\) in the current graph \(\G\), it iterates over all subsets \(\*Z \subseteq \*{Adj}_X^\G \cup \*{Adj}_Y^\G\) to find some \(\*Z\) such that \(X \indep Y \mid \*Z\).
If \(X,Y\) are non-adjacent in the ground-truth graph, then the PC algorithm is guaranteed to find a \(\*Z\) which separates them.

LGES, on the other hand, starts with an empty (or user-provided) graph.
It then avoids inserting the edge \(X \to Y\) if it finds a set separating \(X\) and \(Y\).
However, LGES only searches over restricted space of separating sets.
\textsc{ConservativeInsert} checks if \(X \indep Y \mid \*{Pa}_Y^\G\) for any \(\G\) in the current MEC.
\textsc{SafeInsert} checks if \(X \indep Y \mid \*{Pa}_Y^\G\) for a fixed \(\G\) in the current MEC.
There may exist a set \(\*Z\) separating \(X\) and \(Y\) that does not equal \(\*{Pa}_Y^\G\) for some \(\G\) in the current MEC.
LGES may not find this \(\*Z\), and thus insert an edge between \(X\) and \(Y\).
In this way, it differs from the PC algorithm.

The advantage of LGES is that it does not require conducting any scoring operations beyond those which GES already conducts.
On the other hand, an approach which iterates over all possible separating sets of \(X\) and \(Y\) would require significantly more scoring operations.
Still, future work might consider the theoretical guarantees and empirical performance of such a score-based approach.

\subsection{Limitations of Less Greedy Equivalence Search}
\paragraph{Causal sufficiency.} In this work, we assume that the underlying system is \emph{Markovian}, i.e. no two observed variables have an unobserved common cause.
This is also known as the \emph{causal sufficiency} assumption.
While this assumption is standard in causal discovery, it can be violated in practice.
In settings with unobserved confounders, i.e., \emph{non-Markovian} settings, the equivalence class of graphs that can be identified is typically even larger (and hence more uninformative) than in the Markovian case.
One reason for this is that two variables may be non-adjacent in the true graph while still being inseparable by any set due to the existence of an \emph{inducing path} between them \cite[Def. 2]{verma:90}.
As a result, performing causal inference from equivalence classes of non-Markovian graphs is challenging.

Still, there has been work on causal discovery in non-Markovian settings.
Constraint-based approaches include the FCI algorithm \cite{spirtes:13, zhang:08} and its interventional variants \cite{jaber2020cd, kocaoglu:19,li2023discovery}, guaranteed to recover the true equivalence class in the sample limit.
While there has been progress towards score-based approaches \cite{ bellot:24, claassen:22, richardson:02, triantafillou:16}, finding algorithms that are asymptotically correct remains an open problem.

There is a fundamental theoretical challenge to generalizing our approach to the non-Markovian setting.
GES relies on two useful properties of Markovian DAGs.
First is the local Markov property \cite{lauritzen:etal90} of such DAGs, allowing for a locally consistent scoring criterion (Def.~\ref{def:locon}).
Second are the transformational characterizations of two Markovian causal DAGs \(\G_1\) and \(\G_2\) when (a) \(\G_1\) and \(G_2\) encode exactly the same \(d\)-separations (Thm.~\ref{adx:thm:transformequiv}) and (b) (a) \(\G_1\) encodes a subset of the \(d\)-separations encoded in \(\G_2\) (Thm.~\ref{adx:thm:chickeringmeek}).
A `transformational characterization' is a procedure whereby \(\G_1\) can be transformed into \(\G_2\) by a sequence of single-edge changes that satisfy certain criteria.
The transformational characterization of (a) has been generalized to non-Markovian causal DAGS \cite{zhang:08}.
Moreover, there exists a local Markov property for non-Markovian causal DAGs, as well as an efficient algorithm to test condition (b) \cite{jeong_2025}.
However, it remains an open problem to fully recover (b) in the non-Markovian setting.

\paragraph{Other assumptions.} In this work, we assume we are given a distribution that is Markov and \emph{faithful} with respect to some causal DAG. 
This is a standard assumption in causal discovery, often justified by the fact that the set of distributions that are Markov but not faithful with respect to a given DAG has Lebesgue measure zero. 
Moreover, if we assume only that the given distribution is Markov with respect to some causal DAG, we can never rule out the true DAG being the fully connected DAG.
Still, there has been work on relaxing the faithfulness assumption, giving rise to the Sparsest Permutations algorithm \cite{raskutti:19}.

We further assume in this work that we are given a scoring criterion that is decomposable and consistent.
This does not strictly mean we make parametric assumptions.
Existing scores such as BIC satisfy these criteria as long as the model is a \emph{curved exponential family} \cite{chickering:2002, haughton:88}.
This includes multinomial (discrete) and linear-Gaussian models.
For continuous data, the linear-Gaussian assumption can be violated in practice.
In this case, one can discretize the data before using it for causal discovery, as in \cite{sachs:etal05}.
However, since the parameter space of multinomial models is quite large, and information is lost during discretization, it would be valuable future work to investigate scores for other models of continuous data.

\subsection{Extended example of GES trajectories}
Finally, we return to our explanation of the two trajectories GES might take in Ex.~\ref{ex:ges-traj}, Fig.~\ref{fig:ges-traj}.
\begin{figure}
    \centering
    \includegraphics[width=1\linewidth]{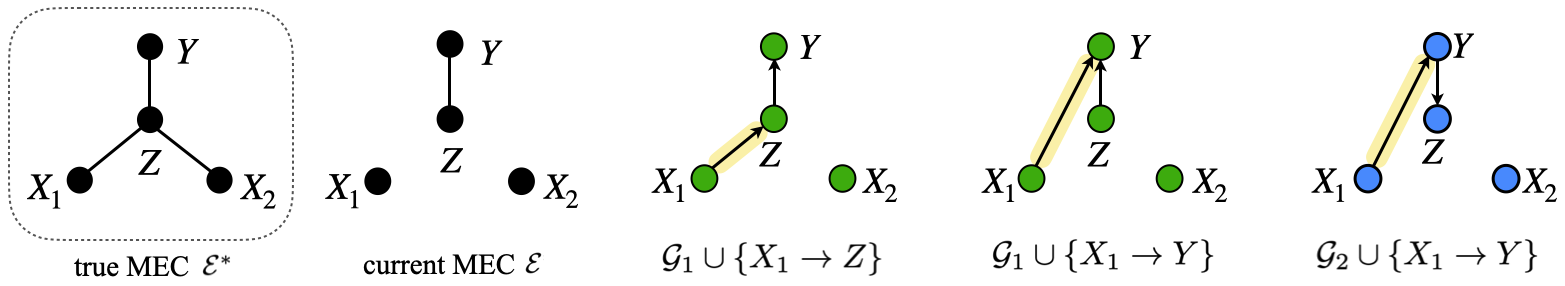}
    \caption{Figs.~\ref{fig:ges-traj},~\ref{fig:ges-traj-exp} partially reproduced for convenience. GES is given a distribution \(P(\*v)\) whose true MEC is represented by \(\Eps^*\) (left). GES is currently at MEC \(\Eps\), evaluating which \textsc{Insert} operator to apply next. Each \textsc{Insert} corresponds to picking some \(\G \in \Eps\) and adding some edge to it.}
    \label{adx:fig:ges-traj}
\end{figure}

\begin{adxexample} (Ex.~\ref{ex:ges-traj} continued).
\label{adx:ex:ges-traj}
Recall that GES is given a distribution \(P(\*v)\) whose true MEC is \(\Eps^*\) (Fig.~\ref{adx:fig:ges-traj}, left).
GES is currently at the MEC \(\Eps\), evaluating which \textsc{Insert} operator to apply.
Each \textsc{Insert} operator corresponds to picking some DAG \(\G \in \Eps\) and adding some edge to it.
One such operator corresponds to picking \(\G_1 \in \Eps\), and adding the edge \(X_1 \to Z\) to it.
Another such operator corresponds to picking \(\G_2 \in \Eps\), and adding the edge \(X_1 \to Y\) to it.
Both operators, shown in Fig.~\ref{adx:fig:ges-traj}, result in a score increase by local consistency (Def.~\ref{def:locon}) since \(Z \nindep X_1 \mid \*{Pa}_Z^{\G_1}\) and \(Y \nindep X_1 \mid \*{Pa}_Y^{\G_2}\) in \(P(\*v)\).
Although \(\G_1 \cup X_1 \to Z\) looks `closer' to the true MEC \(\Eps^*\) than \(\G_2 \cup X_! \to Y\), when tested empirically, the latter often scores more than the former (Ex.~\ref{ex:ges-traj}).

Moreover, even in the sample limit, the consistency (Def.~\ref{adx:def:glocon}) of the scoring criterion does not guarantee that \(\G_2 \cup \{X_1 \to Y\}\) will score lower than \(\G_1 \cup \{X_1 \to Z\}\).
Neither the global nor the local consistency of the score provide an immediate guarantee for which will score higher.
Global consistency only allows us to compare graphs when \(P(\*v)\) is Markov with respect to at least one of them; however,  \(P(\*v)\) is not Markov with respect to \(\G_2 \cup \{X_1 \to Y\}\) or \(\G_1 \cup \{X_1 \to Z\}\).
Local consistency (Def.~\ref{def:locon}) does not let us compare these graphs either, since they differ by more than an edge addition. Therefore, GES may move either to the MEC of \(\G_1 \cup \{X_1 \to Z\}\) (as in \(\tau_1\), Fig.~\ref{fig:ges-traj}) or to the MEC of \(\G_2 \cup \{X_1 \to Y\}\) (as in \(\tau_2\), Fig.~\ref{fig:ges-traj}). It is unknown a priori which operator is the highest-scoring. \qed
\end{adxexample}

\section{Proofs and pseudocode} \label{adx:sec:proofs}

In this section, we provide proofs, additional results, and pseudocode for the algorithms of the main paper.

\begin{customthm}{\ref{thm:gges}}[Correctness of GGES]\label{adx:thm:gges}
Let \(\Eps\) denote the Markov equivalence class that results from GGES (Alg.~\ref{alg:gges}) initialized from an arbitrary MEC \(\Eps_0\) and let \(P(\*v)\) denote the distribution from which the data \(\*D\) was generated. 
Then, as the number of samples goes to infinity, \(\Eps\) is the Markov equivalence class underlying  \(P(\*v)\).
\end{customthm}
\begin{proof}
The proof is similar to that of \cite[Lemma 9, 10]{chickering:2002}. First, we will prove the correctness of GGES run with the forward-turning-backward phase order.

\underline{Forward phase.}
First, we show that \(P(\*v)\) is Markov with respect to the MEC \(\Eps'\) resulting from the forward phase of GGES.  
Let \(\G\) be any DAG in \(\Eps'\).
By assumption, since \textsc{GetOperator} does not find any valid score-increasing \textsc{Insert} operators, there exists no such operator.
Since there exist no score-increasing \textsc{Insert} operators, the local consistency of the scoring criterion (Def.~\ref{def:locon}) implies that for every \(X \in \G\) and  \(Y \in \*{Nd}_X^\G\), \(X \indep Y \mid \*{Pa}_X^\G\).
Otherwise, if we had some \(X \in \G\) and \(Y \in \*{Nd}_X^\G\) such that  \(X \nindep Y \mid \*{Pa}_X^\G\), the \textsc{Insert}\((Y,X,*)\) operator corresponding to  \(\G \cup \{Y \to X\}\) would result in a score increase.
Since \(P(\*v)\) is faithful to some DAG, it satisfies the \emph{composition axiom} of conditional independence \cite{pearl:88}: if \(X \indep Y \mid \*{Pa}_X^\G\) for every \(Y \in \*{Nd}_X^\G\), then \(X \indep \*{Nd}_X^\G \mid \*{Pa}_X^\G\).
Since this is true for every \(X\), \(P(\*v)\) satisfies the local Markov property of \(\G\).
By the equivalence of the global and local Markov properties (Prop.~\ref{adx:prop:equiv:lmpgmp}), this means every d-separation in \(\G\) implies a corresponding CI in \(P(\*v)\).
Thus, \(P(\*v)\) is Markov with respect to \(\G\) and hence to \(\Eps'\).

\underline{Turning phase.} Next, we show that \(P(\*v)\) is  Markov with respect to the MEC \(\Eps''\) resulting from the turning phase of GGES.
The turning phase starts with \(\Eps'\), output by the forward phase.
We have shown that \(P(\*v)\) is Markov with respect to \(\Eps'\).
By construction, each \textsc{Turn} operator applied to the current MEC in the backward phase results in a score increase
If any operator resulted in an \(\Eps''\) with respect to which \(P(\*v)\) is not Markov, the consistency of the scoring criterion \ref{adx:def:glocon} implies that it would decrease the score.
Therefore, \(P(\*v)\) must be Markov with respect to \(\Eps''\).

\underline{Backward phase.} Finally, we show that \(P(\*v)\) is both Markov and faithful to the MEC \(\Eps\) resulting from the backward phase of GGES.
The argument that \(P(\*v)\)  is Markov with respect to \(\Eps\) is similar to that of the turning phase above. 
It remains to show that \(P(\*v)\) is faithful to \(\Eps\). 
Let \(\Eps^*\) be the true MEC underlying  \(P(\*v)\).
Since  \(P(\*v)\) is both Markov and faithful with respect to \(\Eps^*\), and \(P(\*v)\) is Markov with respect to \(\Eps\), every \(d\)-separation in \(\Eps\) must also hold in \(\Eps^*\).
Then, by the Chickering-Meek theorem (Thm.~\ref{adx:thm:chickeringmeek}), for any \(\G \in \Eps\) and \(\&{H} \in \Eps^*\), there exists a sequence of covered edge reversals and edge additions that transform \(\&{H}\) to \(\G\).
If this sequence only contains covered edge reversals, then \(\&{H}\) and \(\G\) are Markov-equivalent (Lemma.~\ref{adx:lemma:covedge}), and we are done.
Otherwise, let the last edge addition in this sequence add the edge \(X \to Y\), resulting in the DAG \(\G'\).
Since \(\G'\) can be transformed to \(\G\) by a sequence of covered edge reversals, they are Markov equivalent, and we have \(\G' \in \Eps\) (Lemma.~\ref{adx:lemma:covedge}).
Moreover, since this sequence of transformations includes only covered edge reversals and edge additions, and \(P(\*v)\) is Markov with respect to \(\&{H}\), \(P(\*v)\) is also Markov with respect to \(\G' \setminus \{X \to Y\}\) (by Lemma.~\ref{adx:lemma:covedge}, covered edge reversals and additions do not create additional \(d\)-separations).
By the consistency of the scoring criterion, \(\G' \setminus \{X \to Y\}\) has a higher score than \(\G' \in \Eps\) since the former has fewer parameters.
The corresponding \textsc{Delete} operator thus results in a score increase, and by assumption, \textsc{GetOperator} is guaranteed to find some score-increasing deletion in this case.
Thus, we have a contradiction. 
    
Next, we will prove the correctness of GGES without the phase structure, i.e., when \textsc{GetOperator} returns a score-increasing \textsc{Insert} or \textsc{Delete} operator if one exists. 
The proof is similar to the previous case.
GGES terminates at an MEC \(\Eps\) only when \textsc{GetOperator} finds no score-increasing \textsc{Insert} \emph{or} \textsc{Delete} operator.
By assumption, this implies no such operator exists.
If no score-increasing increasing \textsc{Insert} operator exists, by the same argument as for the forward phase above, the given \(P(\*v)\) must be Markov with respect to \(\Eps\).
Given that \(P(\*v)\) is Markov with respect to \(\Eps\), and, furthermore, no score-increasing increasing \textsc{Delete} operator exists, then by the same argument as for the backward phase above, the given \(P(\*v)\) must be both Markov and faithful to \(\Eps\).
\end{proof}

\begin{customprop}{\ref{prop:condset}}
\label{adx:prop:condset}
Let \(\Eps\) denote an arbitrary CPDAG and let \(P(\*v)\) denote the distribution from which the data \(\*D\) was generated.
 Assume, as the number of samples goes to infinity, that there exists a valid score-decreasing \textsc{Insert}\((X,Y,\*T)\) operator for \(\Eps\). Then, there exists a DAG \(\G \in \Eps\) such that (1) \(Y \perp_d X \mid \*{Pa}^\G_Y\) and (2) \(Y \indep X \mid \*{Pa}^\G_Y\) in \(P(\*v)\).
\end{customprop}

\begin{proof}
The score change of a valid \textsc{Insert}\((X,Y,\*T)\) corresponds to picking a DAG \(\G \in \Eps\) and comparing \(S(\G, \*D)\) with \(S(\G \cup \{X \to Y\}, \*D)\). By local consistency (Def.~\ref{def:locon}), if \(Y \nindep X \mid \*{Pa}_Y^\G\), then the score must increase. By contrapositive, we have \(Y \indep X \mid \*{Pa}_Y^\G\) in \(P(\*v)\). Moreover, since \(X\) must be a non-descendant of \(Y\) for \(\G \cup \{X \to Y\}\) to be a DAG, \(Y \perp_d X \mid \*{Pa}_Y^\G\) in \(\G\).  
\end{proof}

We provide the following guarantee for LGES Alg.~\ref{alg:lges} run with \textsc{ConservativeInsert} (Strategy \ref{heur:consinsert}).

\begin{adxprop}[Partial guarantee on \textsc{ConservativeInsert}]
\label{adx:prop:consinsert:skel}
Let LGES\(^*\) denote the variant of LGES (Alg.~\ref{alg:lges}) that uses \textsc{ConservativeInsert} instead of  \textsc{SafeInsert} in the forward phase. Let \(\Eps\) denote the equivalence class that results from the forward phase of  LGES\(^*\)initialized to an arbitrary MEC \(\Eps_0\), let \(P(\*v)\) denote the distribution from which the data \(\*D\) was generated, and let \(\Eps^*\) be the true MEC underlying \(P(\*v)\). Then, as the number of samples goes to infinity, 
\begin{enumerate}
    \item \(skel(\Eps^*) \subseteq skel(\Eps)\)
    \item For any unshielded triplet \((X,Z,Y) \in \Eps^*\), either \(X,Y\) are adjacent in \(\Eps\) or \((X,Z,Y)\) is a collider in \(\Eps^*\) if and only if it is a collider in \(\Eps\).
\end{enumerate}
\end{adxprop}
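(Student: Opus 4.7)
The plan is to first distill a useful characterization of when LGES$^*$ terminates, and then use it as a unified tool for both claims. At termination with output $\Eps$, the algorithm has found no retained score-increasing $\textsc{Insert}$. By local consistency (Def.~\ref{def:locon}), in the sample limit every valid insert is strictly score-increasing or strictly score-decreasing---no ties---so for a pair $(X,Y)$ to be retained it must be that \emph{all} its inserts strictly increase the score, in which case the highest-scoring one would be applied and the algorithm would not terminate. Hence termination forces that \emph{every} non-adjacent pair $(X,Y)$ in $\Eps$ admits some score-decreasing $\textsc{Insert}$. Feeding this insert into Proposition~\ref{prop:condset} yields a witness DAG $\G_{XY} \in \Eps$ with $X \perp_d Y \mid \*{Pa}_Y^{\G_{XY}}$ in $\G_{XY}$ and $X \indep Y \mid \*{Pa}_Y^{\G_{XY}}$ in $P(\*v)$ (up to swapping $X,Y$, depending on the direction of the witnessing insert).

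For Part~1, I would argue by contradiction: if $X,Y$ are adjacent in $\Eps^*$ but not in $\Eps$, the characterization above supplies a set given which $X \indep Y$ holds in $P(\*v)$. However, adjacent nodes in $\Eps^*$ are d-connected given every set in every DAG of $\Eps^*$, so by faithfulness no such CI can hold---contradiction. This yields $skel(\Eps^*) \subseteq skel(\Eps)$.

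For Part~2, fix an unshielded triplet $(X,Z,Y)$ in $\Eps^*$; by Part~1 the edges $X$--$Z$ and $Y$--$Z$ persist in $\Eps$. If $X,Y$ are adjacent in $\Eps$ we are done, so assume they are non-adjacent and let $\G = \G_{XY}$ be the witness DAG with separating set $\*{Pa}_Y^{\G}$ (the other direction is symmetric). For the forward direction, if $(X,Z,Y)$ is a collider in $\Eps^*$, then no d-separator of $X,Y$ in $\Eps^*$ can contain $Z$, so faithfulness forces $Z \notin \*{Pa}_Y^{\G}$, i.e., $Y \to Z$ in $\G$. I would then check that if $\G$ also had $Z \to X$, the path $X \leftarrow Z \leftarrow Y$ would be unblocked by $\*{Pa}_Y^{\G}$ (its only non-collider $Z$ being outside the conditioning set), contradicting the d-separation in $\G$; hence $X \to Z$ in $\G$, yielding the v-structure $X \to Z \leftarrow Y$, which propagates to all of $\Eps$. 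For the backward direction, if $(X,Z,Y)$ is a collider in $\Eps$ then $Z$ is a child of $Y$ in every DAG in $\Eps$, so $Z \notin \*{Pa}_Y^{\G}$; faithfulness upgrades the CI to a d-separation in $\Eps^*$ by a set excluding $Z$, and the standard characterization of unshielded non-colliders (every separator must contain the middle vertex) forces $(X,Z,Y)$ to be a collider in $\Eps^*$.

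The main obstacle I anticipate is precisely the opening step. Unlike the proof of Theorem~\ref{thm:gges}, where termination implies that \emph{no} score-increasing insert exists, \textsc{ConservativeInsert} may terminate having discarded score-increasing operators because some other insert for the same pair was score-decreasing; in particular, $P(\*v)$ need not be Markov with respect to $\Eps$, so the usual GES correctness template is unavailable. The crucial idea that rescues the argument is that this aggressive pruning still supplies a CI witness via Proposition~\ref{prop:condset}, and it is this witness---rather than Markovness---on which both the skeleton and the v-structure analyses ultimately rest.
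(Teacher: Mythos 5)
Your proposal is correct and follows essentially the same route as the paper's proof: both parts rest on the observation that at termination every non-adjacent pair admits a score-decreasing insert, whose conditional-independence witness (Prop.~\ref{prop:condset}) is then played against faithfulness for the skeleton claim and against the separator/collider characterization for the v-structure claim. The only cosmetic difference is how the second edge of the v-structure is oriented in the forward direction (you use an active-path contradiction where the paper uses the non-descendant/acyclicity constraint), which does not change the approach.
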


\begin{proof}
For any variables \(X,Y\) adjacent in \(\Eps^*\), since \(P(\*v)\) is faithful to \(\Eps^*\), \(X,Y\) are not independent in the data conditional on any set \(\*Z \subseteq \*V\). Hence, for any \(\G \in \Eps\), \(X \nindep Y \mid \*{Pa}_Y^\G\). Therefore, we always have \(s(\G) < s(\G \cup \{X \to Y\})\) for any \(\G \in \Eps\) such that \(X \in \*{Nd}_Y^\G\), and all valid \textsc{Insert}\((X,Y, *)\) operators will result in a score increase. Hence, \textsc{ConservativeInsert} will consider all such operators. Since \(\hat{\Eps}\) is a local optimum of the score, any variables that are adjacent in \(\Eps^*\) must also be adjacent in \(\Eps\). Therefore, \(skel(\Eps^*) \subseteq skel(\Eps)\).

Then, consider some unshielded triplet \((X,Z,Y) \in \Eps^*\).
Since \(skel(\Eps^*) \subseteq skel(\Eps)\), \((X,Z)\) and \((Y,Z)\) must be adjacent in \(\Eps\).
If \((X,Y)\) are also adjacent in \(\Eps\), we are done.
Otherwise, we have an unshielded triplet \((X,Z,Y) \in \Eps\).
Assume \((X,Z,Y)\) is a collider in \( \Eps^*\).
Since \textsc{ConservativeInsert} finds no score-increasing \textsc{Insert} operators for \(\Eps\), and \(X,Y\) are non-adjacent in \(\Eps\), it must be the case that \(\exists \G \in \Eps\) such that \(Y \in \*{Nd}_X^\G\) and \(X \indep Y \mid \*{Pa}_X^\G\) or  \(X \in \*{Nd}_Y^\G\) and \(X \indep Y \mid \*{Pa}_Y^\G\).
Without loss of generality, assume it is the former.
Since \((X,Z,Y)\) is a collider in \(\Eps^*\), \(X \nindep Y \mid \*Z\) in \(P(\*v)\) for any set \(\*Z\) containing \(Z\).
Therefore, it must be the case that \(Z \not \in \*{Pa}_X^\G\).
Hence, \(\G\) contains the edge \(X \to Z\). 
Since \(Y \in \*{Nd}_X^\G\), this further implies that \(\G\) contains the edge \(Y \to Z\).
Therefore,  \((X,Z,Y)\) is a collider in \(\G\) and hence \(\Eps^*\).
Next, assume that \((X,Z,Y)\) is a collider in \(\Eps\). Then, \(Z \not \in \*{Pa}_X^\G\) and \(Z \not \in \*{Pa}_Y^\G\) for all \(\G \in \Eps\).
As before, since \textsc{ConservativeInsert} finds no score-increasing \textsc{Insert} operators for \(\Eps\), and \(X,Y\) are non-adjacent in \(\Eps\), it must be the case that \(\exists \G \in \Eps\) such that \(Y \in \*{Nd}_X^\G\) and \(X \indep Y \mid \*{Pa}_X^\G\) or  \(X \in \*{Nd}_Y^\G\) and \(X \indep Y \mid \*{Pa}_Y^\G\).
Then, conditioning on \(Z\) is not needed to separate \(X,Y\) in \(P(\*v)\), which implies that \((X,Z,Y)\) is also a collider in \(\Eps^*\).
    
\end{proof}

We give the following condition, sufficient to guarantee that \textsc{ConservativeInsert} returns a score-increasing \textsc{Insert} operator when one exists.

\begin{adxprop}[Conditional guarantee on \textsc{ConservativeInsert}]\label{adx:prop:consinsert:corr}
Let \(\Eps\) denote a Markov equivalence class and let \(P(\*v)\) denote the distribution from which the data \(\*D\) was generated.  

Assume the following holds.
\begin{quote}
    \textbf{Assumption.} Let \(\G, \&{H}\) be two DAGs such that some \(d\)-separation encoded in \(\G\) does not hold in \(\&{H}\). Then, there exists a pair of variables \(X,Y\)  non-adjacent in \(\G\) with \(Y \in \*{Nd}_X^\G\) such that for every \(\G'\) Markov-equivalent to \(\G\) with \(Y \in \*{Nd}_X^{\G'}\), \(X \not \perp_d Y \mid \*{Pa}^{\G'}_X\) in \(\&{H}\). 
\end{quote}
Then, as the number of samples goes to infinity, \textsc{ConservativeInsert} returns a valid score-increasing \textsc{Insert} operator if and only if one exists.
\end{adxprop}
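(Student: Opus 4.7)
The ``only if'' direction is immediate: by construction, \textsc{ConservativeInsert} returns an operator only after explicitly verifying it strictly increases the score. For the ``if'' direction, my plan is to exhibit an ordered pair $(Y, X)$ of non-adjacent variables for which \textsc{ConservativeInsert} cannot discard the pair and therefore must return a score-increasing operator. As a first step, I would show that if a score-increasing \textsc{Insert} exists at $\Eps$, then $P(\*v)$ cannot be Markov with respect to $\Eps$: otherwise, for every $\G' \in \Eps$ and every non-adjacent pair $(X, Y)$ with $Y \in \*{Nd}_X^{\G'}$, the local Markov property would give $X \indep Y \mid \*{Pa}_X^{\G'}$ in $P$, and local consistency (Def.~\ref{def:locon}) would force every valid $\textsc{Insert}(Y, X, \*T)$ to be score-decreasing, contradicting the hypothesis.

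The heart of the argument is then an application of the stated Assumption. Fix any $\G \in \Eps$ and any DAG $\G^*$ to which $P(\*v)$ is Markov and faithful. Since $P$ is not Markov w.r.t.\ $\G$ but is Markov w.r.t.\ $\G^*$, some $d$-separation encoded in $\G$ fails in $P$, and by faithfulness of $P$ to $\G^*$ this $d$-separation also fails in $\G^*$. Applying the Assumption with $\&{H} = \G^*$ yields a non-adjacent pair $(X, Y)$ in $\G$ with $Y \in \*{Nd}_X^\G$ such that for every $\G' \in \Eps$ with $Y \in \*{Nd}_X^{\G'}$, $X \not\perp_d Y \mid \*{Pa}_X^{\G'}$ in $\G^*$. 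A second invocation of faithfulness upgrades this to $X \not\indep Y \mid \*{Pa}_X^{\G'}$ in $P$ for every such $\G'$. Using the score-change formula for inserts from \cite[Thm.~15]{chickering:2002}, each valid $\textsc{Insert}(Y, X, \*T)$ has its score change determined by a local-consistency comparison against $\*{Pa}_X^{\G'}$ for some $\G' \in \Eps$ satisfying $Y \in \*{Nd}_X^{\G'}$. Combining these, local consistency forces every valid $\textsc{Insert}(Y, X, \*T)$ to be score-increasing. In particular, none of them is score-decreasing, so \textsc{ConservativeInsert} retains the ordered pair $(Y, X)$ and returns one of these operators.

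The main obstacle I anticipate is the matching step in the final argument: one must rigorously verify that the set of valid $\textsc{Insert}(Y, X, \*T)$ operators, parameterized by $\*T$, is in exact correspondence with a family of DAGs $\G' \in \Eps$ satisfying $Y \in \*{Nd}_X^{\G'}$ in such a way that the Assumption's conclusion (which quantifies over those $\G'$) transfers to a local-consistency statement about each insert's score change. This is a bookkeeping exercise using the insert-validity conditions and decomposition of \cite{chickering:2002} rather than a conceptual obstruction. A minor secondary point is that the argument exhibits a single retained ordered pair $(Y, X)$; this matches \textsc{ConservativeInsert}'s per-direction handling of non-adjacent pairs as illustrated in Ex.~\ref{ex:safeinsert}, and the behavior on the reversed pair $(X, Y)$ is irrelevant to the conclusion.
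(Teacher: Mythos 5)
Your proposal is correct and follows essentially the same route as the paper's proof: show that a score-increasing \textsc{Insert} forces $P(\*v)$ to be non-Markov with respect to $\Eps$, invoke the Assumption with $\&{H}$ the true DAG to obtain a non-adjacent pair for which \emph{every} valid $\textsc{Insert}(Y,X,*)$ is score-increasing by faithfulness and local consistency, and conclude that \textsc{ConservativeInsert} cannot discard that pair; the converse is by construction. The ``matching step'' you flag (valid $\textsc{Insert}(Y,X,\*T)$ operators correspond to edge additions $Y \to X$ in DAGs $\G' \in \Eps$ with $Y \in \*{Nd}_X^{\G'}$) is exactly the correspondence the paper already relies on in Prop.~\ref{prop:condset}, so it is not an additional gap.
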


\begin{proof}
 Let \(\Eps^*\) indicate the true MEC underlying \(P(\*v)\).
 If there exists a valid score-increasing \textsc{Insert} operator for the current state \(\Eps\), then \(P(\*v)\) is not Markov with respect to \(\Eps\).
 Since \(P(\*v)\) is faithful to \(\Eps^*\), this implies that there exists some \(d\)-separation encoded in \(\Eps\) that does not hold in \(\Eps^*\).
 By the assumption, this implies that there exists \(X,Y\) non-adjacent in \(\Eps\) such that for every \(\G\) in \(\Eps\) with \(Y \in \*{Nd}_X^\G\), \(X \not \perp_d Y \mid \*{Pa}_X^\G\) in \(\Eps^*\) and hence \(X \nindep Y \mid \*{Pa}_X^\G\) in \(P(\*v)\).
Therefore, every \textsc{Insert}\((Y,X,*\) operator results in a score increase for \(\Eps\).
Then, \textsc{ConservativeInsert} is guaranteed to find a score-increasing \textsc{Insert}.
The reverse direction follows by construction, since \textsc{ConservativeInsert}  enumerates only valid \textsc{Insert} operators and returns one only if it increases the score.
\end{proof}

As a corollary of the above and Thm.~\ref{thm:gges}, we can also show that LGES with \textsc{ConservativeInsert} instead of \textsc{SafeInsert} is guaranteed to recover the true MEC in the sample limit,
if the assumption in Prop.~\ref{adx:prop:consinsert:corr} holds. We leave the correctness of this assumption open.

\begin{customprop}{\ref{prop:safeinsert:corr}}[Correctness of \textsc{SafeInsert}]\label{adx:prop:safeinsert:corr}
Let \(\Eps\) denote a Markov equivalence class and let \(P(\*v)\) denote the distribution from which the data \(\*D\) was generated.  
Then, as the number of samples goes to infinity, \textsc{SafeInsert} returns a valid score-increasing \textsc{Insert} operator if and only if one exists.
\end{customprop}
\begin{proof}
Assume there exists a valid score-increasing \textsc{Insert} operator for the given MEC \(\Eps\).
Then, \(P(\*v)\) is not Markov with respect to \(\Eps\).
Hence,  \(P(\*v)\) is not Markov with respect to the \(\G \in \Eps\) chosen by \textsc{SafeInsert}.
By the equivalence of the global and local Markov properties (Prop.~\ref{adx:prop:equiv:lmpgmp}), this implies that there exists \(X \in \G\) such that \(X \nindep \*{Nd}_X^\G \mid \*{Pa}_X^\G\).
Since \(P(\*v)\) is faithful to some DAG, it satisfies the \emph{composition} axiom of conditional independence \cite{pearl:88}; hence, there exists some \(Y \in \*{Nd}_X^\G\) such that \(X \nindep Y \mid \*{Pa}_X^\G\).
By the local consistency and decomposability of the scoring criterion, we have \(s(X, \*{Pa}_X^\G) < s(X, \*{Pa}_X^\G \cup \{Y\})\).
Then, \textsc{SafeInsert} will find the valid score-increasing \textsc{Insert}\((Y,X,\*T)\) operator corresponding to \(\G \cup \{Y \to X\}\).
The reverse direction is similar.
If \textsc{SafeInsert} outputs some \((X,Y,\*T)\), this implies it has found some \(X \in \G\) and \(Y \in \*{Nd}_X^\G\) such  \(s(X, \*{Pa}_X^\G) < s(X, \*{Pa}_X^\G \cup \{Y\})\), and hence  \(X \nindep Y \mid \*{Pa}_X^\G\). 
This implies that \(P(\*v)\) is not Markov with respect to \(\G\) and hence to \(\Eps\), and there exists a valid score-increasing \textsc{Insert} for  \(\Eps\).
The \textsc{Insert} output by \textsc{SafeInsert} is a valid score-increasing operator by construction.
\end{proof}

We provide pseudocode for the \(\textsc{GetSafeInsert}\) procedure in Alg.~\ref{alg:safeinsert}.
\textsc{GetSafeInsert} generalizes \textsc{SafeInsert}; instead of searching for a valid \textsc{Insert} across all non-adjacencies in \(\Eps\), it searches for a valid  \textsc{Insert} in a subset of the non-adjacencies in \(\Eps\), given by the \(candidates\) set. 
This enables the use of the prioritisation scheme of \textsc{GetPriorityInserts}.

\begin{adxprop}[Correctness of \textsc{GetPriorityInserts}] \label{adx:prop:prins}
Let \(priorityList\) be the list of sets of edges output by \textsc{GetPriorityInserts} (Alg.~\ref{alg:prins}) given a Markov equivalence class \(\Eps\) and prior assumptions \(\*S = \langle \*R, \*F\rangle\). Then, the union of all sets of edges in \(priorityList\) is equal to the set of variable pairs \((X,Y)\) that are non-adjacent in \(\Eps\).
\end{adxprop}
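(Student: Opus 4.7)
The plan is to prove Proposition~\ref{adx:prop:prins} by direct inspection of \textsc{GetPriorityInserts} (Alg.~\ref{alg:prins}), since the statement is a bookkeeping claim about how the algorithm partitions the space of candidate insertions. The equality of sets reduces to two inclusions: (i) \emph{soundness}: every pair appearing in some set of $priorityList$ is non-adjacent in $\Eps$, and (ii) \emph{completeness}: every non-adjacent pair in $\Eps$ appears in at least one set of $priorityList$.

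For soundness, the plan is to observe that \textsc{GetPriorityInserts} is constructed to rank \textsc{Insert} operators, and by Definition~\ref{def:insert}, such operators are only defined between non-adjacent pairs in $\Eps$. Consequently the outer loop of the algorithm ranges exclusively over non-adjacent pairs $(X,Y)$, so any pair placed in any priority bucket is necessarily non-adjacent in $\Eps$. This direction is immediate from the loop domain and requires no further argument.

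For completeness, the plan is a case analysis on the status of each non-adjacent pair $(X,Y)$ with respect to the prior assumptions $\*S = \langle \*R, \*F \rangle$. The four priority categories partition non-adjacent pairs according to whether the edge between $(X,Y)$ is required in $\*R$, is forbidden in $\*F$, has a required orientation, or is unconstrained. Since these possibilities are exhaustive --- every pair either appears in $\*R$, in $\*F$, or in neither --- each non-adjacent pair is assigned to exactly one category, and therefore appears in at least one element of $priorityList$. The proof will proceed by verifying this case split line-by-line against Alg.~\ref{alg:prins}.

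The main delicate point, though not a mathematical obstacle, will be to ensure that pairs forbidden by $\*F$ are still included in $priorityList$ at the lowest priority rather than dropped outright. This is essential to the semantics of the algorithm: it is precisely what allows LGES to repair misspecified prior assumptions about forbidden edges (cf.\ Sec.~\ref{sec:bgk}) and hence what makes Corollary~\ref{cor:lges} go through in the presence of misspecification. The proof therefore must explicitly point out that the forbidden-edge branch of the categorization appends to the final entry of $priorityList$, rather than discarding, so that completeness holds uniformly over all $\*S$.
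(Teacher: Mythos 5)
Your proposal is correct and matches the paper's own (one-line) proof, which likewise argues by direct inspection that the loop ranges over exactly the non-adjacent pairs and that the \texttt{if}--\texttt{elseif}--\texttt{else} chain exhaustively assigns each such pair to some bucket, with the \texttt{else} branch on line~\ref{line:prints:else} catching everything not determined by \(\*S\). Your additional remarks on soundness and on forbidden edges being demoted rather than dropped are consistent with Alg.~\ref{alg:prins} and do not change the argument.
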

\begin{proof}
    This follows from the fact that  \textsc{GetPriorityInserts} loops over all non-adjacencies in \(\Eps\), and any adjacencies not determined by \(\*S\) are added to  \(priorityList[3]\) on line~\ref{line:prints:else}.
\end{proof}

\begin{adxcorollary}[Correctness of LGES-0]\label{adx:cor:lges0}
Let \(\Eps\) denote the Markov equivalence class that results from LGES-0 (Alg.~\ref{alg:lges0}) with \textsc{SafeInsert}, initialised from an arbitrary MEC \(\Eps_0\) and given prior assumptions \(\*S = \langle \*R,\*F \rangle\) using \textsc{SafeInsert}, and let \(P(\*v)\) denote the distribution from which the data \(\*D\) was generated. Then, as the number of samples goes to infinity, \(\Eps\) is the Markov equivalence class underlying \(P(\*v)\).
\end{adxcorollary}
\begin{proof}
This follows from Prop.~\ref{prop:safeinsert:corr}, Prop.~\ref{adx:prop:prins}, and Thm.~\ref{thm:gges}.
In the forward phase, if \(P(\*v)\) is not Markov with respect to the current MEC \(\Eps\), \textsc{SafeInsert} will find some score-increasing \textsc{Insert}\((X,Y,\*T)\)(Prop.~\ref{prop:safeinsert:corr}. Since \((X,Y)\) must be in some set in the priority list returned by \textsc{GetPriorityInsert} (Prop.~\ref{adx:prop:prins}), some call to \textsc{GetSafeInsert} will find a score-increasing \textsc{Insert} operator. Therefore, each forward step is guaranteed to find a valid score-increasing \textsc{Insert} operator, if it exists. 
In the backward phase, since LGES-0 enumerates all valid \textsc{Delete} operators at each step, it is also guaranteed to find a valid score-increasing \textsc{Delete} operator when if one exists. 
Thus, LGES-0 terminates only when there are no score-increasing operators for the current MEC.
As such, LGES-0 satisfies the conditions of GGES (Alg.~\ref{alg:gges}) and its correctness follows from Thm.~\ref{thm:gges}.
\end{proof}

\begin{customcor}{\ref{cor:lges}}[Correctness of LGES]\label{adx:cor:lges}
Let \(\Eps\) denote the Markov equivalence class that results from LGES (Alg.~\ref{alg:lges}) with \textsc{SafeInsert}, initialised from an arbitrary MEC \(\Eps_0\) and given prior assumptions \(\*S = \langle \*R,\*F \rangle\), and let \(P(\*v)\) denote the distribution from which the data \(\*D\) was generated. Then, as the number of samples goes to infinity, \(\Eps\) is the Markov equivalence class underlying \(P(\*v)\).
\end{customcor}
\begin{proof}
This follows from Prop.~\ref{prop:safeinsert:corr}, Prop.~\ref{adx:prop:prins}, and Thm.~\ref{thm:gges}.
When LGES terminates, there exists no score-increasing deletion for the current MEC \(\Eps\) by construction.
By a similar argument to Cor.~\ref{adx:cor:lges0}, we can further show that there exists no score-increasing insertion for the current MEC  \(\Eps\). 
Thus, LGES satisfies the conditions of GGES (Alg.~\ref{alg:gges}) and its correctness follows from Thm.~\ref{thm:gges}.
\end{proof}

\begin{adxcorollary}[Correctness of LGES+]\label{adx:cor:lges+}
Let \(\Eps\) denote the Markov equivalence class that results from LGES+ (Alg.~\ref{alg:lges0}) with \textsc{SafeInsert}, initialised from an arbitrary MEC \(\Eps_0\) and given prior assumptions \(\*S = \langle \*R,\*F \rangle\) using \textsc{SafeInsert}, and let \(P(\*v)\) denote the distribution from which the data \(\*D\) was generated. Then, as the number of samples goes to infinity, \(\Eps\) is the Markov equivalence class underlying \(P(\*v)\).
\end{adxcorollary}
\begin{proof}
The correctness of LGES (Cor.~\ref{cor:lges}) implies that the MEC obtained on line~\ref{alg:lges+:line:lges} of LGES+ is the highest-scoring MEC. Therefore, the condition on line~\ref{alg:lges+:line:scorecomp} of LGES+ will never be true, and LGES+ outputs the same MEC as LGES. 
\end{proof}

\begin{algorithm}[t]
\caption{Generalized Greedy Equivalence Search (GGES)}
\label{alg:gges}
\KwIn{Data \(\*D\ \sim \*P(\*v)\), initial MEC \(\Eps\), scoring criterion \(S\), initial MEC \(\Eps_0\), list of phases \(phases\) in \(\{\textrm{[`forward', 'backward'], [`any']}\}\)}
\KwOut{MEC \(\Eps\) of \(\*P(\*v)\)}
\(\Eps \gets \Eps_0\)\;
\ForEach{\(phase\) in \(phases\)}{
    \Repeat{no score-increasing operators exist}{
        \(\textsc{Operate}(X,Y,\*T) \gets \textsc{GetOperator}(\Eps, \*D, S, phase)\) \;
        \(\Eps \gets \Eps + \textsc{Operate}(X, Y, \*T)\) \;
    }
}
\Return{\(\Eps\)}
\end{algorithm}

\begin{customthm}{\ref{thm:iorient}}[Correctness of \(\I\)-\textsc{Orient}] \label{adx:thm:iorient}
Let \(\Eps\) denote the Markov equivalence class that results from \(\I\)-\textsc{Orient} (Alg.~\ref{alg:iorient}) given an observational MEC \(\Eps_0\) and interventional targets \(\I\), and let  \((\*P_{\*I}(\*v))_{\*I \in \I}\) denote the family of distributions from which the data \((\*D_{\*I})_{\*I \in \I}\) was generated. Assume that \(\Eps_0\) is the MEC underlying \(P_\emptyset(\*v)\).
Then, as the number of samples goes to infinity for each \(\*I \in \I\), \(\Eps\) is the \(\I\)-Markov equivalence class underlying \((\*P_{\*I}(\*v))_{\*I \in \I}\).
\end{customthm}
\begin{proof}
Let \(\Eps^*\) denote the true \(\I\)-MEC underlying \((\*P_{\*I}(\*v))_{\*I \in \I}\).
Since \(\Eps\) only orients undirected edges in \(\Eps_0\), and \(\Eps_0\) has the same skeleton and v-structures as \(\Eps^*\), \(\Eps\) also has the same skeleton and v-structures as \(\Eps^*\).
Next, we show that for every variable pair \((X,Y)\) adjacent in \(\Eps^*\) (and hence \(\Eps\)) for which there exists some \(\*I \in \I\) with \(X \in \*I, Y \not \in \*I\), this edge is directed in both \(\Eps\) and \(\Eps^*\), and moreover, has the same direction in both.

Consider some edge  \((X,Y) \in \Eps^*\) for which there exists \(\*I \in \I\) such that \(X \in \*I, Y \not \in \*I\). 
Then, \((X,Y)\) is directed and \(\I\)-essential in \(\Eps\) \cite[Cor. 13]{hauser:2012}.
We will show that \(\Eps^*\) contains \(Y \to X\) if and only if for every such \(\*I\), \(s_{\*D_{\*I}}(y) > s_{\*D_{\*I}}(y,x)\).

\(\implies\) Assume \(\Eps^*\) contains \(Y \to X\).
If \(X \to Y \in \Eps^*\), then \(X \to Y\) in every DAG \(\G \in \Eps^*\).
This implies that in every \(\G \in \Eps^*\), there are no directed paths from \(Y \to X\) in \(\G\) and hence \(\G_{\overline{\*I}}\).
Moreover, since all edges into \(X\) are removed in \(\G_{\overline{\*I}}\), there are no directed paths from \(X \to Y\) in \(\G_{\overline{\*I}}\).
Since \(P_\*I(\*v)\) is Markov with respect to \(\G_{\overline{\*I}}\), this implies \(X \indep Y\) in \(P_{\*I}(\*v)\).
Let \(\&{H}\) denote the empty graph over variables \(\*V\).
Since \(X \indep Y\) in \(P_{\*I}(\*v)\), by the local consistency of the scoring criterion, \(\&{H}\) has a higher score than \(\&{H} \cup \{X \to Y\}\).
By the decomposability of the scoring criterion, this implies \(s_{\*D_{\*I}}(y) > s_{\*D_{\*I}}(y,x)\).
Since \(\*I\) was arbitrary, this must be true for each \(\*I \in \I\) such that \(X \in \*I, Y \not \in \*I\).

\(\impliedby\) Assume that \(s_{\*D_{\*I}}(y) > s_{\*D_{\*I}}(y,x)\) for some \(\*I \in \I\). 
Let \(\&{H}\) denote the empty graph over variables \(\*V\).
Then, since  \(s_{\*D_{\*I}}(y) > s_{\*D_{\*I}}(y,x)\), the decomposability of the scoring criterion implies that \(\&{H}\) has a higher score than \(\&{H} \cup \{X \to Y\}\).
If \(Y \nindep X\) in \(P_\*I(\*v)\), the local consistency of the scoring criterion would imply that  \(\&{H}\) has a lower score than \(\&{H} \cup \{X \to Y\}\).
By contrapositive, it must be true that \(Y \indep X\) in \(P_\*I(\*v)\).
Since \(P_\*I(\*v)\) is faithful to \(\G_{\overline{\*I}}\) for some \(\G \in \Eps^*\), this must imply that \(X,Y\) are non-adjacent in \(\G_{\overline{\*I}}\).
Since \(X,Y\) are adjacent in \(\Eps^*\), and \(X \in \*I, Y \not \in \*I\), this implies that \(\G\) and \(\Eps^*\) contain \(Y \to X\).
This further implies that if the supposition is true for some \(\*I \in \I\) with \(X \in \*I, Y \not \in \*I\), it must be true for all of them.

The argument to show that \(\Eps^*\) contains \(X \to Y\) if and only if for every \(\*I\) with \(X \in \*I, Y \not \in \*I\), \(s_{\*D_{\*I}}(y) < s_{\*D_{\*I}}(y,x)\) is analogous. 

Moreover, since these statements are true for each \(\*I \in \I\), they are also true when comparing the sum over sum over all such \(\*I\): i.e., \(\underset{\*I \in \I, X \in \*I, Y \not \in \*I}{\sum} s_{\*D_{\*I}}(y)\) vs  \(\underset{\*I \in \I, X \in \*I, Y \not \in \*I }{\sum}s_{\*D_{\*I}}(y,x)\).

Any edge that is directed in \(\Eps\) is either (a) already directed in \(\Eps_0\), in which case it is similarly directed in \(\Eps^*\), (b) oriented on lines~\ref{line:iorient:direct1} or ~\ref{line:iorient:direct2} of \(\I\)-\textsc{Orient}, in which case it is similarly directed in \(\Eps^*\) by the above argument, or (c) oriented by the Meek rules on lines ~\ref{line:iorient:meek1} or ~\ref{line:iorient:meek2}, in which case it is a consequence of edges directed due to (a) and (b), in which case it is also similarly directed in \(\Eps^*\). Moreover, the edges directed in \(\Eps^*\) are also due to (a) their being directed in \(\Eps_0\), (b) there existing some \(\*I \in \I\) which contains exactly one endpoint of that edge, or (c) their being a consequence by the Meek rules of these two edge types.
Therefore, edges directed in \(\Eps^*\) are also similarly directed in \(\Eps\), since \(\I\)-\textsc{Orient} directs each such edge type. We thus have that \(\Eps = \Eps^*\).
\end{proof}

\begin{algorithm}[H] \label{alg:consinsert}
\caption{\textsc{GetConservativeInsert}}
\KwIn{MEC \(\Eps\), DAG \(\G \in \Eps\), data \(\*D \sim P(\*v)\), edge insertion candidates \(candidates\), scoring criterion \(S\)} 
\KwOut{A valid score-increasing \textsc{Insert} operator for \(\Eps\) from the adjacencies in  \(candidates\), or \(\emptyset\) if none is found.} 
\(\Delta S_{max} \gets - \infty\)\;
\((X_{max}, Y_{max}, \*T_{max}) \gets \emptyset \)\;
\ForEach{\((X,Y)\) in \(candidates\)}{
    \uIf{\(X \in \*{Nd}_Y^\G\) and \(s(Y, \*{Pa}_Y^\G) < s(Y, \*{Pa}_Y^\G \cup \{X\})\)}{
        \(\Delta S_{xy} \gets - \infty\)\;
        \((\hat{X}, \hat{Y}, \hat{\*T}) \gets \emptyset \)\;
        \ForEach{valid \(\*T \subseteq \*{Ne}_Y^\Eps \setminus \*{Adj}_X^{\Eps}\)}{
            \(\Delta S \gets s\bigl(Y, (\*{Ne}_Y^\Eps \cap \*{Adj}_X^\Eps) \cup \mathbf{T} \cup \*{Pa}_Y^{\Eps} \cup \{X\}\bigr)\;
                    - s\bigl(Y, (\*{Ne}_Y^\Eps \cap \*{Adj}_X^\Eps) \cup \mathbf{T} \cup \*{Pa}_{Y,}^{\Eps}\bigr)\)\;
            \uIf{\(\Delta S > \Delta S_{xy}\)}{
                \(\Delta S_{xy} \gets \Delta S\)\;
                \((\hat{X}, \hat{Y}, \hat{\*T}) \gets (X, Y, \*T) \)\;
            } \uElseIf{\(\Delta S < 0\)}{
                \(\Delta S_{xy} \gets -\infty\)\;
                break\;
            }
        }
        \uIf{\(\Delta S_{xy} > \Delta S_{max}\)}{
            \(\Delta S_{max} \gets \Delta S_{xy}\)\;
            \((X_{max}, Y_{max}, \*T_{max}) \gets (\hat{X}, \hat{Y}, \hat{\*T})\)\; 
        }
    }
}
\Return{\((X_{max}, Y_{max}, \*T_{max})\)}
\end{algorithm}

\begin{algorithm}[H]\label{alg:safeinsert}

\caption{\textsc{GetSafeInsert}}
\KwIn{MEC \(\Eps\), DAG \(\G \in \Eps\), data \(\*D \sim P(\*v)\), edge insertion candidates \(candidates\), scoring criterion \(S\)} 
\KwOut{A valid score-increasing \textsc{Insert} operator for \(\Eps\) from the adjacencies in  \(candidates\), or \(\emptyset\) if none is found.} 
\(\Delta S_{max} \gets - \infty\)\;
\((X_{max}, Y_{max}, \*T_{max}) \gets \emptyset \)\;
\ForEach{\((X,Y)\) in \(candidates\)}{
    \uIf{\(X \in \*{Nd}_Y^\G\) and \(s(Y, \*{Pa}_Y^\G) < s(Y, \*{Pa}_Y^\G \cup \{X\})\)}{
        \ForEach{valid \(\*T \subseteq \*{Ne}_Y^\Eps \setminus \*{Adj}_X^{\Eps}\)}{
            \(\Delta S \gets s\bigl(Y, (\*{Ne}_Y^\Eps \cap \*{Adj}_X^\Eps) \cup \mathbf{T} \cup \*{Pa}_Y^{\Eps} \cup \{X\}\bigr)\;
                - s\bigl(Y, (\*{Ne}_Y^\Eps \cap \*{Adj}_X^\Eps) \cup \mathbf{T} \cup \*{Pa}_{Y,}^{\Eps}\bigr)\)\;
            \uIf{\(\Delta S > \Delta S_{max}\)}{
                \(\Delta S_{max} \gets \Delta S\)\;
                \((X_{max}, Y_{max}, \*T_{max}) \gets (X,Y,\*T)\)\; 
            }
        }
    }
}
\Return{\((X_{max}, Y_{max}, \*T_{max})\)}
\end{algorithm}

\begin{algorithm}[H]\label{alg:prins}
\caption{\textsc{GetPriorityInserts}}
\KwIn{MEC \(\Eps\), prior assumptions \(\*S = \langle \*R, \*F \rangle\)}
 \(priorityList \gets [\{\} \times 4]\)\;
\ForEach{\((X,Y)\)  non-adjacent in \(\Eps\)}{
    \If{\(X -* Y \in \*R\)}{
        Add \((X,Y)\) to \(priorityList[1]\) \tcp*{required}
    }\uElseIf{\(Y \to X \in \*R\)}{
        Add \((X,Y)\) to \(priorityList[2]\) \tcp*{weakly required}
    }\uElseIf{\(X \to Y \in \*F\) or \(X - Y \in \*F\)}{
        Add \((X,Y)\) to \(priorityList[4]\) \tcp*{forbidden}
    }
    \uElse{
        Add \((X,Y)\) to \(priorityList[3]\) \label{line:prints:else} \tcp*{ambivalent}
    }
}

\Return{\(priorityList\)}

\end{algorithm}

\begin{algorithm}[H]\label{alg:pagtodag}
\caption{\textsc{PdagToDag}}
\KwIn{MEC \(\Eps\)}
 \(\Eps' \gets \Eps\)\;
\While{there exists an undirected edge in \(\Eps\)}{
  Choose a vertex \(V\) in \(\Eps'\) such that (i) \(V\) is a sink node in \(\Eps'\) (no outgoing directed edges), and
  (ii) all undirected neighbors of \(V\) are pairwise adjacent in \(\Eps'\) (form a clique)\;
  \Indm
  \ForEach{undirected edge \((U,V)\) in \(\Eps\)}{
    Orient \((U,V)\) as \(U \to V\) in \(\Eps\)\;
  }
  Remove \(V\) from \(\Eps'\)\;
}
\Return{\(\Eps\)}
\end{algorithm}

\begin{algorithm}[H]
\caption{Less Greedy Equivalence Search-0 (LGES-0)}
\label{alg:lges0}
\KwIn{Data \(\*D\sim \*P(\*v)\), scoring criterion \(S\), prior assumptions \(\*S = \langle \*R, \*F  \rangle\), initial MEC \(\Eps_0\), insertion strategy \(GetInsert\) in \(\{\textsc{GetSafeInsert}, \textsc{GetConservativeInsert}\}\)}
\KwOut{MEC \(\Eps\) of \(\*P(\*v)\)}
\(\Eps \gets \Eps_0\) \tcp*{allows initialisation if preferred by user}

\Repeat{no improving insertions exist}{
    \(\G \gets\) some DAG in \(\Eps\)\tcp*{forward phase}
    \(priorityList \gets \textsc{GetPriorityInserts}(\Eps, \G, \*S)\)\; 
    \ForEach{\(candidates\) in \(priorityList\)}{
        \((X_{max},Y_{max},\*T_{max}) \gets GetInsert(\Eps, \G, \*D, candidates, S)\)\;
        \If{ \((X_{max},Y_{max},\*T_{max})\) is found}{
         \(\Eps \gets \Eps + \textsc{Insert}(X_{max}, Y_{max}, \*T_{max})\)\;
            break \tcp*{no need to check lower priority}
        }
    }
}
\Repeat{no score-increasing reversals exist}{
    \(\Eps \gets \Eps +\) highest-scoring \(\textsc{Turn}(X, Y, \*T)\) \tcp*{turning phase}

}
\Repeat{no score-increasing deletions exist}{
    \(\Eps \gets \Eps +\) highest-scoring \(\textsc{Delete}(X, Y, \*T)\) \tcp*{backward phase}

}
\Return{\(\Eps\)}
\end{algorithm}

\begin{algorithm}[H]
\caption{Less Greedy Equivalence Search+ (LGES+)}
\label{alg:lges+}
\KwIn{Data \(\*D\sim \*P(\*v)\), scoring criterion \(S\), prior assumptions \(\*S = \langle \*R, \*F  \rangle\), initial MEC \(\Eps_0\), insertion strategy \(GetInsert\) in \(\{\textsc{GetSafeInsert}, \textsc{GetConservativeInsert}\}\)}
\KwOut{MEC \(\Eps\) of \(\*P(\*v)\)}
\(\Eps \gets\) LGES(\(\*D, S, \*S, \Eps_0, GetInsert\)) \label{alg:lges+:line:lges}\;\(\&{D} \gets\) all deletions valid for \(\Eps\)\;

\While{\(|\&{D}|>0\)}{

    \((X_{max}, Y_{max}, \*T_{max}) \gets\) highest-scoring deletion in \(\&{D}\)\;
    \(\Eps' \gets \Eps + \textsc{Delete}(X_{max}, Y_{max}, \*T_{max})\)\;
    \(\Eps' \gets\) LGES\(^*\)(\(\*D, S, \*S, \Eps_0, GetInsert, X_{max}, Y_{max}\))\;
    \tcp{LGES\(^*\) is a modified version of LGES that excludes edge insertions between the given \(X,Y\)}
    \If{\(S(\Eps', \*D) > S(\Eps, \*D)\) \label{alg:lges+:line:scorecomp}}{
        \(\Eps \gets \Eps'\)\;
        \(\&{D} \gets\) all deletions valid for \(\Eps\)\;
    }
    \uElse{
        \(\&{D} \gets \&{D} \setminus \{(X_{max}, Y_{max}, \*T_{max})\} \)\;
    }

}
\Return{\(\Eps\)}
\end{algorithm}

\section{Experiments} \label{adx:sec:experiments}

\paragraph{Compute details.} All experiments were run on a shared compute cluster with 2x Intel Xeon Platinum 8480+ CPUs (112 cores total, 224 threads) at up to 3.8 GHz, and 210 MiB L3 cache. %

\subsection{Learning from observational data}\label{adx:sec:exp:obs}

\paragraph{Baseline details} We ran the PC algorithm using significance level \(\alpha = 0.05\) for conditional independence tests, with the null hypothesis of independence. 
Since NoTears often outputs cyclic graphs, we post-processed the output graph by greedily removing the lowest-weight edges until it was acyclic, following \cite{ng:etal21}.
This was done so that we could convert the output to a valid CPDAG for comparison with other algorithms.
We ran NoTears with default parameters from the \texttt{causalnex} library, which uses a weight threshold of \(w = 0\); note that performance may vary depending on the choice of parameters, particularly the weight threshold and the acyclicity penalty parameter. However, since the implementation of NoTears is very compute-heavy, we were not able to tune these parameters.
For fair comparison, we implemented all GES variants, including XGES, LGES, and GIES, by modifying the code provided by \cite{gamella:22}.\footnote{\texttt{https://github.com/juangamella/ges}}
Discrepancies in runtime of XGES between our results and the results of \cite{nazaret:24} may result from the latter's use of an optimized C++ implementation, which includes an implementation of the search operators that they used in XGES but not in GES.

\subsubsection{Synthetic data details for Experiment ~\ref{sec:experiments:obs}}
For the results shown in Fig.~\ref{fig:obs-combined}, we draw Erdős–Rényi graphs with \(p\) variables and \(\{2p\) edges in expectation (ER2), for \(p \in \{5, 10, 15, 20, 25, 35, 50, 75, 100, 150\}\). For each \(p\), we sample 50 graphs and generate linear-Gaussian data for each graph. Following \cite{ng:etal21}, we draw weights from \(\&{U}([-2,-0.5] \cup [0.5, 2])\).
In some cases, the resulting weight matrix was (almost) singular, in which case each row of weights was \(\ell_1\) normalized.
We draw noise means from \(\&{N}(0,1)\) and noise variances from \(\&{U}([0.1, 0.5])\). 
We obtain \(n=10^4\) samples per dataset via \texttt{sempler} \cite{gamella:22}.

\subsubsection{Further baselines and metrics for Experiment ~\ref{sec:experiments:obs}} \label{adx:sec:exp:obs:metrics}
In Fig.~\ref{adx:fig:obs-combined}, we present additional baselines and accuracy metrics for the setting in Sec.~\ref{sec:experiments:obs}, including the particular types of structural errors (excess adjacencies, missing adjacencies, incorrect orientations). As in the case of SHD, LGES outperforms other algorithms across these metrics, with \textsc{ConservativeInsert} outperforming \textsc{SafeInsert}.

\begin{figure}[H]
  \centering
  \includegraphics[width=0.78\linewidth]{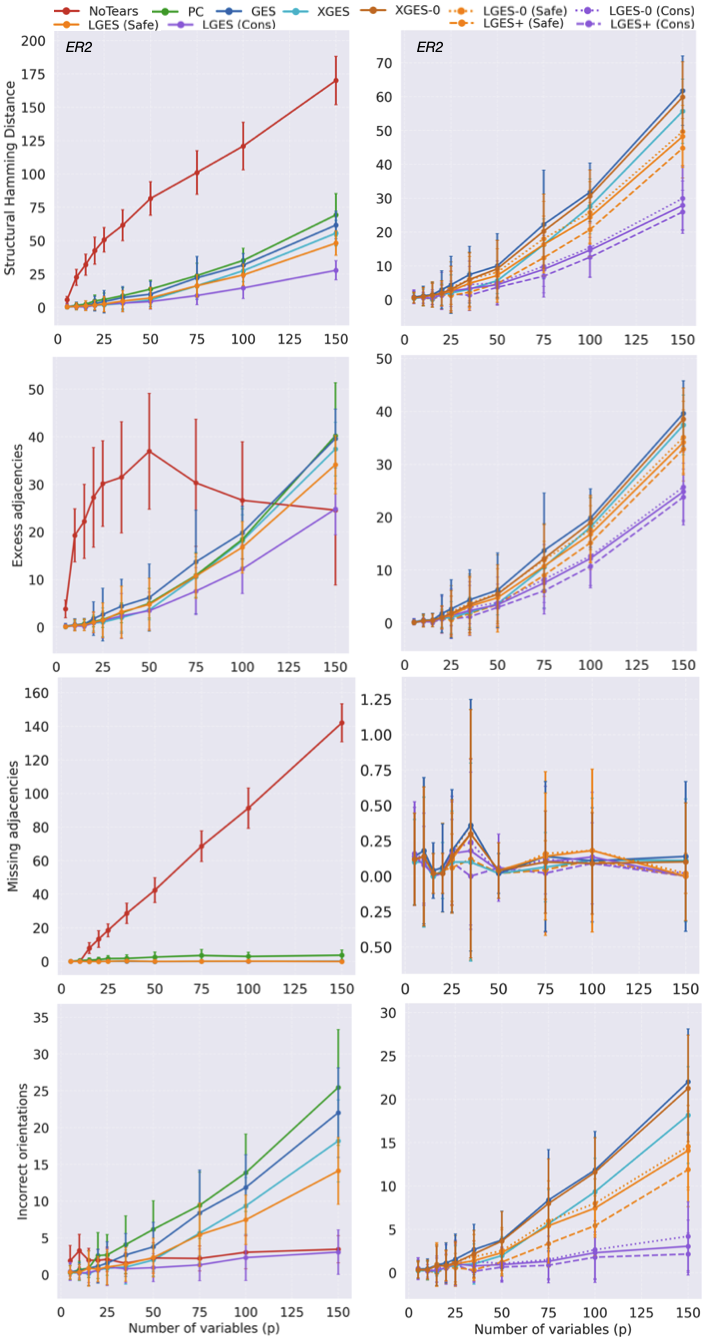}

    \caption{Structural error of algorithms on \(50\) simulated observational datasets from Erdős–Rényi graphs with \(p\) variables and \(2p\) edges in expectation,  given \(n=10^4\) samples and  no prior knowledge (Sec ~\ref{adx:sec:exp:obs:metrics}). \textbf{Lower is better} (more accurate / faster) across all plots. \textbf{(Left)} Common baselines. GES-style algorithms outperform PC and NoTears in accuracy. \textbf{(Right)} GES variants. LGES-0, LGES, and LGES+ are the less greedy variants of GES, XGES-0, and XGES respectively. Each less greedy algorithm improves on its greedy counterpart in accuracy.
  }
  \label{adx:fig:obs-combined}
  \vspace{-10pt}
\end{figure}

The behaviour of NoTears is more variable. 
It misses significantly more adjacencies than all other methods\textemdash approximately linearly many in the number of variables.
It also includes many more excess adjacencies than the other algorithms up to \(p = 50\), after which the number of excess adjacencies begins to decline, ultimately approaching that of LGES for \(p = 150\); however, this could be explained by the increasing number of adjacencies that are also missed by NoTears.
We show in the next section how, as a result, the F1 score of NoTears is low across all graph sizes (Fig.~\ref{adx:fig:obs_no_prior_class}).

\begin{figure}[t]
    \centering
    \includegraphics[width=\linewidth]{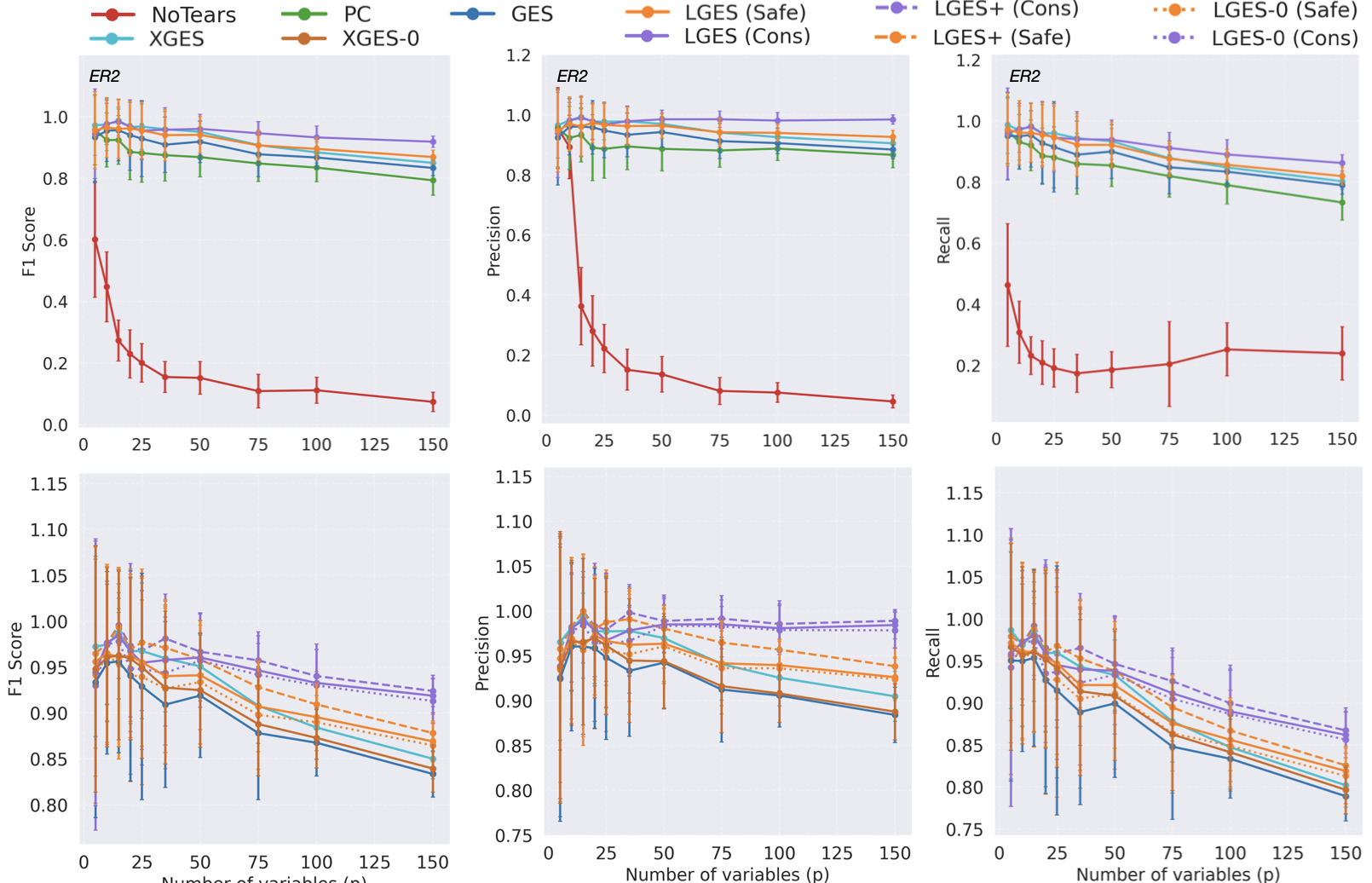}
    \caption{Binary classification accuracy of algorithms on 50 simulated observational datasets from Erdős–Rényi graphs with \(p\) variables and \(2p\) edges in expectation, given \(n=10^4\) samples and no prior knowledge  (Sec.~\ref{adx:sec:exp:obs:f1}). \textbf{Higher is better} across all plots. An MEC is said to contain an edge \((X,Y)\) if it contains either \(X - Y\) or \(X \to Y\) (but not \(Y \to X\)). \textbf{(Top)} Common baselines. GES-style algorithms outperform PC and NoTears in accuracy. \textbf{(Bottom)} GES variants. LGES-0, LGES, and LGES+ are the less greedy variants of GES, XGES-0, and XGES respectively. Each less greedy variant improves on its greedy counterpart in accuracy.}
    \label{adx:fig:obs_no_prior_class}
\end{figure}

\subsubsection{Precision, recall, and F1 score for Experiment~\ref{sec:experiments:obs}} \label{adx:sec:exp:obs:f1}
Next, in addition to structural error, we evaluate accuracy by considering causal discovery as a binary classification task.
We use the task definition provided in \cite{nazaret:24}: an MEC \(\Eps\) is said to contain an edge \((X,Y)\) if it contains either \(X \to Y\) or \(X - Y\) (but not \(Y \to X\)). 
The results are shown in Fig.~\ref{adx:fig:obs_no_prior_class}.
For graphs with \(p < 20\) nodes, we find that GES and LGES have similar F1 scores. 
They both outperform PC, which in turn substantially outperforms NoTears.
For \(p > 20\), LGES+ (Conservative)  dominates, followed by  LGES (Conservative).  
For large \(p\), LGES (Conservative) achieves a substantially higher F1 score than GES and other algorithms except LGES+.
For instance, for \(p = 150\), LGES (Conservative) achieves an F1 score just under 0.95, whereas GES's F1 score is slightly under 0.85.

\subsubsection{Further experiments with varying edge densities} \label{adx:sec:exp:obs:er3}
In Fig.~\ref{adx:fig:obs-no-prior-er1-er3}, we provide results for select baselines on Erdős–Rényi graphs with \(p\) variables and \(\{1,3\}\cdot p\) edges in expectation following the set-up in Experiment~\ref{sec:experiments:obs}. 
The relative performance of algorithms is similar to the ER2 case (Fig.~\ref{adx:fig:obs-combined}), though PC and GES achieve similar accuracy on ER3 graphs.
LGES+ (Conservative) is the most accurate overall, achieving \(2\times\) less structural error than GES on ER1 and ER3 graphs.
For instance, graphs with \(150\) variables and \(450\) edges in expectation, LGES (Conservative) achieves SHD \(\approx 40\), and LGES+ (Conservative) \(\approx 30\). 
All less greedy algorithms are faster and more accurate than GES, with LGES-0 and LGES being up to \(10\times\) faster.

\begin{figure}[t]
  \centering
  \includegraphics[width=0.78\linewidth]{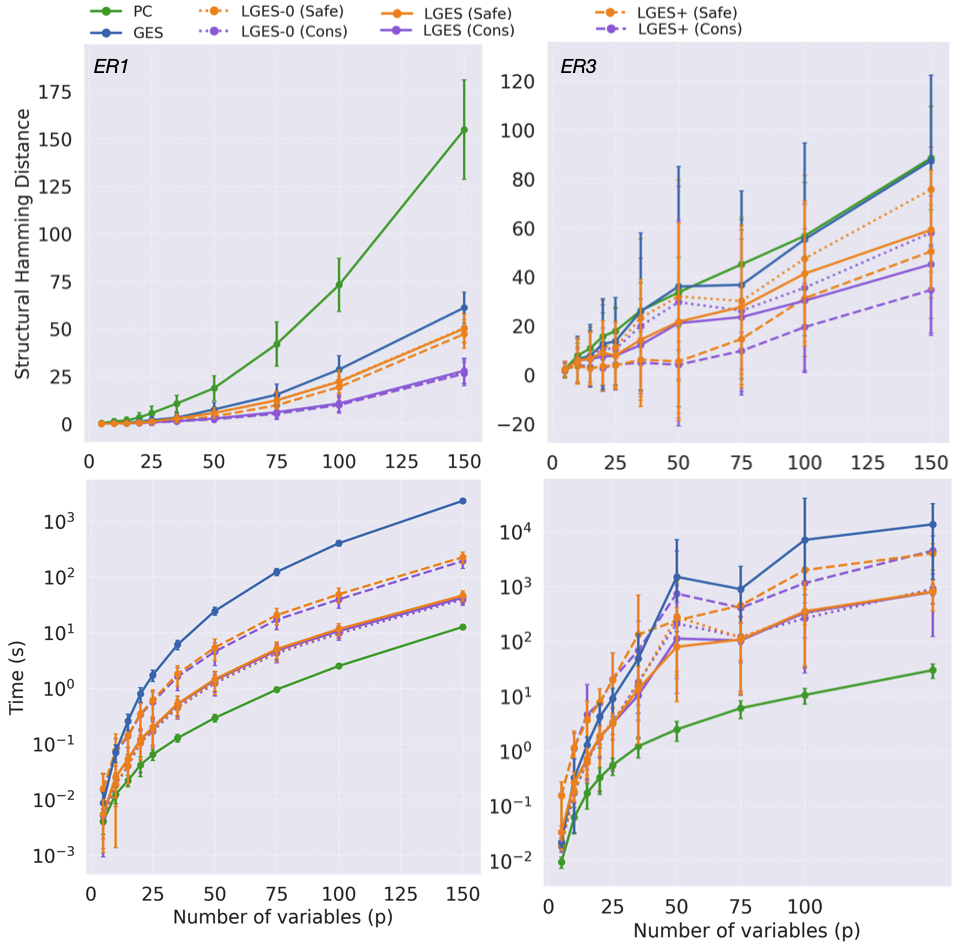}
  \caption{Performance of algorithms on 50 simulated observational datasets from Erdős–Rényi graphs with \(p\) variables and \textbf{(left)} \(p\) edges and \textbf{(right)} \(3p\) edges in expectation, given \(n=10^4\) samples and no prior knowledge (Sec. ~\ref{adx:sec:exp:obs:er3}).  \textbf{Lower is better} (more accurate / faster) across all plots. See Fig.~\ref{fig:obs-combined} for graphs with \(2p\) edges in expectation. LGES (Conservative) is the fastest GES variant across edge densities, whereas LGES+ (Conservative) is the most accurate overall algorithm across edge densities.}
  \label{adx:fig:obs-no-prior-er1-er3}
  \vspace{-10pt}
\end{figure}

\subsubsection{Further experiments with smaller datasets} \label{adx:sec:exp:obs:small} 
To investigate performance in the small-sample setting, we conduct experiments with \(n = 10^3\) for ER2 graphs with \(p \in \{10, 25, 50, 100\}\) variables. The results are summarized in Fig.~\ref{adx:fig:small_sample}.

As in previous experiments, LGES (Safe and Conservative) outperforms GES in runtime and accuracy across all graph sizes, with \textsc{ConservativeInsert} yielding higher accuracy than \textsc{SafeInsert}. 
Interestingly, the PC algorithm outperforms LGES for the case of \(p=100\), suggesting PC's usefulness for settings where the number of samples is small compared with the number of variables.

\subsubsection{Further experiments with larger graphs} \label{adx:sec:exp:obs:large}
We scaled up Experiment~\ref{sec:experiments:obs} to graphs with \(p \in \{175, 250\}\) variables.
We ran LGES (Safe and Conservative) and PC.
We were unable to continue running XGES, GES, and NoTears due to time and compute constraints; for instance, GES took over \(10^4\) seconds without terminating for \(p = 175\) for a single trial.
The results are summarized in Table~\ref{adx:tab:large_graphs}. 

LGES both with \textsc{SafeInsert} and with \textsc{ConservativeInsert} is substantially more accurate (in terms of SHD and F1 score) than PC, with \textsc{ConservativeInsert} achieving less than half the structural error of PC. While PC is faster than LGES, LGES is much faster than GES, taking only \(\approx\)5-6 minutes for \(p=175\) and \(\approx\)15-20 minutes for \(p=250\).  Recall that for \(p=150\), GES was already approaching a runtime of \(10^4\) seconds (>2 hours) (Fig.~\ref{fig:obs-no-prior}).

\begin{figure}[t]
    \centering
    \includegraphics[width=0.78\linewidth]{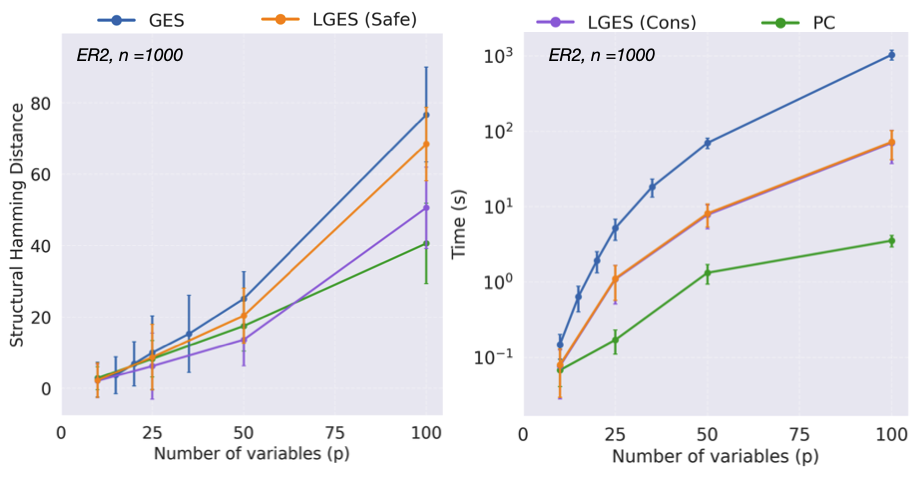}
    
    \caption{Performance of algorithms on 50 simulated observational datastets from Erdős–Rényi graphs with \(p\) variables and \(2p\) edges in expectation, given the smaller dataset of \(10^3\) samples and no prior knowledge (Sec~\ref{adx:sec:exp:obs:small}). In terms of accuracy, LGES (Conservative) outperforms other algorithms except for the case of \(p=100\), when PC is more accurate.}
    \label{adx:fig:small_sample}
\end{figure}

\begin{table}[ht]
\centering
\small
\setlength\tabcolsep{5pt}
\renewcommand{\arraystretch}{1.2}

\begin{tabular}{llcc}
\toprule
\textbf{Metric} & \textbf{Method} & \(p=175\) & \(p=250\) \\
\midrule
SHD & LGES-0 (Conservative) & 42.04 \(\pm\) 11.51 & 83.06 \(\pm\) 14.42 \\
& LGES (Conservative) & \textbf{39.457 \(\pm\) 10.44} & \textbf{82.80 \(\pm\) 14.73} \\
 & LGES-0 (Safe) & 70.62 \(\pm\) 11.68 & 135.24 \(\pm\) 14.31 \\
 & LGES (Safe) & 67.65 \(\pm\) 12.31 & 133.22 \(\pm\) 14.07 \\
 & PC & 91.34 \(\pm\) 17.17 & 166.20 \(\pm\) 24.34 \\
\midrule
Runtime & LGES-0 (Conservative) & 315.83 \(\pm\) 73.23 & 989.37 \(\pm\) 205.53 \\
& LGES (Conservative) & 326.44 \(\pm\) 73.166 & 1019.13 \(\pm\) 198.12 \\
 & LGES-0 (Safe) & 337.45 \(\pm\) 78.43 & 1058.11 \(\pm\) 221.32 \\
 & LGES (Safe) & 348.23 \(\pm\) 80.57 & 1095.71 \(\pm\) 209.10 \\
 & PC & \textbf{29.20 \(\pm\) 9.50} & \textbf{85.67 \(\pm\) 23.52} \\
\bottomrule
\end{tabular}
\vspace{1em}
\caption{Performance of algorithms (mean \(\pm\) std) on 50 simulated observational datasets from large Erdős–Rényi graphs with \(p\) variables and \(2p\) edges in expectation, given \(n=10^4\) samples and no prior knowledge (Sec.~\ref{adx:sec:exp:obs:large}). \textbf{Lower is better} (more accurate / faster) across all metrics.
Other algorithms are omitted as they exceeded the \(10^4\) second bound on runtime for these graph sizes, prohibiting repeated trials.
Among these, PC is the fastest, but makes twice as many structural errors as LGES (Conservative).}
\label{adx:tab:large_graphs}
\end{table}

\subsection{Learning with prior knowledge}\label{adx:sec:exp:bgk}

In Fig.~\ref{adx:fig:obs-init-combined}, we present more detailed plots of runtime and SHD for the setting in Sec.~\ref{sec:experiments:bgk}, including background knowledge comprising \(m'=m/2\) and \(m'=3m/4\) edges for a ground truth graph with \(m\) edges.
Resutls for \(m'=m/2\)  follow a very similar trend to those with \(m'=3m/4\) discussed in Sec.~\ref{sec:experiments:bgk}.
Initialisation outperforms prioritization in LGES when the expert knowledge is mostly correct (\(\tt{fc} \geq 0.75\)), but prioritization does better otherwise.

\begin{figure}[t]
  \centering
  \includegraphics[width=0.78\linewidth]{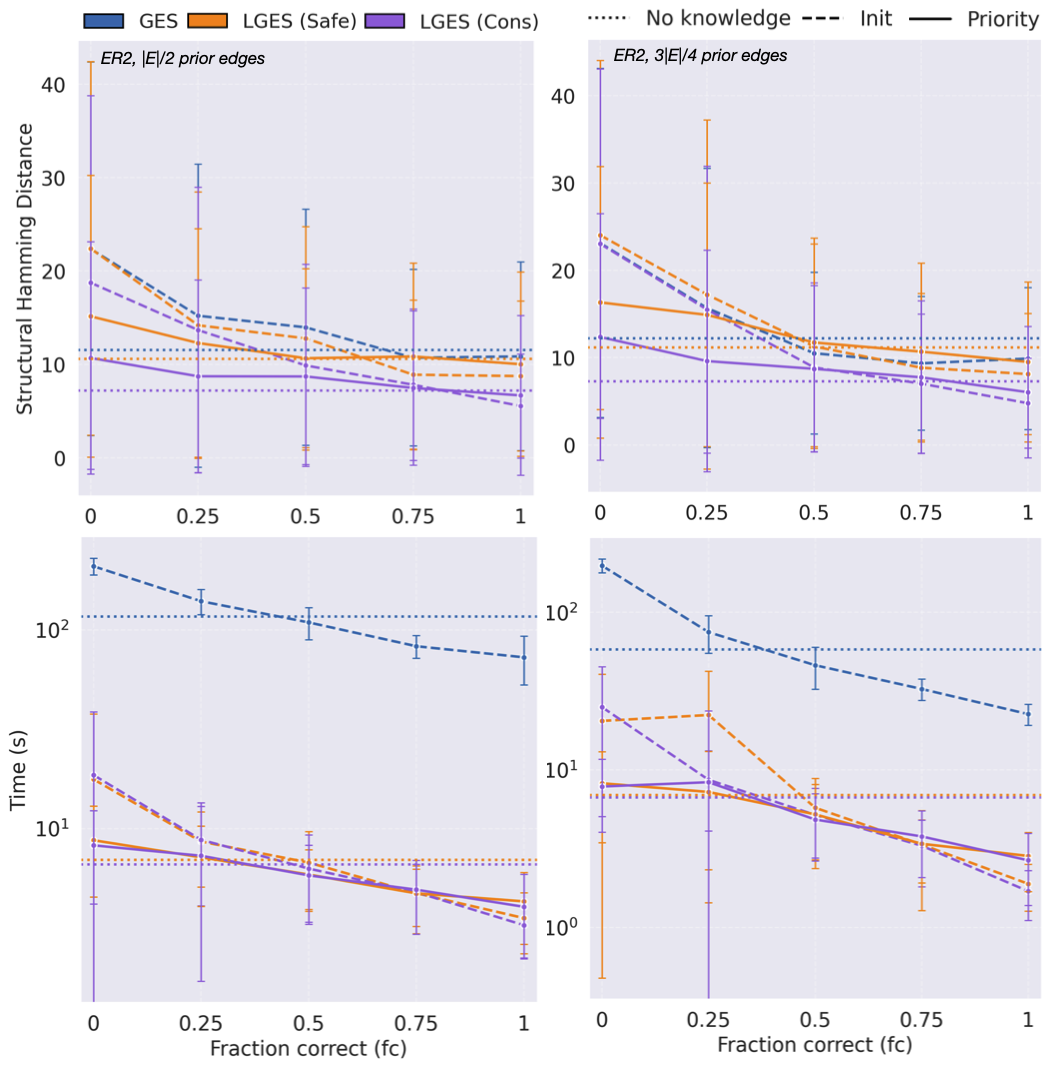}
\caption{Performance of algorithms on 50 simulated observational datasets from Erdős–Rényi graphs with \(50\) variables and \(100\) edges in expectation given \(n=10^3\) samples (Experiment ~\ref{sec:experiments:bgk}, Sec.~\ref{adx:sec:exp:bgk}). \textbf{Lower is better} (more accurate / faster) across all plots.  We vary the correctness of the prior knowledge on the x-axis; higher \(\tt{fc}\) indicates a higher fraction of knowledge that is correct. \textbf{(Left)} Algorithms are given  \(m'=m/2\) required edges as knowledge, for a true graph containing \(m\) edges. \textbf{(Right)} Similar, but with \(m'=3m/4\). LGES' prioritization strategy is more robust to misspecification in the knowledge (\(\leq 75\%\) correct) than initialization with the same knowledge}
\label{adx:fig:obs-init-combined}
\end{figure}

\subsection{Learning from interventional data}\label{adx:sec:exp:int}

In this section, we further discuss the performance of algorithms given interventional data in Sec.~\ref{sec:experiments:int}.
We observed that LGES (Safe and Conservative) followed by \(\I-\)\textsc{Orient} has higher SHD from the ground truth than LGIES, Safe and Conservative respectively.
We hypothesize that this is because, in our experiments, LGES only uses \(10^4\) observational samples during the MEC learning phase. It uses the interventional samples only to orient edges using \(\I\)-\textsc{Orient}.
In contrast, LGIES uses \(10^4 + k \cdot 10^3\) samples in the MEC learning phase given \(k\) interventions, since it uses interventional data throughtout learning, and we generate \(10^3\) interventional samples per target.
Moreover, since we choose \(k = p/10\) (where \(p\) is the number of variables), LGIES is making use of much more data than LGES for learning the MEC.
Although GIES is known not to be asymptotically correct \cite{wang:2017}, this further motivates research into an asymptotically correct causal discovery algorithm that can make use of interventional data throughout learning.

\subsection{Real-world protein signaling data}\label{adx:sec:exp:sachs}

\paragraph{Sachs dataset.} We compare GES and LGES on a real-world protein signaling dataset \cite{sachs:etal05}.
The observational dataset consists of 853 measurements of 11 phospholipids and phosphoproteins.
We compare the output of our methods with the gold-standard inferred graph \cite[Fig. 3]{sachs:etal05}), containing 11 variables and 17 edges. 
The dataset is continuous but violates the linear-Gaussian assumption.\footnote{\texttt{https://www.bnlearn.com/research/sachs05/}}
We test our methods both on the original continuous dataset and on a discretized version (3 categories per variable corresponding to low, medium, and high concentration) from the \texttt{bnlearn} repository.\footnote{\texttt{https://www.bnlearn.com/book-crc-2ed/}} 

\paragraph{Results.} The learned graphs provided in Fig.~\ref{adx:fig:sachs}. 
All algorithms output the same MEC and thus have the same accuracy in both settings. With discrete data, each algorithm had an SHD of 9 edges from the reference MEC, all of which were missing adjacencies. With continuous data, each algorithm had an SHD of 11 edges from the reference MEC, 9 of which were missing adjacencies and 2 of which were incorrect orientations.

\begin{figure}[t]
    \centering
    \begin{subfigure}[t]{0.49\linewidth}
        \centering
        \includegraphics[width=\linewidth]{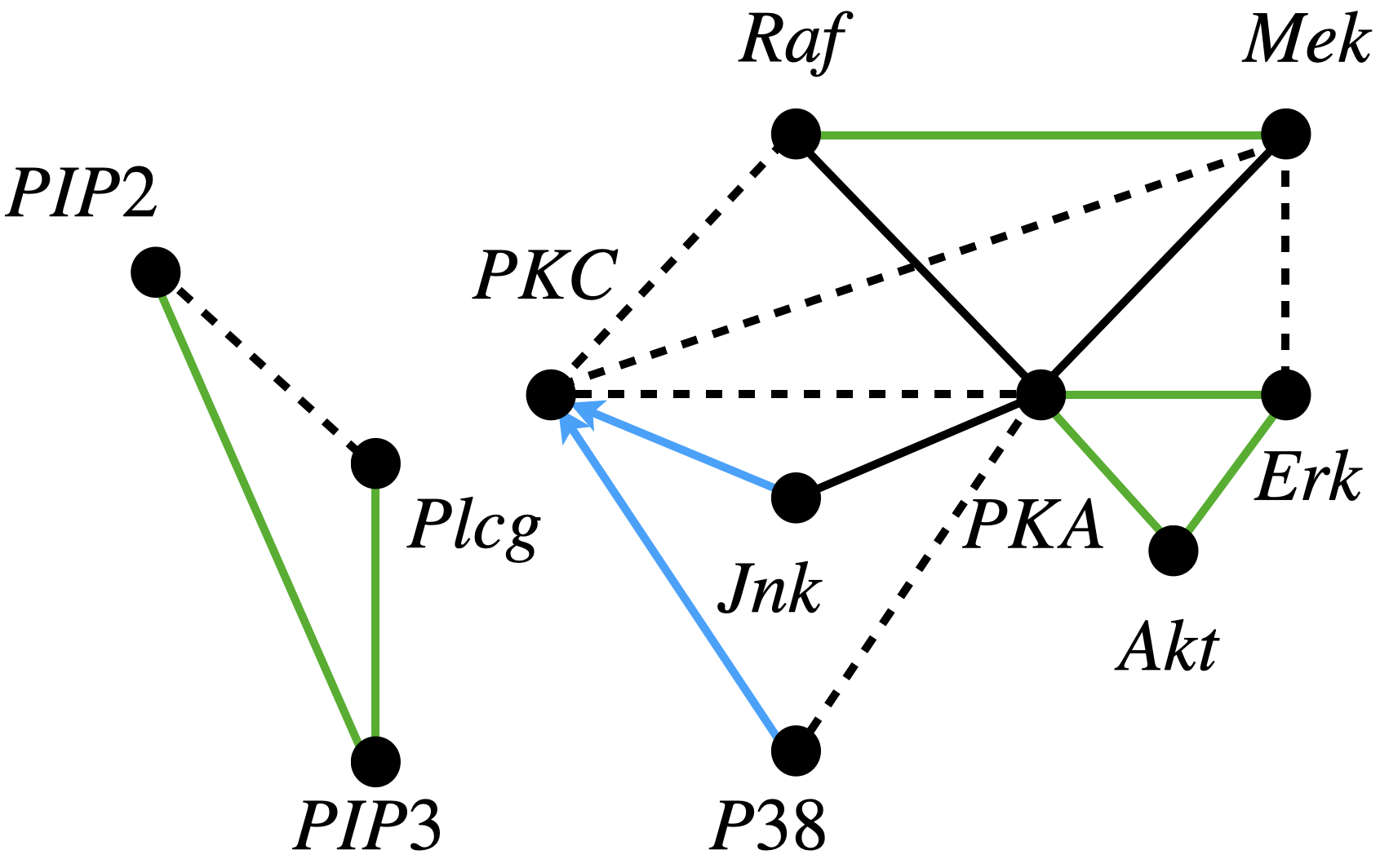}
        \caption{Continuous data}
        \label{adx:fig:sachs:cont}
    \end{subfigure}
    \begin{subfigure}[t]{0.49\linewidth}
        \centering
        \includegraphics[width=\linewidth]{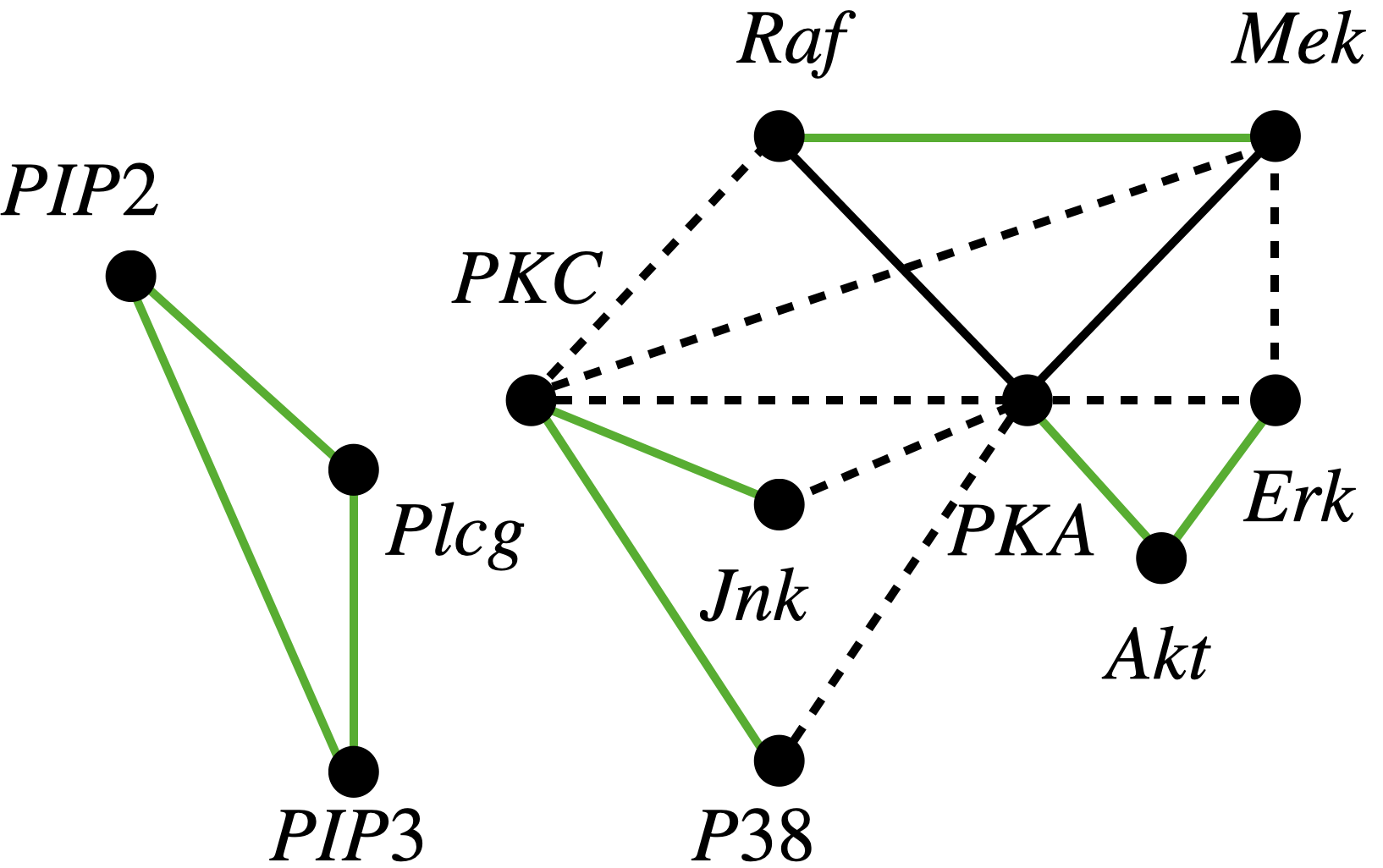}
        \caption{Discretized data}
        \label{adx:fix:sachs:discrete}
    \end{subfigure}
    \caption{Comparison between the reference MEC and the learned MEC for \(n=853\) observational samples from the Sachs protein-signaling dataset \cite{sachs:etal05}.
    For both continuous and discretized data, the three algorithms (GES, LGES (\textsc{SafeInsert}) and LGES (\textsc{ConservativeInsert})) return the same MEC, so only one learned graph is shown per panel.  
    Green solid lines indicate edges correctly recovered by the algorithms. 
    Blue solid lines indicate edges misoriented by the algorithms. 
    Black dashed lines indicate edges missed by the algorithms.
        \textbf{(a)} Continuous data: nine edges are missed and two are misoriented: \(Jnk \to Pkc\) and \(P38 \to PKC\), both undirected in the reference MEC.
        \textbf{(b)} Discretised data: nine edges are missed and none are misoriented.}
    \label{adx:fig:sachs}
\end{figure}

\section{Frequently Asked Questions}

\begin{enumerate}[label=Q\arabic*., ref=Q\arabic*]
    \item What is the difference between score-based and constraint-based causal discovery?
    
    \textbf{Answer.} Constraint-based and score-based approaches to causal discovery solve the same problem but in different ways. Constraint-based approaches such as PC \cite{spirtes:etal00} and Sparsest Permutations \cite{raskutti:19} use statistical tests, usually for conditional independence, to learn a Markov equivalence class from data. Score-based approaches such as Greedy Equivalence Search  \cite{chickering:2002, meek:97} instead attempt to maximize a score (for e.g., the Bayesian Information Criterion or BIC \cite{schwarz:78}) that reflects the fit between graph and data. There is no general claim about which of these methods is superior; we refer readers to \cite{tsamardinos:etal06} for an extensive empirical analysis, who found, for instance, that GES outperforms PC in accuracy across various sample sizes \cite[Tables 4, 5]{tsamardinos:etal06}.
    
    \item I thought causal discovery was only one problem, but it seems the paper claims to be solving three different tasks: observational observational learning, interventional learning, learning with prior knowledge. Can you elaborate on these tasks (why they are different, how they relate, etc.)?

    \textbf{Answer.} The most well-studied problem in causal discovery is that of learning causal graphs from only observational data. However, algorithms for this task have a few limitations. Firstly, they are computationally expensive and often fail to produce quality estimates of the true Markov equivalence class from finite samples. 
    This motivates using prior knowledge in the search to produce better quality estimates of the true Markov equivalence class and do so faster.
    Secondly, algorithms for learning from observational data only identify a Markov equivalence of graphs, since this is the most informative structure that can be learned from observational data. However, MECs can be quite large, and thus uninformative for downstream tasks such as causal inference. When interventional data is available, more edges in the true graph become identifiable. 
    This motivates using interventional data to identify a smaller and more informative interventional Markov equivalence class\textemdash the task of interventional learning. When no prior assumptions or interventional data are available, these tasks collapse to observational learning.
    \item If GES is asymptotically consistent, why bother to consider LGES?
    
        \textbf{Answer.} While GES is guaranteed to recover the true Markov equivalence class given infinite samples, it faces two challenges. First, computational tractability: structure learning is an NP-hard problem, and GES commonly struggles to scale in high-dimensional settings. Second, data is often limited in practice, which results in GES failing to recover the true MEC. LGES improves on GES in both of these aspects; it is up to 10 times faster and 2 times more accurate (Experiment \ref{sec:experiments:obs}). Moreover, while GES and LGES can both incorporate prior assumptions to guide the search, LGES is more robust to misspecification in the assumptions  (Experiment \ref{sec:experiments:bgk}).
     \item How well does LGES scale?

     \textbf{Answer.} LGES can scale to graphs with hundreds of variables and is up to 10 times faster than GES (Sec.~\ref{adx:sec:exp:obs}). Both variants of LGES (Safe and Conservative) terminate in less than 20 minutes hours on graphs with 250 variables (Sec.~\ref{adx:sec:exp:obs:large}) and have substantially better accuracy than other baselines (including PC and NoTears).  ~\ref{adx:sec:exp:obs:large}). LGES also outperforms these baselines in settings with dense graphs (Sec.~\ref{adx:sec:exp:obs:er3}).
    \item LGES may be asymptotically correct, but how well does it perform with finite samples?

    \textbf{Answer.} We conduct an extensive empirical analysis of how LGES performs compared with baselines in finite-sample settings. Both variants of LGES (\textsc{SafeInsert} and \textsc{ConservativeInsert}) have substantially better accuracy than GES, PC, and NoTears in experiments with \(n = 10^4\) samples and up to 500 variables (Experiments~\ref{sec:experiments:obs}, ~\ref{adx:sec:exp:obs}). For instance, in graphs with \(150\) variables and \(300\) edges in expectation, LGES with \textsc{ConservativeInsert} only makes \(\approx\)30 structural errors on average, which is twice as accurate as GES, which makes \(\approx\)60 structural errors.
    LGES also outperforms GES in smaller sample settings (\(n \in \{500, 1000\}\)) (Experiment~\ref{adx:sec:exp:obs:small}).

     \item What is the difference between \textsc{SafeInsert} and \textsc{ConservativeInsert}?

    \textbf{Answer.} LGES can use either the \textsc{SafeInsert} or the \textsc{ConservativeInsert} strategy to select \textsc{Insert} operators in the forward phase; both result in improved accuracy and runtime relative to GES. We show that LGES with \textsc{SafeInsert} is asymptotically guaranteed to recover the true MEC (Cor.~\ref{cor:lges}). However, it remains open whether the same is true of LGES with \textsc{ConservativeInsert}, though we provide partial guarantees (Prop.~\ref{adx:prop:consinsert:skel}, ~\ref{adx:prop:consinsert:corr}). Both strategies result in similar runtime, though \textsc{ConservativeInsert} consistently has greater accuracy than \textsc{SafeInsert} across our experiments (Sec.~\ref{sec:experiments}, ~\ref{adx:sec:experiments}). 
    
    \item How is causal discovery with prior knowledge different from initializing a causal discovery task to the hypothesized model? 
    
    \textbf{Answer.} Causal discovery with knowledge is the more general problem of using possibly misspecified prior knowledge in the process of causal discovery to aid the search. Initializing the search to a tentative model is one way to achieve this. However, initialization is not robust to misspecification in the assumptions and can result in worse runtime and accuracy than our approach of guiding the search using prior knowledge (Sec.~\ref{sec:bgk}, Experiment \ref{sec:experiments:bgk}).
    
    \item What is the difference between Greedy Interventional Equivalence Search (GIES) \cite{hauser:2012} and LGES + \(\I\)-\textsc{Orient}? 
    
    \textbf{Answer.} GIES and LGES are both score-based algorithms for learning from a combination of observational and interventional data. However, LGES has a few primary advantages over GIES. 
    First, LGES with \(\textsc{SafeInsert}\) is guaranteed to recover the true interventional MEC (Cor.~\ref{cor:lges}, Thm.~\ref{thm:iorient}) in the sample limit whereas GIES is not \cite{wang:2017}. Second, both variants of LGES are up to 10 times faster than GIES, and LGES with \textsc{ConservativeInsert} has accuracy competitive with GIES (Experiment~\ref{sec:experiments:int}).
    However, LGES uses interventional data to only to orient edges in a learned Markov equivalence class (in the \(\I\)-\textsc{Orient} procedure, Alg.~\ref{alg:iorient}), whereas GIES uses a combination of observational and experimental data throughout. We additionally introduce LGIES, which incoporates less greedy insertion into GIES and is thus both faster and more accurate than GIES (Experiment~\ref{sec:experiments:int}).

    \item Can your work be combined with other causal discovery algorithms?
    
    \textbf{Answer.} Several components of our approach are modular and can be combined with other causal discovery algorithms. For one, other algorithms in the GES family like FGES \cite{ramsey:etal17} and SGES \cite{chickering:20} can be easily modified to use the \textsc{SafeInsert} or \textsc{ConservativeInsert} strategies that we introduce; a simple extension would investigate the resulting changes in accuracy and runtime. For another, the \(\I\)-\textsc{Orient} procedure can be used to refine the observational MEC output by any causal discovery algorithm (not necessarily LGES) using interventional data.
   
\end{enumerate}

\end{document}